\documentclass[aoas,preprint]{imsart}

\RequirePackage{amsthm,amsmath,amsfonts,amssymb}
\RequirePackage[authoryear]{natbib}
\RequirePackage[colorlinks,citecolor=blue,urlcolor=blue]{hyperref}
\RequirePackage{graphicx}
\graphicspath{{./}{figures/}{Real_Application/figures/}{../figures/}}
\RequirePackage{booktabs}
\RequirePackage{rotating}   
\RequirePackage{float}
\RequirePackage{array}

\startlocaldefs
\theoremstyle{plain}
\newtheorem{theorem}{Theorem}[section]
\newtheorem{lemma}[theorem]{Lemma}
\newtheorem{corollary}[theorem]{Corollary}

\theoremstyle{definition}

\newtheorem{remark}[theorem]{Remark}

\endlocaldefs

\begin{document}

\begin{frontmatter}

\title{Non-Crossing Deep Quantile Regression for Distributional Survival Prediction}

\runtitle{Non-Crossing Quantile Regression for Time-to-Event Analysis}

\begin{aug}
\author[A]{\fnms{Shuai}~\snm{Huang}
\ead[label=e1]{shuaishu@email.unc.edu}},
\author[B]{\fnms{Zhe}~\snm{Qu}\thanks{[\textbf{Corresponding author information is provided in the Acknowledgments section.}]}
\ead[label=e2]{zhe.qu@servier.com}},
\author[B]{\fnms{Zhaowei}~\snm{Hua}\ead[label=e3]{zhaowei.hua@servier.com}},
\author[C]{\fnms{Guohao}~\snm{Shen}\ead[label=e4]{guohao.shen@polyu.edu.hk}},
\author[D]{\fnms{Rui}~\snm{Tang}\ead[label=e5]{rui.tang@astellas.com}}
\and
\author[A]{\fnms{Hongtu}~\snm{Zhu}\thanks{[\textbf{Corresponding author information is provided in the Acknowledgments section.}]}
\ead[label=e6]{htzhu@email.unc.edu}}
\address[A]{Department of Biostatistics, University of North Carolina at Chapel Hill
\printead[presep={,\ }]{e1,e6}}

\address[B]{Servier Pharmaceuticals, Boston, MA
\printead[presep={,\ }]{e2,e3}}

\address[C]{The Hong Kong Polytechnic University, Hung Hom, Hong Kong, China
\printead[presep={,\ }]{e4}}

\address[D]{Astellas Pharma US, Northbrook, IL
\printead[presep={,\ }]{e5}}
\end{aug}

\begin{abstract}
In survival analysis the way covariates act on the risk of an event often differs
between early and late failure times, yet hazard- and mean-based summaries collapse
this variation into a single number. Quantile-based modeling instead describes the
full conditional distribution on the original time scale, but existing censored-data
methods are either inflexible or produce logically inconsistent crossing quantile
curves. We propose a Censored Non-crossing Quantile (CNQ) framework for
right-censored data that jointly estimates several conditional survival quantiles and
guarantees valid ordering by construction, with flexibility supplied by
Kolmogorov--Arnold and Transformer backbones, and we establish a finite-sample
excess-risk bound holding jointly across all fitted quantile levels. Across 27
simulation settings and six cohorts the framework attains lower pinball loss than
quantile-, hazard- and tree-based competitors whenever the conditional distribution is
asymmetric, with interval coverage closer to nominal on all six. In two clinical case
studies (METABRIC, breast cancer; FLCHAIN, population mortality) it recovers covariate
effects that vary across the survival distribution and would be hidden by a single
hazard ratio, and yields coherent individualized quantile milestones.
GitHub repo: \url{https://github.com/BIG-S2/deepcnq}.
\end{abstract}

\begin{keyword}
\kwd{Deep learning}
\kwd{KAN}
\kwd{Non-crossing quantile regression}
\kwd{Survival analysis}
\kwd{Time-to-event prediction}
\kwd{Transformer}
\end{keyword}

\end{frontmatter}



\section{Introduction}
\label{s:intro}

Survival analysis forecasts whether and when an event will occur---for example, disease onset informing clinical decisions~\citep{zhao2021event}. Classical survival models~\citep{cox1972regression,wei1992accelerated,martinussen2022causality,kalbfleisch2023fifty,salerno2023high,kalbfleisch2002statistical}, including the Cox proportional hazards (CoxPH) and accelerated failure time (AFT) models, have long served as core tools because of their interpretability and solid theoretical foundations. However, assumptions such as proportional hazards and linear covariate effects limit their ability to represent the patient heterogeneity and interacting prognostic factors common in real data.
To relax these restrictions, Random Survival Forests (RSF)~\citep{ishwaran2008random} extend tree-ensemble methods to censored outcomes and can capture nonlinearities and high-dimensional effects more flexibly than traditional parametric or semiparametric models. Nevertheless, RSF may still generalize poorly in settings with complex structure and high noise~\citep{van2007support}.

More recently, deep learning has further expanded the survival modeling toolbox by replacing hand-specified functional forms with neural networks that learn flexible representations from data. For comprehensive reviews, see~\citep{wiegrebe2024deep, chen2024introduction}. Representative methods include DeepSurv~\citep{katzman2018deepsurv}, which generalizes CoxPH by learning nonlinear covariate effects, and Cox-Time~\citep{kvamme2019time}, which allows time-varying effects and thus accommodates non-proportional hazards. Cox-CC~\citep{kvamme2019time} improves scalability for large datasets, PC-Hazard~\citep{kvamme2021continuous} refines time-dependent hazard estimation, and Survival Kernets~\citep{chen2024survival} provides a scalable and interpretable deep kernel framework with theoretical accuracy guarantees. Interpolation and double descent have also been documented in likelihood-based neural survival models~\citep{liu2025overparametrization}.

Despite these advances, many deep survival models still target the hazard function or the mean survival time, summaries that need not capture heterogeneous risk or the full conditional distribution of event times. Quantile regression~\citep{koenker1978regression} offers a natural alternative: it models the conditional survival quantiles directly, letting covariate effects vary across different parts of the distribution while retaining an interpretation on the original time scale. Pioneering work extended quantile regression to censored data~\citep{powell1986censored, ying1995survival, peng2008survival}, enabling principled inference across a broad range of survival quantiles.

 Existing quantile-based methods, however, face a dilemma. Classical censored quantile regression relies on a linear predictor and therefore inherits the same inability to represent nonlinear, interacting covariate effects that limits the Cox and AFT models. Neural extensions such as DeepQuantreg~\citep{jia2022deep} and CQRNN~\citep{pearce2022censored} restore flexibility, but they typically fit each quantile level with a separate network and impose no ordering across levels, so the estimated quantile curves can cross (for instance, a predicted lower quantile exceeding a higher one), yielding a logically inconsistent conditional distribution. What is missing is a single model that is expressive enough to capture nonlinear covariate effects yet guarantees coherent, non-crossing quantiles.

To address this gap, we propose a Censored Non-crossing Quantile (CNQ)
framework for right-censored data, combining a flexible feature extractor with a
monotone output module. For the former we build on Kolmogorov--Arnold Networks
(KAN)~\citep{liu2024kan}, which represent high-dimensional functions through
compositions of learned univariate transformations, and
Transformers~\citep{vaswani2017attention}, which use self-attention to capture
complex dependencies among covariates; we also introduce a KAN--Transformer
hybrid. The output module jointly estimates multiple survival quantiles and
enforces non-crossing by construction, so the fitted conditional distribution is
internally consistent for every patient.

Our contributions are fourfold. First, we formulate censored survival analysis as
the joint estimation of multiple conditional quantiles within a single
optimization problem, rather than one network per level. Second, we instantiate
this formulation with the three backbones above, capturing
nonlinearities and feature interactions that traditional survival models and prior
MLP-based quantile networks may miss. Third, we establish a non-asymptotic excess-risk bound
that holds jointly across all fitted quantile levels, clarifying how performance
depends on the at-risk probability and the network covering number, and combine it
with architecture-specific approximation constructions to obtain a rate of order
$\{N/\log N\}^{-\beta/(2\beta+p)}$ for the KAN and Transformer sieves over
$\beta$-H\"older conditional quantiles. Finally, beyond benchmark comparisons across
$27$ simulation settings and six cohorts, we analyse two clinical datasets in depth,
recovering covariate effects that vary across the survival distribution,
together with coherent individualized
survival-time quantiles with calibrated uncertainty. That analysis also yields two
findings of independent interest: an apparent event-projection bias under
inverse-probability-of-censoring training, and a caution on the architecture
dependence of feature attributions in spline-based backbones.

\section{Data and Motivation}
\label{s:data}

\subsection{Real Data Descriptions}
\label{s:real}


We evaluate our methods on six real-world right-censored survival datasets that are widely used in the survival analysis literature and in modern deep-survival benchmarks: the Study to Understand Prognoses and Preferences for Outcomes and Risks of Treatments (SUPPORT)~\citep{knaus1995support}, the Molecular Taxonomy of Breast Cancer International Consortium (METABRIC)~\citep{curtis2012genomic}, the German Breast Cancer Study Group dataset (GBSG)~\citep{schumacher1994randomized}, a randomly selected subset of GBSG with $n=500$ subjects (GBSG-500), the Netherlands Cancer Institute 70-gene signature dataset (NKI70)~\citep{van2002gene}, and the Free Light Chain population-mortality cohort (FLCHAIN)~\citep{Dispenzieri2012}. Together, these cohorts cover diverse clinical settings and covariate modalities, with sample sizes ranging from $n=144$ to $n=8{,}873$ and censoring proportions spanning moderate to heavy. This diversity provides a stringent and realistic testbed for distributional survival prediction under typical follow-up constraints.

SUPPORT and FLCHAIN are the two non-oncology cohorts.  SUPPORT studies survival among seriously ill hospitalized adults~\citep{knaus1995support}, with demographic variables, comorbidities (e.g., diabetes, dementia, cancer) and physiological or laboratory measurements (e.g., vital signs, white blood cell count, serum sodium, creatinine); short-term mortality and longer-term survival there are driven by different processes, so the lower and upper survival quantiles carry distinct clinical content.  FLCHAIN~\citep{Dispenzieri2012} is a population-based cohort of $n=7{,}874$ adults from Olmsted County, Minnesota, with all-cause mortality as the outcome and the heaviest censoring in our study ($\approx 72\%$).  Its seven covariates are demographic and laboratory-based, with no treatment variables: age, sex, serum $\kappa$ and $\lambda$ free light chain concentrations, creatinine, monoclonal gammopathy (MGUS) status, and a creatinine missingness indicator.  It tests whether distributional predictions stay calibrated when most event times are unobserved.

The remaining four are breast cancer studies, chosen to form a gradient in sample size, censoring severity and covariate type.  METABRIC~\citep{curtis2012genomic} combines gene expression markers (e.g., MKI67, ESR1, PGR, ERBB2) with clinical and treatment covariates (e.g., chemotherapy, radiotherapy, hormone therapy, ER status, age), giving the complex, potentially non-additive predictor--outcome relationships that make multi-quantile estimation informative.  GBSG~\citep{schumacher1994randomized} is a classical trial-based benchmark in node-positive breast cancer; GBSG-500 randomly subsets it to 500 subjects while preserving covariate and censoring structure, so the pair isolates the effect of sample size on tail quantiles.  NKI70~\citep{van2002gene} is the most constrained setting---144 subjects, $67\%$ censoring and high-dimensional gene-signature covariates---typical of genomic prognostic studies.

Across all datasets, covariate selection follows the established benchmarking protocol in \citet{katzman2018deepsurv} to ensure fair comparisons with common deep-survival baselines and consistency with prior work. Table~\ref{tab:dataset_summary} summarizes the sample size, number of covariates, and censoring proportion for each dataset. We also report Kaplan--Meier curves with confidence intervals in the Supplementary Figure S1 to provide cohort-level context on survival patterns and censoring profiles.

\subsection{Motivation: Why Distributional (Quantile) Survival Modeling?}
\label{s:data_motivation}

Time-to-event modeling in these real datasets is complicated by three pervasive features: (i)~\emph{right censoring}; (ii)~\emph{heterogeneity}, in that prognostic factors may act differently on different parts of the survival-time distribution; and (iii)~\emph{nonlinearity and high-dimensional interactions} arising from biological mechanisms, treatment pathways and patient subtypes. These characteristics motivate a distributional perspective beyond hazard-only or mean-based summaries.

Knowing whether a patient's overall risk is higher or lower is often not enough; clinicians also need to know \emph{when} events are likely to occur, and how both tails of the survival distribution vary across patients. For example, in critical care (SUPPORT), accurately characterizing a patient's short-term mortality risk alongside the prospect of longer-term survival informs qualitatively different care decisions. In oncology (METABRIC, GBSG, NKI70), the distinction between early relapse and durable remission is central to treatment planning and patient counseling. Moreover, the clinical drivers of these outcomes may differ across the survival distribution: a biomarker or treatment that strongly reduces early relapse risk may have little bearing on long-term prognosis.

Quantile regression is the natural response to this heterogeneity \citep{koenker1978regression,portnoy2003censored, peng2021quantile}, and it suits censoring particularly well: under bounded follow-up the mean survival time may be non-identifiable, whereas survival quantiles remain identifiable over clinically relevant ranges \citep{peng2021quantile}. Estimating several quantiles flexibly, however, reintroduces the \emph{crossing} problem of Section~\ref{s:intro}, whose logical inconsistency makes the output unusable for treatment planning.

Motivated by these observations, we aim to address the following questions:
\begin{enumerate}
  \item[\textbf{(Q1)}] \textbf{Individualized survival-time quantiles clinicians can act on.}
  For an individual patient, can a small set of survival-time quantiles be estimated coherently and remain well calibrated (for example, a plausible earliest-decline time and a durable-survival horizon), and does the spread between them flag which patients carry the greatest prognostic uncertainty and therefore warrant closer monitoring?
  \item[\textbf{(Q2)}] \textbf{Which markers govern early relapse versus long-term survival, and does this refine counseling?}
  In breast-cancer cohorts, do specific molecular markers and treatments (e.g., estrogen-receptor status, proliferation markers, hormone therapy) act primarily on the risk of early events or on the long-term survival ceiling, and would recognizing such tail-specific effects (which a single hazard ratio averages away) change how patients are risk-stratified and counseled?
  \item[\textbf{(Q3)}] \textbf{Do these quantile estimates stay trustworthy in the small, heavily censored cohorts typical of oncology?}
  Genomic prognostic studies are often small and heavily censored (e.g., NKI70, with $n=144$ and $67\%$ censoring). Do the individualized quantile estimates remain calibrated and precise enough to inform decisions in such settings, or must their individual-level use be down-weighted relative to larger cohorts?
\end{enumerate}
These questions drive the methodological development presented in Section~\ref{sec:method} and the empirical evaluation in Sections~\ref{sec:experiments}--\ref{sec:realdata}.


\section{Methodology}
\label{sec:method}

\subsection{Existing Work}

Quantile regression~\citep{koenker1978regression} characterizes the conditional $\tau$-th quantile of the event time $T$ given covariates $X$, defined as
\begin{equation}\label{eq:linear_quantreg}
Q_{T\mid X}(\tau)=\inf\bigl\{t:\Pr(T\le t\mid X)\ge \tau\bigr\}.
\end{equation}
A common specification is the log-linear quantile regression model,
$
\log\{Q_{T\mid X}(\tau)\}=\beta_\tau^\top X,$
where $\beta_\tau$ is a vector of unknown parameters.

In the presence of right censoring, for subject $i=1,\ldots,N$, let $T_i$ and $C_i$ denote the event time and censoring time, and we observe $Y_i=\min(T_i,C_i)$ and the event indicator $\delta_i=\mathbb{I}(T_i\le C_i)$. We assume $\{(Y_i,\delta_i,X_i)\}_{i=1}^N$ are i.i.d., where $X_i=(X_{i1},\ldots,X_{ip})^\top\in\mathbb{R}^p$ denotes the covariate vector. A censored quantile regression estimator $\widehat{\beta}_\tau\in\mathbb{R}^p$ can be obtained by minimizing an inverse-probability-weighted check loss:

\begin{equation}\label{eq:cqr_ipw}
\widehat{\beta}_\tau
=\arg\min_{\beta}\sum_{i=1}^N \omega_i\,\rho_\tau\!\bigl(\log(Y_i)-\beta^\top X_i\bigr),
\end{equation}
where $\rho_\tau(a)=a\{\tau-\mathbf{1}(a<0)\}$ is the check loss and $\omega_i=\delta_i/\{N\,\widehat{G}(Y_i^-)\}$ are weights based on $\widehat{G}(t)$, the Kaplan--Meier estimator of $G(t)=\Pr(C>t)$. Consistency of $\widehat{\beta}_\tau$ (e.g., for median regression) was established in~\citet{huang2007least}.

DeepQuantreg~\citep{jia2022deep} generalizes this framework by replacing the linear predictor $\beta_\tau^\top X$ with a flexible nonlinear function learned by a deep neural network:
\[
\log\{Q_{T\mid X}(\tau)\}=f_{\theta_\tau}(X),
\]
where $f_{\theta_\tau}(\cdot)$ denotes a network with parameters $\theta_\tau$. The parameters are estimated by minimizing the corresponding weighted check loss,
\[
L_N(\theta_\tau)=\sum_{i=1}^N \omega_i\,\rho_\tau\!\bigl(\log(Y_i)-f_{\theta_\tau}(X_i)\bigr).
\]
In \citet{jia2022deep} the check function is Huber-smoothed \citep{huber1973robust} to ensure differentiability; since the smoothing alters $\rho_\tau$ only for residuals within a small bandwidth $\xi$ of the origin, we work with the exact check loss $\rho_\tau$ throughout.
This approach aims to capture complex nonlinear relationships between covariates and event times under right censoring. However, DeepQuantreg uses a relatively simple MLP with two hidden layers, which may be insufficient for highly structured or high-dimensional covariates. In addition, it fits a separate network for each quantile level $\tau$, increasing computational cost and potentially leading to quantile crossing across different $\tau$.

\paragraph{What is new relative to CQRNN and DeepQuantreg}
Our proposed CNQ framework departs from these methods in three ways that are tied to the data features described in Section~\ref{s:data}. First, whereas DeepQuantreg fits a separate MLP per quantile and CQRNN optimizes a shared quantile grid without an ordering constraint, CNQ estimates all quantiles jointly and enforces non-crossing \emph{by construction}, which is what makes the coherent individualized milestones sought in (Q1) well defined. The non-crossing construction itself builds on the deep non-crossing quantile networks of \citet{wu2023dnet} and \citet{shen2025deep}; our contribution is to adapt it to right-censored survival data through an inverse-probability-of-censoring-weighted objective and to combine it with the feature-learning backbones introduced below. Second, the heterogeneous, non-additive clinical-plus-molecular covariates in cohorts such as METABRIC motivate replacing the shallow MLP backbones of both baselines with feature-learning architectures that can represent interactions a two-layer MLP misses: attention across covariates (Trans-CNQ) and learnable spline transforms (KAN). Third, Section~\ref{sec:theory} establishes a finite-sample excess-risk bound for the censored non-crossing estimator, holding jointly across all fitted quantile levels and covering the complete Trans-CNQ architecture; neither baseline provides a corresponding guarantee.  The choice of backbone is supporting rather than central: the primary contribution is the demonstration, on real cohorts, that a coherent multi-quantile formulation delivers calibrated, clinically interpretable predictions and exposes quantile-specific covariate effects that hazard-based summaries obscure.

\subsection{Censored Non-Crossing Quantile (CNQ) Framework}
\label{sec:cnq}

We propose a \textbf{Censored Non-Crossing Quantile (CNQ)} framework for right-censored survival data that jointly estimates multiple conditional quantiles through a single optimization problem while guaranteeing that the predicted quantiles are properly ordered (non-crossing). The overall architecture is illustrated in Figure~\ref{Model_structure}.

\begin{figure}[htp]
	{\includegraphics[width=.85\textwidth]{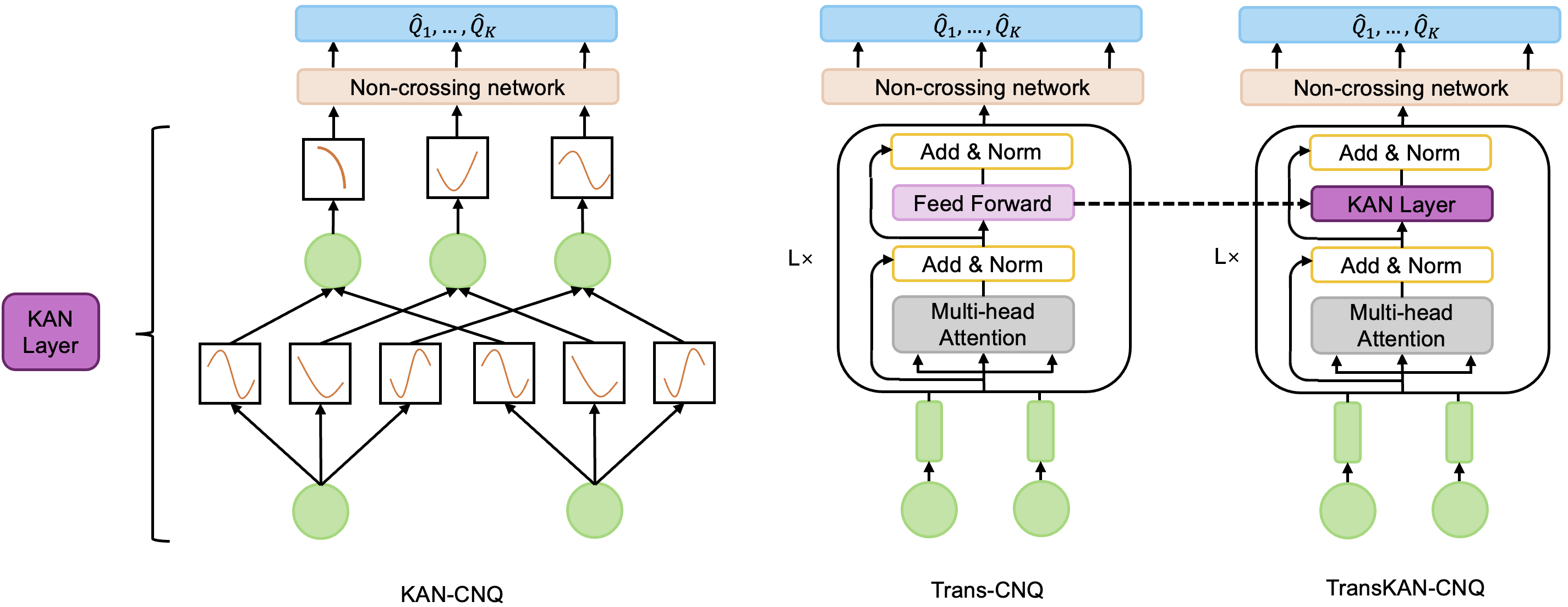}}
	\caption{The architectures of KAN-CNQ, Trans-CNQ, and TransKAN-CNQ.}
	\label{Model_structure}
\end{figure}

Let $\tau_1<\cdots<\tau_K$ denote the target quantile levels. For $N$ i.i.d.\ subjects with observations $\{(Y_i,\delta_i,X_i)\}_{i=1}^N$, we define a vector-valued quantile network
\[
f(X;\theta)=\bigl(f_1(X; \theta),\ldots,f_K(X;\theta)\bigr)^\top,
\]
where $f_k(X;\theta)$ estimates the $\tau_k$-th conditional quantile of $\log(T)$ given $X$.
The CNQ objective is the average of inverse-probability-weighted check losses across both subjects and quantile levels:
\begin{equation}\label{eq:cnq_loss}
L_N(f)=\frac{1}{K}\sum_{k=1}^K\sum_{i=1}^N \omega_i\,\rho_{\tau_k}\!\bigl(\log(Y_i)-f_k(X_i;\theta)\bigr),
\end{equation}
where $\rho_{\tau}(a)=a\{\tau-\mathbf{1}(a<0)\}$ is the check loss and $\omega_i$s' are inverse-probability-of-censoring weights (e.g., inverse Kaplan--Meier weights).

\subsubsection*{Non-Crossing Output Module}
A key difficulty in multi-quantile modeling is \textbf{quantile crossing}: when quantiles are trained independently, estimated lower quantiles can exceed higher ones (e.g., $\widehat{Q}_{T\mid X}(0.3)>\widehat{Q}_{T\mid X}(0.7)$), yielding an invalid conditional distribution. To prevent this, CNQ adopts a structurally constrained parameterization that builds on the non-crossing quantile networks of \citet{wu2023dnet} and \citet{shen2025deep} and enforces
\[
f_1(X;\theta)\le f_2(X;\theta)\le \cdots \le f_K(X;\theta)\quad \text{for all } X,
\]
thereby guaranteeing monotonicity across quantile levels by construction.

Let $Z = g(X;\theta_g) \in \mathbb{R}^{d_{\mathrm{out}}}$ denote the output of the feature-extraction backbone, where $d_{\mathrm{out}}$ is the backbone's output dimension and $g$ is one of the three architectures described in the \nameref{sssec:model_arch} paragraph below. This representation is fed into a shared non-crossing output module consisting of two components. All outputs of this module are first defined on the log-time scale. The \textbf{base network} $h(X;\theta)\in\mathbb{R}$ outputs the logarithm of the $\tau_1$-quantile estimate, and the \textbf{steps network} $s(X;\theta)\in\mathbb{R}^{K-1}$ produces unconstrained increments encoding the gaps between consecutive log-time quantiles. Specifically, two separate linear heads produce the base and step outputs:
\begin{equation}\label{eq:heads}
  h(X;\theta)=W_b\,Z+b_b\in\mathbb{R},\qquad
  s(X;\theta)=W_s\,Z+b_s\in\mathbb{R}^{K-1},
\end{equation}
where $W_b \in \mathbb{R}^{1 \times d_{\mathrm{out}}}$, $W_s \in \mathbb{R}^{(K-1) \times d_{\mathrm{out}}}$, and $b_b, b_s$ are bias terms. Since the raw increments $s(X;\theta)$ may take negative values, we apply the softplus function $\sigma^+(u)=\log(1+e^u)$ to map them to nonnegative values. Defining $d(X)=\sigma^+\!\bigl(s(X;\theta)\bigr)\in\mathbb{R}_+^{K-1}$, we construct the quantile outputs via cumulative summation:
\begin{equation}\label{eq:noncrossing_output}
  \widehat{V}_1(X)=h(X;\theta),\qquad
  \widehat{V}_k(X)=h(X;\theta)+\sum_{j=1}^{k-1}d_j(X),
  \quad k=2,\ldots,K.
\end{equation}
The corresponding quantiles on the original time scale are
\(\widehat Q_k(X)=\exp\{\widehat V_k(X)\}\).
Because each increment $d_j(X)\ge 0$, the resulting log-time and original-time quantile estimates are automatically ordered without post-hoc rearrangement.

\subsubsection*{Model Architectures}
\phantomsection\label{sssec:model_arch}
We instantiate the CNQ framework with three independent feature-extraction backbones: KAN, Transformer, and a novel KAN--Transformer hybrid. Each model is trained separately end-to-end; they differ only in the feature-extraction component, while all adopting the same non-crossing output architecture described above, as illustrated in Figure~\ref{Model_structure}. This modular design enables direct comparison of the core feature-extraction components while ensuring valid quantile ordering across all models.

\textbf{KAN-CNQ.} Kolmogorov--Arnold Networks, introduced by \citet{liu2024kan}, are motivated by the Kolmogorov--Arnold representation theorem~\citep{kolmogorov1957representation}, which states that any multivariate continuous function can be decomposed into compositions of univariate functions. Unlike MLPs, which apply fixed activation functions at nodes, KANs place learnable univariate functions on the edges connecting nodes. Specifically, let $n_l$ denote the number of nodes in the $l$-th layer and let $X_{l,i}$ represent the $i$-th node in that layer. The output of the $j$-th node in the $(l+1)$-th layer is computed as
$
X_{l+1,j} = \sum_{i=1}^{n_l} \varphi_{l,j,i}(X_{l,i})$ for $ \quad j=1, \dots, n_{l+1},$
where each edge activation $\varphi_{l,j,i}$ combines a fixed basis function with a learnable spline component
as $
\varphi(x) = w_b \, b(x) + w_s \sum_{m=1}^{M} \alpha_m B_m(x).
$
Here $b(x)$ is a basis function (e.g., SiLU), $\{B_m\}_{m=1}^{M}$ are B-spline basis functions, and $\alpha_m$ are learnable coefficients. By stacking multiple KAN layers, the model builds hierarchical representations of the input covariates. The final KAN layer produces the representation $Z \in \mathbb{R}^{d_{\mathrm{out}}}$, which is fed into the two linear heads in~\eqref{eq:heads} to produce the quantile estimates via the non-crossing output module~\eqref{eq:noncrossing_output}.

\textbf{Trans-CNQ.} The Transformer architecture~\citep{vaswani2017attention} captures complex dependencies among input features through self-attention. While originally designed for sequential data, the Transformer encoder can be adapted to tabular settings by computing attention across features rather than temporal positions, following the approach of TabTransformer~\citep{huang2020tabtransformer}.

We embed every scalar feature with the same affine map,
$\widetilde{X}_{ij}=\psi(X_{ij})\in\mathbb R^q$; thus the implementation uses a shared scalar-token embedding rather than $p$ feature-specific layers.

This transforms the input $X_i \in \mathbb{R}^p$ into an embedding matrix $\widetilde{X}_i \in \mathbb{R}^{p \times q}$. A fixed sinusoidal positional encoding \(P\in\mathbb R^{p\times q}\) is added to this matrix, followed by an initial affine LayerNorm, giving \(Z_{i0}=\mathrm{LN}_0(\widetilde X_i+P)\). Self-attention is then applied independently to each sample as
\begin{equation*}
\mathrm{Attention}(Q_i, K_i, V_i) = \mathrm{softmax}\!\left( \frac{Q_i K_i^\top}{\sqrt{d_k}} \right)V_i,
\end{equation*}
where $Q_i = Z_{i0} W^Q$, $K_i = Z_{i0} W^K$, and $V_i = Z_{i0} W^V$ are affine projections, with biases suppressed in the notation, and $d_k$ is the key dimensionality. We stack $L$ Transformer encoder layers, each consisting of multi-head self-attention and feedforward sub-layers. Each layer uses post-LayerNorm residual blocks, with an Add--LayerNorm operation after attention and another after the two-layer ReLU feedforward sub-layer. The final token matrix is flattened and passed through an affine map and ReLU to obtain the representation $Z \in \mathbb{R}^{d_{\mathrm{out}}}$, which is fed into the two linear heads in~\eqref{eq:heads} to produce the quantile estimates via the non-crossing output module~\eqref{eq:noncrossing_output}. In the fitted model $d_{\mathrm{out}}=q$; the approximation sieve below allows this flattened-readout width to be tuned independently while retaining the same encoder family.

\textbf{TransKAN-CNQ.}
We further propose a hybrid architecture that integrates the strengths of both Transformer and KAN. Specifically, we replace the standard MLP-based feedforward sub-layers within the Transformer encoder blocks with KAN layers. This substitution retains the Transformer's capacity for modeling pairwise feature interactions through self-attention, while leveraging KAN's learnable spline-based activations for more expressive nonlinear transformations within each encoder block.

The hybrid encoder consists of $L$ modified Transformer layers, each comprising a multi-head self-attention sub-layer followed by a KAN-based feedforward sub-layer. The resulting representation $Z \in \mathbb{R}^{d_{\mathrm{out}}}$ is fed into the two linear heads in~\eqref{eq:heads} to produce the quantile estimates via the non-crossing output module~\eqref{eq:noncrossing_output}, ensuring monotonicity in the predicted quantiles. This design allows TransKAN-CNQ to capture both inter-feature dependencies (via attention) and complex univariate nonlinearities (via KAN) within a unified architecture.

\section{Theoretical Guarantees}
\label{sec:theory}

We formulate the result directly in the empirical Euclidean metric used by
\citet{zhang2024generalization}.  This matches the available KAN covering
bound and treats all \(K\) log-time quantile outputs jointly.

Let \(V=\log T\), \(W=\log Y\), \(Y=\min(T,C)\), and assume \(T,C>0\)
almost surely.  Assume also that \(\mathbb E|\log T|<\infty\).
Let \(\delta=\mathbb I\{T\leq C\}\).  We make the following
assumptions.
\begin{itemize}
    \item[(A1)] \((T,X)\perp C\).  Write
    \(G(t)=\mathbb P(C>t)\) and \(G(t^-)=\mathbb P(C\geq t)\).
    \item[(A2)] There is a deterministic set
    \(\mathcal T\subset(0,\infty)\) such that
    \(\mathbb P(T\in\mathcal T)=1\) and
    \[
        G_\star:=\inf_{t\in\mathcal T}G(t^-)>0.
    \]
    The Kaplan--Meier estimator \(\widehat G\) obeys, for the values of
    \(\eta\) under consideration,
    \[
      \mathbb P\!\left\{
      \Delta_N:=\sup_{t\in\mathcal T}
      |\widehat G(t^-)-G(t^-)|>\kappa_N(\eta)\right\}
      \leq c_Ge^{-\eta},\qquad
      \kappa_N(\eta)\leq G_\star/2.
    \]
\end{itemize}

Both assumptions are dictated by the estimator actually used.  (A1) makes the
censoring time independent of the covariates as well as of the event time, which
is what validates the marginal Kaplan--Meier weight \(\widehat G(Y_i^-)\) of
Section~\ref{sec:method}, used for every evaluation metric in
Section~\ref{sec:metrics}.  (A2) strengthens the endpoint condition
\(\tau_T<\tau_C\) to positivity of the censoring survival over the support of
\(T\), excluding the unbounded-support case \(\tau_T=\tau_C=\infty\).  Its
second display isolates the only property of \(\widehat G\) needed below; the
DKW--Kaplan--Meier inequality of \citet{bitouze1999dkw} gives
\(\kappa_N(\eta)=O\{(D_o+\sqrt{\eta})/\sqrt N\}\) under the fixed-horizon
conditions of \citet{goldberg2019hoeffding}, which do not verify (A2) over a
full continuous support with vanishing at-risk probability; no such
verification is claimed here.

For a non-crossing vector-valued function
\(f=(f_1,\ldots,f_K)\in\mathcal F_N\), suppose
\(\max_k\|f_k\|_\infty\leq M_f\), and define the log-time risk
\begin{equation}\label{excess_risk}
 L(f)=\mathbb E\!\left[\frac1K\sum_{k=1}^K
 \rho_{\tau_k}\{V-f_k(X)\}\right],\qquad
 R_{\mathcal F_N}(f)=L(f)-L(f_N^\star),
\end{equation}
where \(f_N^\star\in\arg\min_{f\in\mathcal F_N}L(f)\).  Its empirical
IPCW counterpart is
\[
 \widehat L_N(f)=\frac1{NK}\sum_{i=1}^N\sum_{k=1}^K
 \frac{\delta_i}{\widehat G(Y_i^-)}
 \rho_{\tau_k}\{W_i-f_k(X_i)\},
 \qquad
 \widehat f_N\in\arg\min_{f\in\mathcal F_N}\widehat L_N(f).
\]
By the IPCW identity proved in the Supplementary Material,
replacing \(\widehat G\) by \(G\) makes this
an unbiased empirical version of \(L\).

For a realized design \(\mathbf X=(X_1,\ldots,X_N)\), let
\[
 \mathcal F_N(\mathbf X)
 =\{(f_k(X_i))_{i\leq N,k\leq K}:f\in\mathcal F_N\}
 \subset\mathbb R^{N\times K}.
\]
Assume that a deterministic \(A_N>0\) satisfies, almost surely in
\(\mathbf X\), for every \(u>0\),
\begin{equation}\label{eq:empirical_entropy}
 \log\mathcal N\{\mathcal F_N(\mathbf X),\|\cdot\|_F,u\}
 \leq \frac{A_N}{u^2}.
\end{equation}

\begin{theorem}[Estimation error under empirical \(L_2\) entropy]
\label{Risk_error}
Under (A1)--(A2) and~\eqref{eq:empirical_entropy}, there are universal
constants \(C,c>0\) such that, for every
\(\eta>0\) satisfying (A2), with probability at least
\(1-(c+c_G)e^{-\eta}\),
\begin{align}
 R_{\mathcal F_N}(\widehat f_N)
 \leq C\bigg[&
 \frac{\sqrt{A_N}}{G_\star\sqrt K\,N}
 \left\{1+\log_+\!\left(
 \frac{M_f\sqrt K\,N}{\sqrt{A_N}}\right)\right\}\notag\\
 &+
 \frac{M_f}{G_\star}
 \left\{\sqrt{\frac{\eta+1}{N}}+\frac{\eta+1}{N}\right\}
 +
 \frac{M_f\kappa_N(\eta)}{G_\star^2}
 \bigg],                                            \label{eq:risk_bound}
\end{align}
where \(\log_+(x)=\max\{\log x,0\}\).
\end{theorem}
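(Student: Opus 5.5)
We follow the standard ERM route: reduce the excess risk to a uniform deviation between the empirical and population risks, then split that deviation into an oracle IPCW process and a Kaplan--Meier plug-in error. Let \(\widetilde L_N\) be \(\widehat L_N\) with \(\widehat G\) replaced by the true \(G\). By the IPCW identity under (A1), \(\mathbb E\widetilde L_N(f)=L(f)\) for every fixed \(f\). Since \(\widehat f_N\) minimizes \(\widehat L_N\) over \(\mathcal F_N\), we have \(\widehat L_N(\widehat f_N)\le \widehat L_N(f_N^\star)\). Hence
\[
R_{\mathcal F_N}(\widehat f_N)\le \{L-\widetilde L_N\}(\widehat f_N)-\{L-\widetilde L_N\}(f_N^\star)+2\sup_{f\in\mathcal F_N}|\widehat L_N(f)-\widetilde L_N(f)|.
\]
The first two terms form an oracle empirical process over the class of centred losses \(\ell_f-\ell_{f_N^\star}\). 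The last term carries the censoring-estimation error.

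\textbf{Plug-in term.} On the event \(\{\Delta_N\le\kappa_N(\eta)\}\), which has probability at least \(1-c_Ge^{-\eta}\), any \(i\) with \(\delta_i=1\) has \(Y_i=T_i\in\mathcal T\) almost surely. Using \(\kappa_N\le G_\star/2\), this gives \(\widehat G(Y_i^-)\ge G_\star/2\), and therefore
\[
|1/\widehat G-1/G|\le 2\kappa_N/G_\star^2 .
\]
The check loss is not bounded by \(M_f\) alone, since \(|W_i|\) can be large. So we centre it first: \(\rho_\tau(W-f_k)-\rho_\tau(W-f^\star_{N,k})\) is bounded by \(|f_k-f^\star_{N,k}|\le 2M_f\) because \(\rho_\tau\) is 1-Lipschitz. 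The same centring is applied to the plug-in difference: we compare \(\widehat L_N(f)-\widehat L_N(f_N^\star)\) with \(\widetilde L_N(f)-\widetilde L_N(f_N^\star)\) rather than the raw risks. This yields the term \(CM_f\kappa_N(\eta)/G_\star^2\).

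\textbf{Oracle process.} Each summand \(\frac{\delta_i}{G(Y_i^-)}\cdot\frac1K\sum_k[\rho_{\tau_k}(W_i-f_k)-\rho_{\tau_k}(W_i-f^\star_{N,k})]\) is bounded by \(2M_f/G_\star\). The concentration step is Talagrand/Bousquet, or simply McDiarmid, applied to \(\sup_f|P_N-P|\) of this class. This produces the term \((M_f/G_\star)\{\sqrt{(\eta+1)/N}+(\eta+1)/N\}\) with probability at least \(1-ce^{-\eta}\).

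\textbf{Expected supremum.} We symmetrize and then use the Ledoux--Talagrand contraction. For each \((i,k)\) the map \(v\mapsto \frac{\delta_i}{KG(Y_i^-)}\rho_{\tau_k}(W_i-v)\) is \((KG_\star)^{-1}\)-Lipschitz. A vector-valued contraction (Maurer's inequality) lets us handle the \(N\times K\) array through independent Rademachers \(\varepsilon_{ik}\). This reduces the problem to the Rademacher complexity
\[
\frac{1}{NKG_\star}\,\mathbb E\sup_{f}\sum_{i,k}\varepsilon_{ik}f_k(X_i),
\]
a Gaussian-type process indexed by \(\mathcal F_N(\mathbf X)\) in the Frobenius metric. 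Its diameter is at most \(2M_f\sqrt{NK}\).

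\textbf{Main obstacle: Dudley integral.} Dudley's truncated chaining with the entropy \eqref{eq:empirical_entropy} gives
\[
\inf_{\alpha>0}\Big\{\alpha\sqrt{NK}\cdot(\ldots)+\int_\alpha^{2M_f\sqrt{NK}}\sqrt{A_N}/u\,du\Big\}.
\]
Here the \(u^{-2}\) entropy makes the integral logarithmic: \(\sqrt{A_N}\log(2M_f\sqrt{NK}/\alpha)\). Choosing \(\alpha\asymp\sqrt{A_N}/\sqrt{NK}\) balances the discretisation term. The result, after dividing by \(NK G_\star\) and tracking the \(K\)-normalisation, should be
\[
\frac{\sqrt{A_N}}{G_\star\sqrt K N}\{1+\log_+(M_f\sqrt K N/\sqrt{A_N})\}.
\]
Getting the \(\sqrt K\) and \(N\) exponents exactly right is the delicate bookkeeping. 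The normalisation must be matched carefully to how \(\|\cdot\|_F\) scales with \(\sqrt{NK}\), and in the worst case this may force a slightly different choice of \(\alpha\). The \(\log_+\) covers the regime where the diameter is below \(\alpha\).

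\textbf{Conclusion.} A union bound over the KM event and the concentration event gives the stated probability \(1-(c+c_G)e^{-\eta}\).
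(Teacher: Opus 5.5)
Your overall structure matches the paper's proof: the centred ERM decomposition, the plug-in bound on the Kaplan--Meier event via \(\widehat G(Y_i^-)\ge G_\star/2\), bounded-difference concentration, symmetrization, and a truncated Dudley integral. The gap is at the step you label ``delicate bookkeeping''. It is more than bookkeeping, because the contraction route you chose does not give the stated \(K\)-dependence with universal constants.

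First, Maurer's vector-contraction inequality needs the Euclidean Lipschitz constant of the whole map \(v\mapsto \frac{\delta_i}{G(Y_i^-)}\frac1K\sum_k\rho_{\tau_k}(W_i-v_k)\) on \(\mathbb R^K\). That constant is \((\sqrt K\,G_\star)^{-1}\), not the per-coordinate \((KG_\star)^{-1}\). To see that the per-coordinate version is false by a factor \(\sqrt K\), take all \(v_k\) equal. So the correct prefactor is \(\sqrt2/(NG_\star\sqrt K)\), not \(1/(NKG_\star)\). Your display therefore contradicts the final rate you assert.

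Second, even with the correct prefactor, the lifted process \(\sum_{i,k}\varepsilon_{ik}v_{ik}\) has truncation cost \(\alpha\,\mathbb E\|\varepsilon\|_F=\alpha\sqrt{NK}\). The optimal choice is then \(\alpha\asymp\sqrt{A_N/(NK)}\), and Dudley gives \(\frac{\sqrt{A_N}}{G_\star\sqrt K\,N}\{1+\log(M_fKN/\sqrt{A_N})\}\). This has an extra \(\tfrac12\log K\) inside the braces. It can be absorbed only into a \(K\)-dependent constant, not a universal \(C\).

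The paper avoids contraction altogether. On the realized sample, Cauchy--Schwarz over the \(K\) outputs gives the deterministic bound
\(\{\sum_i|h_f(O_i)-h_g(O_i)|^2\}^{1/2}\le (G_\star\sqrt K)^{-1}\|(f_k(X_i)-g_k(X_i))_{i,k}\|_F\).
So the \(N\)-dimensional loss vectors directly inherit \(\log\mathcal N\le A_N/(KG_\star^2u^2)\).

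Dudley is then run on this class in \(\mathbb R^N\), keeping one Rademacher sign per subject, so discretization is paid in the loss norm rather than against \(\sqrt{NK}\). Use diameter \(2M_f\sqrt N/G_\star\) and truncation \(a_0=\sqrt{A_N}/(G_\star\sqrt{KN})\). This gives exactly \(\frac{\sqrt{A_N}}{G_\star\sqrt K\,N}\{1+\log_+(M_f\sqrt K\,N/\sqrt{A_N})\}\). When \(a_0\) exceeds the diameter, the trivial bound \(2M_f/G_\star\) is already dominated by this term.

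If you replace your contraction step with this covering-number transfer, the rest of your argument goes through as written.
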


The proof uses Cauchy--Schwarz to transfer the empirical Frobenius metric
to the true-weight IPCW loss class, followed by symmetrization, a truncated
Dudley entropy integral, bounded empirical-process concentration, and the
deterministic plug-in comparison on the event in (A2).  It is given in the
Supplementary Material.

Writing \(q=(q_1,\ldots,q_K)\) for the true conditional quantiles of \(V\)
and \(\mathcal A_N=\inf_{f\in\mathcal F_N}\{L(f)-L(q)\}\), the total excess
risk decomposes exactly as \(L(\widehat f_N)-L(q)=R_{\mathcal F_N}(\widehat
f_N)+\mathcal A_N\), so Theorem~\ref{Risk_error} controls the estimation term
and is enough on its own when \(q\in\mathcal F_N\).  The two results below
instead bound \(\mathcal A_N\) for explicit growing KAN and Trans-CNQ sieves
and balance it against the estimation term, without assuming that a fixed
fitted architecture contains \(q\).  Each combines Theorem~\ref{Risk_error}
with its own approximation construction; the Lipschitz constant of the
raw-to-non-crossing map \(\Phi\) in \eqref{eq:noncrossing_output}, the
complexity-indexed instantiations for fixed architectures and norm budgets
(both giving an estimation rate \(O_{\mathbb P}(\log N/\sqrt N)\)), the
supporting lemmas and the complete proofs are given in Supplementary
Section~4.

\begin{theorem}[Excess-risk convergence rate for the KAN-CNQ estimator]
\label{thm:kan-holder-rate}
Suppose \(X\in[0,1]^p\), each true conditional log-time quantile is in a
bounded \(C^\beta\) ball, and adjacent quantiles are separated uniformly
by a positive constant.  Consider the fixed-grid, two-layer spline-KAN
sieve constructed in the Supplement: its first-layer spline degree is at
least two with \(\beta\leq m+1\), its second-layer dictionary contains
\(t\mapsto t^p\), all effective edge coefficients are bounded, and the
class has a fixed output envelope.  For spline resolution \(J\), this full
class has \(P_J=O(J^p)\) free coefficients and
\begin{equation}\label{eq:kan-total-main}
 L(\widehat f_{N,J})-L(q)
 =O_{\mathbb P}\left\{
 J^{-\beta}+\sqrt{\frac{J^p\log(2J)}{N}}+\frac1{\sqrt N}\right\}.
\end{equation}
Consequently, \(J_N\asymp\{N/\log N\}^{1/(2\beta+p)}\) gives
\[
 L(\widehat f_{N,J_N})-L(q)
 =O_{\mathbb P}\left[
 \left\{\frac N{\log N}\right\}^{-\beta/(2\beta+p)}\right].
\]
\end{theorem}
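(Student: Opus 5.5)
We split the total excess risk exactly as in the decomposition following Theorem~\ref{Risk_error}:
\[
L(\widehat f_{N,J})-L(q)=R_{\mathcal F_{N,J}}(\widehat f_{N,J})+\mathcal A_{N,J},
\]
and bound the two pieces separately. The estimation term will come from Theorem~\ref{Risk_error} once we verify the entropy condition~\eqref{eq:empirical_entropy} for the spline-KAN sieve. The approximation term will come from an explicit construction of a member of the sieve close to $q$ in sup-norm.

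\textbf{Approximation.} The check loss is $1$-Lipschitz, so for any $f$,
\[
L(f)-L(q)\le \max_k\|f_k-q_k\|_\infty .
\]
It therefore suffices to produce a non-crossing $f\in\mathcal F_{N,J}$ with $\max_k\|f_k-q_k\|_\infty=O(J^{-\beta})$. We work in raw coordinates: set $r_1=q_1$ and $r_k=q_k-q_{k-1}\ge c_0>0$ for $k\ge2$. Each $r_k$ lies in a bounded $C^\beta$ ball.

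Standard tensor-product spline approximation on $[0,1]^p$ at resolution $J$, with degree $m\ge\beta-1$, gives approximants $\tilde r_k$ with error $O(J^{-\beta})$ and bounded coefficients. The separation $c_0$ matters here: for $J$ large, $\tilde r_k\ge c_0/2$, so the inverse softplus $s_k=(\sigma^+)^{-1}(\tilde r_k)$ is well defined and smooth with bounded derivatives. Because $\Phi$ in~\eqref{eq:noncrossing_output} is Lipschitz with constant at most $K$, composing with $(\sigma^+)^{-1}$ preserves the $O(J^{-\beta})$ rate. Concretely, we approximate $(\sigma^+)^{-1}\circ r_k$, which is again $C^\beta$, directly by splines.

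The remaining step, and the one I expect to be the \textbf{main obstacle}, is realising a tensor-product spline with a two-layer KAN, whose layers are sums of univariate edge functions. The plan is to use the first layer to form univariate quadratic (degree $\ge2$) pieces and then polarisation identities. The dictionary element $t\mapsto t^p$ in the second layer, together with the identity
\[
x_1\cdots x_p=\frac{1}{p!\,2^p}\sum_{\epsilon\in\{\pm1\}^p}\Bigl(\prod_j\epsilon_j\Bigr)\Bigl(\sum_j\epsilon_j x_j\Bigr)^p,
\]
lets products of univariate B-splines be expressed as outer univariate functions of inner sums. This yields $O(J^p)$ tensor basis functions with bounded effective coefficients, which is consistent with $P_J=O(J^p)$ and the fixed envelope. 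The bookkeeping to check is that coefficient bounds stay uniform in $J$; this needs the local support of the B-splines and bounded combinatorial constants depending only on $p$.

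\textbf{Estimation.} For the entropy condition, the sieve is a Lipschitz image, in its $P_J$ bounded coefficients, of a parameter box: fixed grid, bounded coefficients, and the Lipschitz map $\Phi$. A parametric covering argument gives
\[
\log\mathcal N\{\mathcal F_{N,J}(\mathbf X),\|\cdot\|_F,u\}\lesssim P_J\log\bigl(C\sqrt{NK}\,J^{c}/u\bigr).
\]
This is not of the pure form $A_N/u^2$. I would either bound it by $A_N/u^2$ on the relevant range $u\gtrsim \sqrt{NK}/N$ with $A_N\asymp N P_J\log(2J)$, or rerun the truncated Dudley integral in the proof of Theorem~\ref{Risk_error} with the log-entropy. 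Either route makes the first term of~\eqref{eq:risk_bound} $O(\sqrt{J^p\log(2J)/N})$, possibly with an extra $\log N$ that the choice of $J_N$ absorbs.

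The remaining terms of~\eqref{eq:risk_bound} are $O_{\mathbb P}(N^{-1/2})$ with $\eta$ fixed, because $M_f$ is bounded by the envelope and $\kappa_N=O(N^{-1/2})$ under (A2). Adding the approximation and estimation bounds gives~\eqref{eq:kan-total-main}. Balancing $J^{-\beta}$ against $\sqrt{J^p\log J/N}$ at $J_N\asymp(N/\log N)^{1/(2\beta+p)}$ gives the stated rate, and the $N^{-1/2}$ term is of lower order.
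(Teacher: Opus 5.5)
Your route matches the paper's. You use the same exact split. The raw target is $(\sigma^+)^{-1}(q_k-q_{k-1})$, which is $C^\beta$ by the positive gap, approximated by a bounded-coefficient tensor-product quasi-interpolant. You use the same polarization identity with $t\mapsto t^p$ on the second layer. The estimation side is a volume cover in the coefficients followed by a Dudley integral.

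There is a genuine gap in the step you defer as bookkeeping, $P_J=O(J^p)$, and this count drives the rate. You count tensor-product basis functions, but the entropy bound depends on every free coefficient of the dense fixed-grid spline-KAN that contains the approximant. Suppose each term $c_{k,\boldsymbol\nu}\prod_jB_{\nu_j,J}(x_j)$ gets its own block of $2^p$ hidden nodes. Then the hidden width is $K2^pM_J^p=\Theta(J^p)$. Each first-layer edge carries a spline with $M_J=J+O(1)$ coefficients, so the class has $\Theta(J^{p+1})$ coefficients. The estimation term becomes $\sqrt{J^{p+1}\log(2J)/N}$, and the exponent drops to $\beta/(2\beta+p+1)$.

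The missing idea is to fold the coefficient-weighted sum over the last index into the last univariate factor:
\[
r_{k,J}(x)=\sum_{\boldsymbol\nu_{-p}}\Bigl\{\prod_{j<p}B_{\nu_j,J}(x_j)\Bigr\}u_{k,\boldsymbol\nu_{-p}}(x_p),\qquad u_{k,\boldsymbol\nu_{-p}}(t)=\sum_{\nu_p}c_{k,(\boldsymbol\nu_{-p},\nu_p),J}B_{\nu_p,J}(t).
\]
Here $u_{k,\boldsymbol\nu_{-p}}$ is itself a fixed-grid spline, bounded by $\max|c|$ through partition of unity. Polarization is then needed for only $M_J^{p-1}$ products per output. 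The hidden width becomes $K2^pM_J^{p-1}=O(J^{p-1})$, so the dense first layer has $p\cdot K2^pM_J^{p-1}\cdot O(J)=O(J^p)$ coefficients. The second layer has $O(J^{p-1})$ coefficients with a fixed polynomial dictionary, or $O(J^p)$ with a common $J$-grid one. Relatedly, the first layer does not need to produce ``quadratic pieces''. It only forms the signed sums $\sum_j\epsilon_ju_j(x_j)$ of univariate splines, and the $p$-th power comes from the second-layer dictionary.

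A smaller point: of your two estimation routes, only the second gives the stated rate. Suppose you write the entropy as $A_N/u^2$ with $A_N\asymp NP_J\log(2J)$ and invoke Theorem~\ref{Risk_error} as stated. This leaves the factor $1+\log_+\{M_f\sqrt K N/\sqrt{A_N}\}$, which is of order $\log(N/P_J)\asymp\log N$ along the sieve. That factor multiplies the estimation term and is not absorbed by the choice of $J_N$; it inflates the final rate by a power of $\log N$. The paper instead recomputes the Dudley integral directly under the entropy $P_J\log(1+\Xi_J\sqrt N/u)$, with $\Xi_J$ polynomial in $J$. This gives $\sqrt{P_J/N}\{1+\sqrt{\log(2+\Xi_J)}\}=O\{\sqrt{J^p\log(2J)/N}\}$ with no extra $\log N$.
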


\begin{theorem}[Excess-risk convergence rate for the Trans-CNQ estimator]
\label{thm:trans-holder-rate}
Under the smoothness and positive-gap conditions of
Theorem~\ref{thm:kan-holder-rate}, suppose additionally that
\(0<\beta<(p+3)/2\).  Hold the shared-embedding post-LayerNorm encoder
dimensions fixed and let the flattened ReLU readout have at most \(KM\)
units and output variation budget \(\Lambda\).  For
\[
 \nu=\frac{p+3-2\beta}{2p},\qquad
 \Lambda_N=M_N^\nu,\qquad
 M_N\asymp\left\{\frac N{\log N}\right\}^{p/(2\beta+p)},
\]
the corresponding fixed-envelope Trans-CNQ sieve satisfies
\[
 L(\widehat f_{N,M_N,\Lambda_N})-L(q)
 =O_{\mathbb P}\left[
 \left\{\frac N{\log N}\right\}^{-\beta/(2\beta+p)}\right].
\]
\end{theorem}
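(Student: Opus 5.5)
The argument rests on the exact decomposition \(L(\widehat f)-L(q)=R_{\mathcal F_N}(\widehat f)+\mathcal A_N\) stated after Theorem~\ref{Risk_error}. The estimation term is handled by Theorem~\ref{Risk_error} once an entropy constant \(A_N\) for the Trans-CNQ sieve is available. The approximation term needs a separate construction. Choosing \(M_N\) then balances the two.

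\textbf{Approximation term.} First, the loss is Lipschitz: \(\rho_\tau\) is \(1\)-Lipschitz, so
\[
L(f)-L(q)\le \frac1K\sum_k\|f_k-q_k\|_\infty .
\]
It therefore suffices to approximate \(q\) uniformly. Next, invert the non-crossing map \(\Phi\). Because adjacent quantiles are separated by a positive constant, the targets \(h^\star=q_1\) and \(s_j^\star=(\sigma^+)^{-1}(q_{j+1}-q_j)\) are well defined. Since \((\sigma^+)^{-1}\) is smooth on the range \([\text{gap},\infty)\), these targets are again \(C^\beta\) functions on \([0,1]^p\). Also, \(\Phi\) is Lipschitz, so it suffices to approximate the \(K\) raw targets uniformly by the linear heads composed with the readout.

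The encoder dimensions are held fixed. For the encoder I would pick explicit weights for which the flattened post-LayerNorm token matrix is an injective (indeed bi-Lipschitz) continuous function \(E(x)\) of \(x\in[0,1]^p\). A natural choice is to make attention essentially uniform, i.e.\ zero query/key weights, so that each token retains an affine image of its own scalar. LayerNorm with the positional encoding is then invertible coordinatewise on a compact set.

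It remains to approximate the targets \(s^\star\circ E^{-1}\) by a shallow ReLU network with \(M\) units and variation norm \(\Lambda\). Here I would invoke known rates for shallow ReLU networks on \(C^\beta\) targets under a variation-norm constraint, in the style of Siegel–Xu / Bach: error \(\lesssim M^{-\beta/p}\) provided \(\Lambda\gtrsim M^{(p+3-2\beta)/(2p)}\). This is exactly the \(\nu\) in the statement, and it is where the condition \(\beta<(p+3)/2\) enters (it keeps \(\nu>0\)). I expect this step to be the main obstacle: the smoothness of \(E^{-1}\) must be verified so that composition preserves the Hölder class, and the right variation-budget rate must be cited or adapted, which may require extending the target off the image manifold. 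Taking \(K\) such readouts gives at most \(KM\) units.

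\textbf{Estimation term.} With fixed encoder dimensions and a bounded envelope, the readout class with variation budget \(\Lambda\) has empirical \(L_2\) entropy of order \(A_N\lesssim N\,\Lambda^2/u^2\) in the Frobenius norm, up to log factors from the encoder parameters and the \(KM\) units. An alternative is a parameter-counting bound \(A_N\) of order \(N\cdot KM\log(\cdot)\) with the appropriate \(1/u^2\) form after truncation. Plugging into \eqref{eq:risk_bound} then gives an estimation error of order \(\sqrt{M\log N/N}\). The \(\kappa_N\) term contributes \(O(N^{-1/2})\) and is negligible.

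\textbf{Balancing.} Equating \(M^{-\beta/p}\) with \(\sqrt{M\log N/N}\) yields \(M_N\asymp(N/\log N)^{p/(2\beta+p)}\), which gives the stated rate \(\{N/\log N\}^{-\beta/(2\beta+p)}\). One consistency check is needed: the \(\Lambda_N\)-dependence of the entropy must not exceed this order. That is ensured by the choice of \(\nu\), and I would verify it at the end.
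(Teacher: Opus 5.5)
Your skeleton matches the paper's: the exact decomposition, raw targets obtained by inverting the softplus of the gaps, the shallow-ReLU rate \(M^{-\beta/p}\vee\Lambda^{-2\beta/(p+3-2\beta)}\) equalized by \(\Lambda=M^\nu\), a parameter-counting cover, and balancing. However, the estimation step as you describe it does not give the stated rate.

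\textbf{Estimation.} Suppose you dominate the parametric cover \(CM\log(1+\Xi_{M,\Lambda}\sqrt N/u)\) by an \(A_N/u^2\) bound. You are then forced to take \(A_N\asymp MN\log\Xi_{M,\Lambda}\), the supremum of \(u^2\log(1+c/u)\) over \(u\) up to the Frobenius diameter, which is of order \(\sqrt N\). The factor \(\log_+\{M_f\sqrt K N/\sqrt{A_N}\}\) in \eqref{eq:risk_bound} is then of order \(\log N\), because \(N/M_N\) grows polynomially. Using Theorem~\ref{Risk_error} as a black box therefore gives \(\sqrt{M\log N/N}\cdot\log N\), not \(\sqrt{M\log N/N}\), and balancing yields only \(\{N/(\log N)^3\}^{-\beta/(2\beta+p)}\).

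The paper avoids this. It keeps only Steps 1, 2 and 4 of that proof and redoes the Dudley step directly under the logarithmic entropy of Lemma~\ref{lem:app-trans-growing-cover}. Since \(\int_0^{C}\sqrt{\log(1+\Xi/t)}\,dt\lesssim 1+\sqrt{\log(2+\Xi)}\) converges at the origin, no truncation is needed and the term is \(\sqrt{M/N}\{1+\sqrt{\log(2+\Xi_{M,\Lambda})}\}\).

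Your first option is worse. An entropy of order \(N\Lambda^2/u^2\) produces an estimation term of order at least \(\Lambda_N/\sqrt N\). With \(\Lambda_N=M_N^\nu\), this decays more slowly than the target rate whenever \(\beta<3/2\), since then \(\nu>1/2\). So your closing ``consistency check'' is not secured by the choice of \(\nu\); that choice is dictated by approximation alone. \(\Lambda_N\) is harmless only because, in the parameter cover, it enters through \(\log\Xi_{M,\Lambda}\), and \(\Xi_{M,\Lambda}=C(M+\Lambda+1)\Lambda\sqrt M\) is polynomial in \(N\).

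\textbf{Approximation.} The obstacle you flag is genuine, and a generic extension of \(s^\star\circ E^{-1}\) off the image in \(\mathbb R^{pd}\) does not remove it. That route only gives the shallow-ReLU rate and budget exponent in dimension \(pd\), not \(p\). The missing idea is to fold a fixed linear projection to \(\mathbb R^p\) into the ReLU weights. This is possible because the flattened readout is linear in \({\rm vec}(Z_L)\). Injectivity of each token curve \(t\mapsto Z_j(t)\in\mathbb R^d\) is not enough: you need one fixed linear functional of it to be strictly monotone.

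Your encoder choice also needs correcting. Zero query/key weights alone give uniform attention, which adds \(p^{-1}\sum_s Vz_s\) to every token and mixes the coordinates. The paper's construction is as follows:
\begin{enumerate}
\item Set every attention and FFN map to zero.
\item Take \(E_{\rm sh}(t)=te\) with a unit vector \(e\perp\mathbf{1}\).
\item Use zero-shift, scalar-scale LayerNorms.
\end{enumerate}
Induction then gives \(Z_j(t)=c\,y_j(t)/\{a\|y_j(t)\|_2^2+b\}^{1/2}\) with \(y_j(t)=P_dP_{j\cdot}+te\). The functional \(\chi_j(t)=e^\top Z_j(t)\) has strictly positive derivative by Cauchy--Schwarz. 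The ReLU layer therefore sees \(\chi(x)=(\chi_1(x_1),\ldots,\chi_p(x_p))\) on a box in \(\mathbb R^p\). The targets \(r^\circ_k\circ\chi^{-1}\) are \(C^\beta\) there, and the shallow-ReLU result applies in dimension \(p\).
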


\begin{remark}[Scope of the bounds]\label{rem:scope}
The sieve rates of Theorems~\ref{thm:kan-holder-rate}--\ref{thm:trans-holder-rate}
hold for fixed \(p\) and \(K\) under hard norm bounds, a
fixed output envelope, ideal ERM, uniformly positive quantile gaps, and
(A1)--(A2).  In particular, (A2) fails for bounded-uniform censoring combined
with an unbounded event-time support.  The KAN \emph{approximation} result,
and hence the KAN total rate, concerns a degree-\(p\) theoretical sieve rather
than the fitted cubic-spline EfficientKAN architecture, and none of these
results covers TransKAN-CNQ.  These are upper bounds and do not rank the
fitted architectures.
\end{remark}


\section{Experimental Setup}
\label{sec:experiments}

\subsection{Evaluation Metrics}
\label{sec:metrics}

Using the notation established in Section~\ref{sec:method}, quantile prediction errors are evaluated on the log-time scale, with $V_i=\log(Y_i)$ and $\widehat{V}_i(\tau_k)=\log\{\widehat{Q}_i(\tau_k)\}$. Interval coverage, by contrast, is assessed on the original time scale through the predicted quantiles $\widehat{Q}_i(\tau)$. Metrics that target population quantities defined in terms of the event time $T_i$ require correction for right censoring. We use IPCW weights $w_i = \delta_i / \widehat{G}(Y_i^-)$, where $\widehat{G}$ is the Kaplan--Meier estimate of the censoring survival function. All weighted metrics below use the normalized form $\sum_{i} w_i \ell_i \big/ \sum_{i} w_i$, which is consistent under the marginal independent-censoring and positivity conditions in (A1)–(A2).

{\it (i) IPCW-weighted mean pinball loss.} The pinball loss at level $\tau$ is $\rho_{\tau}(u)=u\{\tau-\mathbb{I}(u<0)\}$. The IPCW-weighted pinball loss averaged over quantiles is
\begin{equation}
\label{eq:wpinball}
\mathcal{L}_{\mathrm{wPB}}
=
\frac{1}{K}\sum_{k=1}^K
\frac{\sum_{i=1}^N w_i\;
\rho_{\tau_k}\!\big(V_i-\widehat{V}_i(\tau_k)\big)}
{\sum_{i=1}^N w_i}\,.
\end{equation}
This is the primary measure of overall quantile prediction accuracy. The quantile-specific pinball losses $\mathcal{L}_{\mathrm{wPB}}(\tau_k)$ (i.e., the individual terms before averaging over $k$) are reported in the supplementary appendix to reveal potential heterogeneity across quantile levels.

{\it (ii) IPCW-weighted interval coverage probability (ICP).} To assess interval calibration under censoring, we propose the IPCW-weighted
empirical coverage (wICP) for a nominal $(1-2\alpha)\times 100\%$ prediction
interval $[\widehat{Q}_i(\alpha),\, \widehat{Q}_i(1-\alpha)]$:
\begin{equation}
\label{eq:wicp}
\mathrm{wICP}_{1-2\alpha}
=
\frac{\sum_{i=1}^N w_i\;\mathbb{I}\!\big\{Y_i \in [\widehat{Q}_i(\alpha),\, \widehat{Q}_i(1-\alpha)]\big\}}
{\sum_{i=1}^N w_i}\,.
\end{equation}
We report coverage at two nominal levels: 80\% ($\alpha = 0.1$, using $\tau = 0.1$ and $\tau = 0.9$) and 50\% ($\alpha = 0.25$, using $\tau = 0.25$ and $\tau = 0.75$). A well-calibrated model should achieve empirical coverage close to the nominal rate.

\subsection{Implementation Setup}
\label{sec:setup}

\textbf{Training and hyperparameter tuning.}
All models are trained using the AdamW optimizer~\citep{loshchilov2017decoupled} under a learning rate schedule combining linear warmup with cosine annealing (CosineAnnealingLR), and training is terminated via early stopping on the validation loss. For the Transformer-based models (Trans-CNQ and TransKAN-CNQ) we search over learning rates $\{10^{-3},\,\allowbreak 5{\times}10^{-4},\,\allowbreak 10^{-4},\,\allowbreak 5{\times}10^{-5},\,\allowbreak 10^{-5}\}$, dropout rates $\{0,\,\allowbreak 0.2,\,\allowbreak 0.5\}$, weight decays $\{0,\,\allowbreak 10^{-4},\,\allowbreak 10^{-3}\}$, hidden layer widths $\{64,\,\allowbreak 100,\,\allowbreak 128,\,\allowbreak 200,\,\allowbreak 256\}$ and network depths $\{2,\,\allowbreak 3\}$, with an early-stopping patience of 10 epochs. KAN-CNQ was tuned in a separate sweep over learning rates $\{10^{-3},\,\allowbreak 5{\times}10^{-4},\,\allowbreak 3{\times}10^{-4},\,\allowbreak 10^{-4}\}$, dropout rates $\{0,\,\allowbreak 0.05,\,\allowbreak 0.1,\,\allowbreak 0.2\}$, weight decays $\{0,\,\allowbreak 10^{-5},\,\allowbreak 10^{-4},\,\allowbreak 10^{-3}\}$, hidden layer widths $\{32,\,\allowbreak 64,\,\allowbreak 128,\,\allowbreak 200,\,\allowbreak 256\}$, network depths $\{1,\,\allowbreak 2,\,\allowbreak 3\}$ and B-spline grid sizes $\{3,\,\allowbreak 5,\,\allowbreak 8\}$, with an early-stopping patience of 20 epochs; the separate sweep reflects the spline-resolution hyperparameter, which has no Transformer counterpart. In all cases the configuration with the best validation performance is selected for final evaluation.

\textbf{Data splitting.}
For the simulation study, we partition the data into training, validation, and test sets in a 1:1:1 ratio. For real-world datasets, we adopt a 65\%--15\%--20\% split following standard practice. All experiments are repeated over 25 random seeds to account for variability in data partitioning.

\textbf{Baseline methods.}
We compare the three proposed models (TransKAN-CNQ, Trans-CNQ, and KAN-CNQ) against ten existing methods in three categories:
(a)~quantile regression-based deep learning models: DeepQuantreg~\citep{jia2022deep} and CQRNN~\citep{pearce2022censored};
(b)~non-quantile deep learning models: DeepSurv~\citep{katzman2018deepsurv}, CoxTime, CoxCC, and PCHazard~\citep{kvamme2019time};
(c)~traditional survival models: Random Survival Forests (RSF)~\citep{ishwaran2008random,ishwaran2007random,ishwaran2019fast}, censored quantile regression via the \texttt{ctqr} package in R~\citep{ctqr,frumento2017estimating}, the Cox proportional hazards model (CoxPH)~\citep{cox1972regression}, and the accelerated failure time model (AFT)~\citep{kalbfleisch2002statistical,kleinbaum1996survival}.
CTQR represents the classical linear censored quantile regression estimators of \citet{portnoy2003censored} and \citet{peng2008survival} in this comparison: all three model the conditional quantile as linear in the covariates and share the linearity restriction that motivates our framework.  CTQR ranks in the bottom three of the thirteen methods in five of the six cohorts and in $18$ of the $27$ simulation settings, so the two additional linear estimators are unlikely to alter any conclusion below.  All deep quantile baselines follow the same training protocol as the proposed models (identical IPCW weights, optimizer, schedule and tuning budget), with DeepQuantreg fitting one MLP per quantile level as originally proposed, and all ten are evaluated in both the simulation study and the real-data benchmark.

\section{Simulation Studies}
\label{sec:sim}
To complement the real-data analysis, we ran a simulation study with covariate dimension fixed at $p=10$. The design spans $27$ settings formed by three event-time distributions (Gaussian, Gamma, Weibull) with covariate-dependent parameters, three censoring levels ($\approx 25\%$, $50\%$, $75\%$), and three sample sizes ($n\in\{150,750,1500\}$).
Full design details, per-setting results, and tables are provided in the Supplementary Material; we summarize the conclusions here.

The results are strongly distribution-dependent. Under the Weibull and Gamma designs, where covariates reshape the full survival distribution rather than only its center, the CNQ models attain the lowest IPCW pinball loss (in $7/9$ and $6/9$ settings, respectively), with the largest margins at moderate-to-large samples. Under the symmetric Gaussian design, by contrast, the quantile- and hazard-based methods perform very similarly and Random Survival Forest (RSF) attains the lowest pinball loss in eight of the nine settings, though only marginally. RSF is thus the strongest competitor overall, leading on the Gaussian design and running a close second under Weibull/Gamma, while the proposed models lead precisely where flexible distributional modeling matters most. Among the quantile-based baselines, CQRNN is less accurate than the best proposed model in every one of the 27 settings and DeepQuantreg in 26 of 27, the sole exception being Gamma with $75\%$ censoring at $n=150$; the hazard-based and remaining classical baselines are weaker distributional predictors. Within our family, TransKAN-CNQ is the most consistent, with Trans-CNQ a close and sometimes preferable alternative under small samples or heavy censoring; at the smallest sample size ($n=150$) the proposed models' advantage narrows. Importantly for interpretability, the CNQ parameterization makes \emph{quantile crossing impossible by construction}, and the measured crossing rate of all three architectures is exactly zero in every setting. Fitting the same five levels separately, as DeepQuantreg does, is not so benign: averaged over the 27 settings, $34.2\%$ of test subjects have at least one inverted adjacent pair and $2.1\%$ have $\widehat q_{0.1}>\widehat q_{0.9}$, rising to $65.2\%$ and $29.6\%$ in the hardest setting (Supplementary Material).

\section{Real Data Analysis}
\label{sec:realdata}


\begin{table}[h!]
\centering
\setlength{\tabcolsep}{4pt} 
\renewcommand{\arraystretch}{1.2} 
\begin{tabular}{|p{2.2cm}|p{1.3cm}|p{2.2cm}|p{2.3cm}|}
\hline
\textbf{Data set} & \textbf{Size} & \textbf{Covariates} & \textbf{Censored\%} \\
\hline
SUPPORT  & 8,873  & 14 & 32 \\
\hline
METABRIC & 1,904  & 9  & 42 \\
\hline
GBSG     & 2,232  & 7  & 43 \\
\hline
GBSG-500  & 500    & 7  & 42 \\
\hline
NKI70    & 144    & 8  & 67 \\
\hline
FLCHAIN  & 7,874  & 7  & 72 \\
\hline
\end{tabular}
\caption{Summary of datasets used in the analysis.}
\label{tab:dataset_summary} 
\end{table}

\subsection{Overall Predictive Performance}
\label{sec:overall}


We compare the three proposed architectures (TransKAN-CNQ, Trans-CNQ and KAN-CNQ) with ten competing approaches on the six right-censored cohorts of Table~\ref{tab:dataset_summary}, which span two orders of magnitude in sample size ($n=144$ to $8{,}873$) and censoring from $32\%$ to $72\%$.

\textbf{Pinball loss.}
Table~\ref{tab:overall-pinball} reports the mean IPCW pinball loss across 25 random splits. The proposed methods attain the lowest mean pinball loss on all six datasets, with the strongest results typically delivered by the two Transformer-based variants. DeepQuantreg is the strongest baseline on four of the six cohorts, which is expected given that it is the only other method fitting the quantile function directly; RSF is strongest on METABRIC, while on the 144-subject NKI70 cohort the baselines are effectively tied (CoxPH lowest at $0.265$, with all of them within $0.005$). On the large SUPPORT cohort, TransKAN-CNQ and Trans-CNQ achieve $\overline{L}_{\mathrm{pin}}=0.486$ and $0.487$, against the best baseline (DeepQuantreg, $0.492$) and the best non-quantile baseline (RSF, $0.541$); on METABRIC, TransKAN-CNQ is best ($0.215$) ahead of RSF ($0.228$) and DeepQuantreg ($0.243$). The largest gap is on the heavily censored FLCHAIN cohort, where TransKAN-CNQ attains $0.279$ versus DeepQuantreg $0.287$ and RSF $0.332$, a relative improvement of roughly $16\%$ over the latter. On GBSG and GBSG-500 the three proposed models lie within $0.006$ of one another and their internal ranking shifts with the sample, as it does on NKI70 (Trans-CNQ best at $0.242$), where the larger standard deviations reflect the instability expected from 144 subjects; DeepQuantreg is the weakest quantile-based method there ($0.305$). The consistent ordering of the proposed models ahead of DeepQuantreg on every cohort isolates the contribution of joint multi-quantile estimation, since the two approaches share the IPCW objective and differ chiefly in fitting all quantiles together under a non-crossing parameterization rather than one network per level.
Figure~\ref{fig:realdata-pinball} complements Table~\ref{tab:overall-pinball} by showing the split-to-split distribution of IPCW pinball loss across all six cohorts; the proposed models generally occupy the lowest-loss region, with DeepQuantreg the closest quantile-based competitor.

\textbf{Coverage.}
Table~\ref{tab:overall-icp} reports the IPCW interval coverage probability (ICP) for the nominal 80\% prediction interval. The proposed methods generally remain closer to the nominal target than most competitors while avoiding the systematic overcoverage seen in several baselines. The one baseline that matches them on this criterion is DeepQuantreg, which is closest to nominal on GBSG ($0.793$), SUPPORT ($0.793$) and FLCHAIN ($0.802$). This is unsurprising, since it shares the IPCW quantile objective and therefore inherits the same width calibration, and on coverage alone the two are effectively tied: where DeepQuantreg is nearer the nominal level its advantage in absolute deviation is $0.001$ on GBSG, $0.003$ on FLCHAIN and $0.010$ on SUPPORT, all well inside the split-to-split standard deviations of $0.02$--$0.06$. Where the proposed models are nearer, the margin is larger: $0.010$ on GBSG-500, $0.024$ on METABRIC and $0.170$ on NKI70, where DeepQuantreg degrades to $0.625$. Coverage therefore does not separate the two approaches; what separates them is that the proposed models attain this calibration while also achieving strictly lower pinball loss on all six cohorts (Table~\ref{tab:overall-pinball}), and while guaranteeing a coherent conditional distribution. Because DeepQuantreg fits an independent network per quantile level, nothing in its construction prevents the estimated curves from crossing, and on these cohorts they do: at least one adjacent pair is inverted for $18.2\%$ of test subjects on average, ranging from $2.4\%$ on SUPPORT to $45.7\%$ on NKI70, and $146$ of the $150$ fits contain at least one such subject (Supplementary Material). The outer interval itself inverts only rarely here ($\widehat q_{0.1}>\widehat q_{0.9}$ for $0.18\%$ of subjects), so its coverage in Table~\ref{tab:overall-icp} remains interpretable; but the underlying quantile function is not monotone for a substantial minority of patients, which is precisely the coherence a clinician needs when reading a set of quantile milestones. The CNQ parameterization rules this out identically --- its measured crossing rate is zero on every cohort --- at no measurable cost in coverage.

Among the baselines the failure modes are systematic. AFT and RSF reach nominal-looking coverage on SUPPORT ($0.898$ and $0.888$) only through very wide intervals, whereas all three proposed models sit near $0.78$ there; CQRNN overcovers on most cohorts ($0.906$ on METABRIC, $0.925$ on FLCHAIN, $0.881$ on SUPPORT), so its nominally wide intervals are not sharp, the exception being NKI70 ($0.649$), where the small sample destabilizes every method. On GBSG the proposed models are tightly concentrated around the nominal level (Trans-CNQ, $0.808$). Coverage is most variable on METABRIC, where they span $0.720$--$0.784$; TransKAN-CNQ is closest to nominal there, just ahead of CoxCC ($0.781$). These estimates are the least stable of the six cohorts, because METABRIC has by far the smallest effective sample size under IPCW weighting ($(\sum_i w_i)^2/\sum_i w_i^2 \approx 35\%$ of the observed events, versus $75$--$98\%$ elsewhere), so a handful of late events carries a large share of the weight; capping the weights at their 95th percentile moves the three proposed models to $0.795$--$0.820$ with essentially unchanged pinball loss (Supplementary Material), but we report untruncated weights throughout. On FLCHAIN the proposed models remain close to nominal (TransKAN-CNQ $0.795$, Trans-CNQ $0.806$, KAN-CNQ $0.828$), as does DeepQuantreg ($0.802$), whereas the hazard-based and traditional baselines undercover markedly (e.g., RSF $0.693$, CoxTime $0.593$), further evidence that an IPCW quantile objective translates nominal width into reliable empirical coverage even under heavy censoring.

Overall, the CNQ models deliver the most favorable calibration--sharpness trade-off among the methods considered: the lowest pinball loss on every cohort, with empirical coverage close to nominal outside METABRIC, rather than apparent calibration bought with overly conservative intervals.

\subsection{Model Checking and Calibration}
\label{sec:model-checking}
Aggregate loss and interval coverage do not by themselves establish that a fitted quantile model is internally well calibrated. We therefore add a direct calibration check on METABRIC. Using the held-out test predictions from all 25 splits, we estimate, for each nominal level $\tau$, the IPCW-corrected probability $\widehat{\Pr}\{T\le \widehat{Q}_\tau(X)\}$, which equals $\tau$ under perfect calibration; censoring is handled with the same inverse-probability weights (training-set Kaplan--Meier $\widehat{G}$) used for the loss metrics.

Figure~\ref{fig:calibration}(a) plots these estimates against the nominal levels. All three architectures track the $45^\circ$ line closely: $\widehat{\Pr}\{T\le\widehat{Q}_{0.5}\}$ is $0.49$, $0.47$ and $0.48$ for Trans-CNQ, TransKAN-CNQ and KAN-CNQ, and at $\tau=0.1$ the three give $0.11$, $0.10$ and $0.10$ against the nominal $0.10$.  The residual miscalibration is concentrated in the upper tail, where all three undershoot ($0.83$, $0.88$ and $0.86$ at $\tau=0.9$). Figure~\ref{fig:calibration}(b) summarizes central-interval calibration: empirical IPCW coverage of the nominal $50\%$ and $80\%$ intervals is $0.43$--$0.47$ and $0.72$--$0.78$, respectively, a mild but consistent undercoverage that is most pronounced for Trans-CNQ and matches the interval-coverage results in Table~\ref{tab:overall-icp}. A log-time residual diagnostic at the predicted median (Supplementary Material) shows median residuals near zero for the Transformer-based models. The same checks replicate on FLCHAIN (Supplementary Material), where calibration is tighter (empirical coverage of the nominal $80\%$ interval is $0.80$--$0.83$) consistent with its substantially larger test sets. These checks confirm that the calibration advantages reported above reflect genuinely well-behaved conditional quantiles rather than averaging artifacts, and they localize the residual miscalibration to the extreme quantiles of the spline-based architecture.

\textbf{Sensitivity to the quantile grid.}
To verify that our conclusions do not hinge on the choice of five quantile levels, we retrained all three models with denser grids of $K\in\{11,19\}$ quantiles (identical hyperparameters, 25 splits) and re-evaluated at the five common levels. Table~\ref{tab:qgrid-sensitivity} shows the pinball loss and the mean absolute calibration error (MACE) changing by at most $\sim\!0.01$ as $K$ grows, well within one split-to-split standard deviation ($\approx\!0.02$), with the ordering (TransKAN-CNQ $\lesssim$ Trans-CNQ $<$ KAN-CNQ) preserved at every resolution. Varying the IPCW weight-truncation threshold (none through the $90$th--$99$th percentiles) and the train/validation/test ratio ($65/15/20$, $50/25/25$, $80/10/10$) likewise leaves the pinball loss within $\sim\!0.01$ and the ordering intact, with $80\%$ coverage staying close to nominal (Supplementary Material); only the smallest test set ($80/10/10$) is noticeably noisier. The five-quantile, untruncated-weight results reported above are therefore not artifacts of those choices.

\begin{table}[t]
  \centering
  \caption{Quantile-grid sensitivity on METABRIC. Models are trained with $K\in\{5,11,19\}$ quantiles and evaluated at the five common levels $\{0.1,0.25,0.5,0.75,0.9\}$ (mean over 25 splits). $\overline{L}_{\mathrm{pin}}$ is the IPCW pinball loss (log scale); MACE is the mean absolute calibration error. The baseline is $K=5$.}
  \label{tab:qgrid-sensitivity}
  \small
  \begin{tabular}{l ccc c ccc}
    \toprule
    & \multicolumn{3}{c}{$\overline{L}_{\mathrm{pin}}$} & & \multicolumn{3}{c}{MACE} \\
    \cmidrule(lr){2-4}\cmidrule(lr){6-8}
    Model & $K{=}5$ & $K{=}11$ & $K{=}19$ & & $K{=}5$ & $K{=}11$ & $K{=}19$ \\
    \midrule
    TransKAN-CNQ & 0.215 & 0.226 & 0.223 & & 0.079 & 0.109 & 0.090 \\
    Trans-CNQ & 0.224 & 0.230 & 0.223 & & 0.092 & 0.102 & 0.091 \\
    KAN-CNQ & 0.227 & 0.231 & 0.231 & & 0.089 & 0.098 & 0.099 \\\bottomrule
  \end{tabular}
\end{table}

\begin{figure}[t]
  \centering
  \begin{tabular}{cc}
    \includegraphics[width=0.46\textwidth]{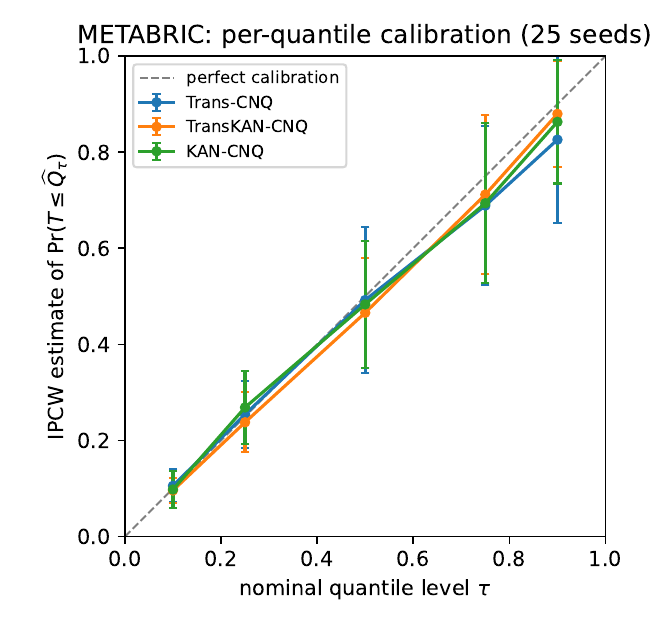} &
    \includegraphics[width=0.46\textwidth]{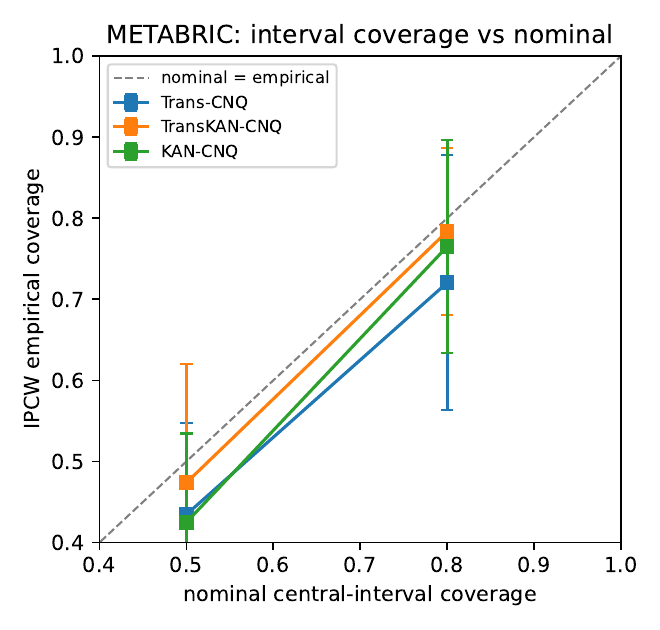} \\[-2pt]
    \small (a) Per-quantile calibration &
    \small (b) Central-interval coverage
  \end{tabular}
  \caption{METABRIC model checking (25 seeds, IPCW-weighted).
  (a)~Per-quantile calibration: for each nominal level $\tau$, the IPCW estimate of $\Pr\{T\le\widehat{Q}_\tau(X)\}$ is plotted against $\tau$; the dashed line is perfect calibration and error bars are $\pm1$ standard deviation across splits.
  (b)~Central-interval coverage versus nominal level ($50\%$ and $80\%$); points below the diagonal indicate mild undercoverage.}
  \label{fig:calibration}
\end{figure}

\begin{figure}
    \centering
    \includegraphics[width=\linewidth]{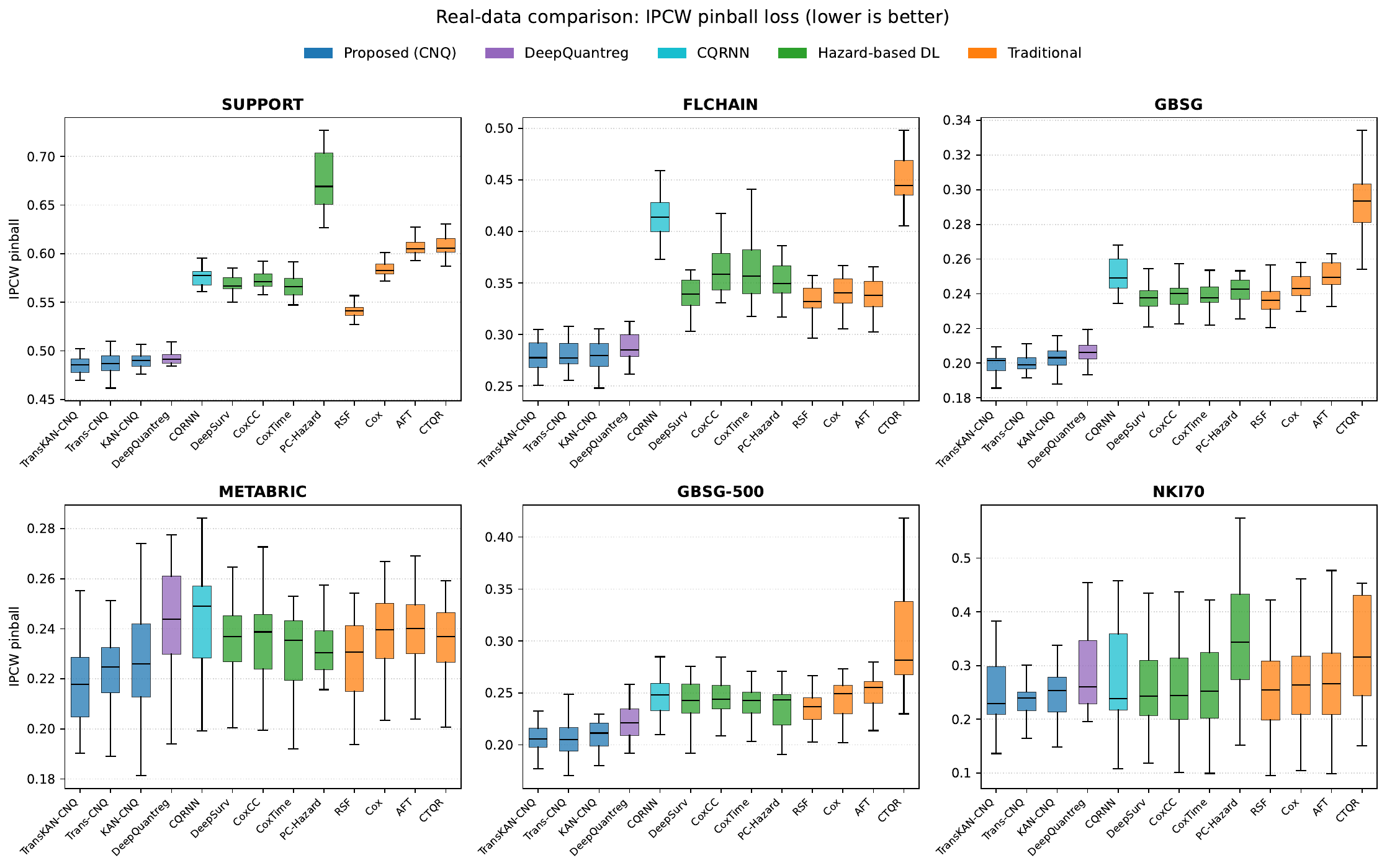}
    \caption{Boxplots of IPCW pinball loss (self-normalized, log-time scale) across the six real-world datasets. Lower values indicate better quantile predictive performance. The proposed CNQ models (blue) consistently occupy the most favorable region of the distribution, with DeepQuantreg (purple), the other quantile-based deep learning method, the closest competitor on most cohorts.}
    \label{fig:realdata-pinball}
\end{figure}

\begin{table}[t]
  \centering
  \caption{Mean IPCW pinball loss ($\overline{L}_{\mathrm{pin}}$, averaged over $\tau \in \{0.1, 0.25, 0.5, 0.75, 0.9\}$) on six real-world datasets. Each entry shows the mean $\pm$ standard deviation across 25 random splits. Lower is better. The best result per dataset is \textbf{bolded}.}
  \label{tab:overall-pinball}
  \small
  \setlength{\tabcolsep}{3.5pt}
  \begin{tabular}{l cccccc}
    \toprule
    & GBSG & GBSG-500 & METABRIC & SUPPORT & NKI70 & FLCHAIN \\
    \midrule
    \multicolumn{7}{l}{\emph{Proposed methods}} \\
    TransKAN-CNQ   & \textbf{.199} {\scriptsize $\pm$ .006} & .206 {\scriptsize $\pm$ .015} & \textbf{.215} {\scriptsize $\pm$ .022} & \textbf{.486} {\scriptsize $\pm$ .011} & .251 {\scriptsize $\pm$ .066} & \textbf{.279} {\scriptsize $\pm$ .015} \\
    Trans-CNQ      & .200 {\scriptsize $\pm$ .006} & \textbf{.204} {\scriptsize $\pm$ .017} & .224 {\scriptsize $\pm$ .024} & .487 {\scriptsize $\pm$ .010} & \textbf{.242} {\scriptsize $\pm$ .044} & .282 {\scriptsize $\pm$ .015} \\
    KAN-CNQ        & .203 {\scriptsize $\pm$ .007} & .210 {\scriptsize $\pm$ .015} & .227 {\scriptsize $\pm$ .025} & .490 {\scriptsize $\pm$ .010} & .251 {\scriptsize $\pm$ .058} & .280 {\scriptsize $\pm$ .016} \\
    \addlinespace
    \multicolumn{7}{l}{\emph{Deep learning baselines}} \\
    DeepQuantreg   & .206 {\scriptsize $\pm$ .007} & .220 {\scriptsize $\pm$ .018} & .243 {\scriptsize $\pm$ .024} & .492 {\scriptsize $\pm$ .009} & .305 {\scriptsize $\pm$ .100} & .287 {\scriptsize $\pm$ .015} \\
    CQRNN          & .251 {\scriptsize $\pm$ .010} & .248 {\scriptsize $\pm$ .020} & .244 {\scriptsize $\pm$ .023} & .576 {\scriptsize $\pm$ .010} & .291 {\scriptsize $\pm$ .133} & .412 {\scriptsize $\pm$ .021} \\
    DeepSurv       & .239 {\scriptsize $\pm$ .010} & .242 {\scriptsize $\pm$ .020} & .235 {\scriptsize $\pm$ .016} & .568 {\scriptsize $\pm$ .009} & .267 {\scriptsize $\pm$ .097} & .339 {\scriptsize $\pm$ .018} \\
    CoxCC          & .241 {\scriptsize $\pm$ .010} & .245 {\scriptsize $\pm$ .020} & .236 {\scriptsize $\pm$ .017} & .573 {\scriptsize $\pm$ .009} & .266 {\scriptsize $\pm$ .104} & .364 {\scriptsize $\pm$ .026} \\
    CoxTime        & .240 {\scriptsize $\pm$ .009} & .241 {\scriptsize $\pm$ .020} & .230 {\scriptsize $\pm$ .019} & .567 {\scriptsize $\pm$ .013} & .266 {\scriptsize $\pm$ .101} & .374 {\scriptsize $\pm$ .058} \\
    PC-Hazard      & .242 {\scriptsize $\pm$ .010} & .235 {\scriptsize $\pm$ .021} & .231 {\scriptsize $\pm$ .019} & .673 {\scriptsize $\pm$ .031} & .360 {\scriptsize $\pm$ .131} & .351 {\scriptsize $\pm$ .018} \\
    \addlinespace
    \multicolumn{7}{l}{\emph{Traditional methods}} \\
    Cox PH         & .246 {\scriptsize $\pm$ .011} & .244 {\scriptsize $\pm$ .019} & .239 {\scriptsize $\pm$ .017} & .585 {\scriptsize $\pm$ .009} & .265 {\scriptsize $\pm$ .089} & .341 {\scriptsize $\pm$ .018} \\
    AFT            & .252 {\scriptsize $\pm$ .011} & .249 {\scriptsize $\pm$ .019} & .239 {\scriptsize $\pm$ .019} & .608 {\scriptsize $\pm$ .011} & .267 {\scriptsize $\pm$ .091} & .339 {\scriptsize $\pm$ .019} \\
    RSF            & .237 {\scriptsize $\pm$ .009} & .234 {\scriptsize $\pm$ .016} & .228 {\scriptsize $\pm$ .017} & .541 {\scriptsize $\pm$ .007} & .269 {\scriptsize $\pm$ .114} & .332 {\scriptsize $\pm$ .017} \\
    CTQR           & .295 {\scriptsize $\pm$ .019} & .327 {\scriptsize $\pm$ .150} & .235 {\scriptsize $\pm$ .017} & .609 {\scriptsize $\pm$ .013} & .418 {\scriptsize $\pm$ .300} & .449 {\scriptsize $\pm$ .023} \\
    \bottomrule
  \end{tabular}
\end{table}

\begin{table}[t]
  \centering
  \caption{IPCW interval coverage probability (ICP) at the 80\% nominal level on six real-world datasets. Each entry shows the mean $\pm$ standard deviation across 25 random splits. Values closest to the nominal 0.80 are \textbf{bolded}.}
  \label{tab:overall-icp}
  \small
  \setlength{\tabcolsep}{3.5pt}
  \begin{tabular}{l cccccc}
    \toprule
    & GBSG & GBSG-500 & METABRIC & SUPPORT & NKI70 & FLCHAIN \\
    \midrule
    \multicolumn{7}{l}{\emph{Proposed methods}} \\
    TransKAN-CNQ   & .789 {\scriptsize $\pm$ .038} & \textbf{.802} {\scriptsize $\pm$ .085} & \textbf{.784} {\scriptsize $\pm$ .105} & .782 {\scriptsize $\pm$ .030} & .690 {\scriptsize $\pm$ .297} & .795 {\scriptsize $\pm$ .061} \\
    Trans-CNQ      & .808 {\scriptsize $\pm$ .037} & .787 {\scriptsize $\pm$ .073} & .720 {\scriptsize $\pm$ .160} & .778 {\scriptsize $\pm$ .043} & .759 {\scriptsize $\pm$ .229} & .806 {\scriptsize $\pm$ .055} \\
    KAN-CNQ        & .784 {\scriptsize $\pm$ .039} & .818 {\scriptsize $\pm$ .080} & .765 {\scriptsize $\pm$ .134} & .783 {\scriptsize $\pm$ .024} & \textbf{.805} {\scriptsize $\pm$ .227} & .828 {\scriptsize $\pm$ .042} \\
    \addlinespace
    \multicolumn{7}{l}{\emph{Deep learning baselines}} \\
    DeepQuantreg   & \textbf{.793} {\scriptsize $\pm$ .046} & .788 {\scriptsize $\pm$ .057} & .759 {\scriptsize $\pm$ .075} & \textbf{.793} {\scriptsize $\pm$ .024} & .625 {\scriptsize $\pm$ .233} & \textbf{.802} {\scriptsize $\pm$ .039} \\
    CQRNN          & .871 {\scriptsize $\pm$ .025} & .851 {\scriptsize $\pm$ .054} & .906 {\scriptsize $\pm$ .036} & .881 {\scriptsize $\pm$ .017} & .649 {\scriptsize $\pm$ .245} & .925 {\scriptsize $\pm$ .024} \\
    DeepSurv       & .849 {\scriptsize $\pm$ .028} & .826 {\scriptsize $\pm$ .039} & .765 {\scriptsize $\pm$ .120} & .872 {\scriptsize $\pm$ .012} & .773 {\scriptsize $\pm$ .191} & .706 {\scriptsize $\pm$ .024} \\
    CoxCC          & .844 {\scriptsize $\pm$ .025} & .825 {\scriptsize $\pm$ .043} & .781 {\scriptsize $\pm$ .121} & .869 {\scriptsize $\pm$ .011} & .766 {\scriptsize $\pm$ .212} & .632 {\scriptsize $\pm$ .063} \\
    CoxTime        & .844 {\scriptsize $\pm$ .024} & .829 {\scriptsize $\pm$ .043} & .770 {\scriptsize $\pm$ .123} & .862 {\scriptsize $\pm$ .012} & .769 {\scriptsize $\pm$ .182} & .593 {\scriptsize $\pm$ .170} \\
    PC-Hazard      & .818 {\scriptsize $\pm$ .029} & .806 {\scriptsize $\pm$ .046} & .674 {\scriptsize $\pm$ .142} & .839 {\scriptsize $\pm$ .021} & .276 {\scriptsize $\pm$ .214} & .712 {\scriptsize $\pm$ .023} \\
    \addlinespace
    \multicolumn{7}{l}{\emph{Traditional methods}} \\
    Cox PH         & .848 {\scriptsize $\pm$ .027} & .840 {\scriptsize $\pm$ .045} & .775 {\scriptsize $\pm$ .096} & .879 {\scriptsize $\pm$ .010} & .643 {\scriptsize $\pm$ .237} & .699 {\scriptsize $\pm$ .023} \\
    AFT            & .879 {\scriptsize $\pm$ .021} & .868 {\scriptsize $\pm$ .033} & .834 {\scriptsize $\pm$ .048} & .898 {\scriptsize $\pm$ .011} & .620 {\scriptsize $\pm$ .229} & .710 {\scriptsize $\pm$ .026} \\
    RSF            & .852 {\scriptsize $\pm$ .021} & .850 {\scriptsize $\pm$ .044} & .882 {\scriptsize $\pm$ .035} & .888 {\scriptsize $\pm$ .011} & .750 {\scriptsize $\pm$ .230} & .693 {\scriptsize $\pm$ .025} \\
    CTQR           & .839 {\scriptsize $\pm$ .026} & .821 {\scriptsize $\pm$ .076} & .744 {\scriptsize $\pm$ .146} & .860 {\scriptsize $\pm$ .009} & .526 {\scriptsize $\pm$ .235} & .690 {\scriptsize $\pm$ .024} \\
    \bottomrule
  \end{tabular}
\end{table}

\subsection{Overview of the Two Case Studies}
\label{sec:real-overview}

To evaluate the proposed censored non-crossing quantile regression framework beyond simulated settings, we conduct a systematic analysis on two real-world survival datasets that differ in clinical context, sample size, and censoring severity.

\paragraph{Datasets}
Table~\ref{tab:datasets} summarizes the two cohorts, whose covariates are listed in Section~\ref{s:data}. They were chosen for the contrast they provide: METABRIC pairs molecular markers with treatment indicators under moderate censoring, whereas FLCHAIN is a treatment-free population cohort four times larger and far more heavily censored.

\begin{table}[t]
  \centering
  \caption{Summary of the two real-data cohorts used for in-depth analysis.}
  \label{tab:datasets}
  \small
  \begin{tabular}{lccccl}
    \toprule
    Dataset & $n$ & Censoring & Outcome & Covariates & Domain \\
    \midrule
    METABRIC & 1{,}904 & 42\% & Overall survival (months) & 9 & Breast cancer \\
    FLCHAIN  & 7{,}874 & 72\% & Time to death (days)     & 7 & General population \\
    \bottomrule
  \end{tabular}
\end{table}

\paragraph{Evaluation protocol}
For both datasets, we employ 25 independent random partitions (seeds 41--65), each split into training, validation, and test sets.
Three architectures (Trans-CNQ, TransKAN-CNQ, and KAN-CNQ) each predict five non-crossing quantiles at $\tau \in \{0.1, 0.25, 0.5, 0.75, 0.9\}$.
All evaluation metrics use inverse-probability-of-censoring weighting (IPCW; \citealt{Gerds2006}), with Kaplan--Meier censoring weights estimated from the training set.
IPCW weights are used untruncated throughout; sensitivity to weight truncation is reported in the Supplementary Material.

\paragraph{Analysis plan}
Questions~\textbf{(Q1)} and~\textbf{(Q2)} concern how individualized quantile estimates and covariate effects behave within a cohort, and are addressed by the two case studies below; \textbf{(Q3)}, which concerns reliability in small and heavily censored cohorts, was addressed by the cross-cohort comparison in Section~\ref{sec:overall}. We organize the case-study results around four components:
\begin{enumerate}
  \item \textbf{Quantile-specific feature importance} \textbf{(Q2)} (Section~\ref{sec:metabric-fi} and \ref{sec:flchain-fi}):
        Permutation-based analysis measuring the increase in IPCW pinball loss when each covariate is shuffled, separately for every $\tau$ level.
        This reveals how covariate effects vary across the survival distribution.

  \item \textbf{Clinically defined group contrasts} \textbf{(Q2)} (Sections~\ref{sec:metabric-subgroup} and \ref{sec:flchain-subgroup}):
        Comparison of mean predicted quantiles across clinically defined patient groups.
        A non-constant contrast $\Delta(\tau)$ across quantile levels indicates distributional heterogeneity that would not be captured by a proportional-hazards summary.

  \item \textbf{Individual patient profiles} \textbf{(Q1)}:
        Visualization of complete quantile profiles for representative patients, demonstrating how the framework distinguishes patients with similar medians but different tail risks or uncertainty levels.
        A companion uncertainty-stratification analysis, reported in the Supplementary Material, tests whether the predicted interval width functions as an internal, label-free indicator of predictive difficulty.

  \item \textbf{Event projection} (beyond Q1--Q3; Section~\ref{sec:metabric-ep}): Given a
        calendar cutoff, how well does the model project the events accumulating beyond that
        horizon?
\end{enumerate}

METABRIC, which offers the richest combination of treatment, biomarker and prognostic heterogeneity, is the primary application. Additional patient-profile and uncertainty analyses are reported in the Supplementary Material.

\subsection{METABRIC}
\label{sec:metabric}

\subsubsection{Feature Importance}
\label{sec:metabric-fi}

For each covariate, we randomly permute its test-set values (50 repetitions per seed) and measure the increase $\Delta$ in IPCW pinball loss at each $\tau$.
A heatmap averaged over all three architectures and 25 seeds is provided in the Supplementary Material.
\textbf{Age} and \textbf{chemotherapy} dominate across all $\tau$, with effects concentrated at $\tau = 0.5$--$0.9$; their influence at $\tau = 0.1$ is modest, indicating that these factors primarily shape the bulk and upper tail of the survival distribution.
\textbf{Hormone therapy} peaks at $\tau = 0.75$ ($\Delta = 1.64$) and is negligible at $\tau = 0.1$ ($\Delta = 0.10$): it predicts long-term but not short-term survival. Because treatment is assigned by indication and the model does not condition on ER status, the direction of this effect is not identified (Section~\ref{sec:metabric-subgroup}).
\textbf{PGR} shows an inverted-U pattern peaking at $\tau = 0.5$ ($\Delta = 0.79$), differentiating patients primarily in the central survival range.
\textbf{ERBB2} yields negative $\Delta$ at upper quantiles ($-1.46$ at $\tau = 0.75$; $-0.79$ at $\tau = 0.9$), indicating that permutation \emph{improves} prediction, suggesting complex, non-monotonic associations in the upper tail that merit further investigation. These patterns are consistent across architectures, as shown in the supplementary average heatmap and the architecture-specific heatmaps.

\subsubsection{Clinically Defined Group Contrasts}
\label{sec:metabric-subgroup}

We next examine how the predicted survival distribution shifts across clinically defined patient groups. For each binary grouping variable, we compute
$
\Delta(\tau)=\bar{q}_\tau^{\,\text{group}_1}-\bar{q}_\tau^{\,\text{group}_0},$
the difference in mean predicted $\tau$-quantiles between the two groups, averaged over 25 seeds. We report 95\% bootstrap confidence bands based on 1{,}000 resamples of the seed-level summaries. Because METABRIC is an observational cohort, these contrasts are descriptive rather than causal; their value lies in revealing where along the survival distribution the prognostic separation between groups is most pronounced. They are exploratory: the bands are pointwise, the contrasts were not prespecified, and no multiplicity adjustment is made.

\textbf{ER+ vs.\ ER$-$ (Figure~\ref{fig:metabric-subgroup}a).}
All architectures consistently predict longer survival for ER-positive patients, with $\Delta(\tau)$ increasing from approximately 22--26 months at $\tau=0.1$ to 47--80 months at $\tau=0.9$. This pronounced quantile dependence suggests that ER status is not merely associated with a global location shift in the survival distribution. Instead, the prognostic separation widens toward the upper tail, indicating that ER-negative disease is associated not only with earlier events but also with a substantially reduced long-term survival horizon \citep{Blows2010, Broglio2009}. Such a pattern is difficult to summarize adequately with a single hazard ratio.

\textbf{Hormone therapy (Figure~\ref{fig:metabric-subgroup}b).}
$\Delta(\tau)$ is close to zero at the lower tail (approximately $-1$ to 2 months at $\tau=0.1$) but becomes increasingly negative at higher quantiles (approximately $-14$ to $-34$ months at $\tau=0.9$). This pattern should not be interpreted as an adverse treatment effect. More plausibly, it reflects confounding by indication: patients selected for hormone therapy tend to have underlying disease characteristics associated with different long-term risk trajectories \citep{EBCTCG2011}. The value of the quantile-based view is that it makes this tail-specific separation explicit, rather than collapsing it into a single average contrast.

\begin{figure}[t]
  \centering
  \begin{tabular}{cc}
    \includegraphics[width=0.46\textwidth]{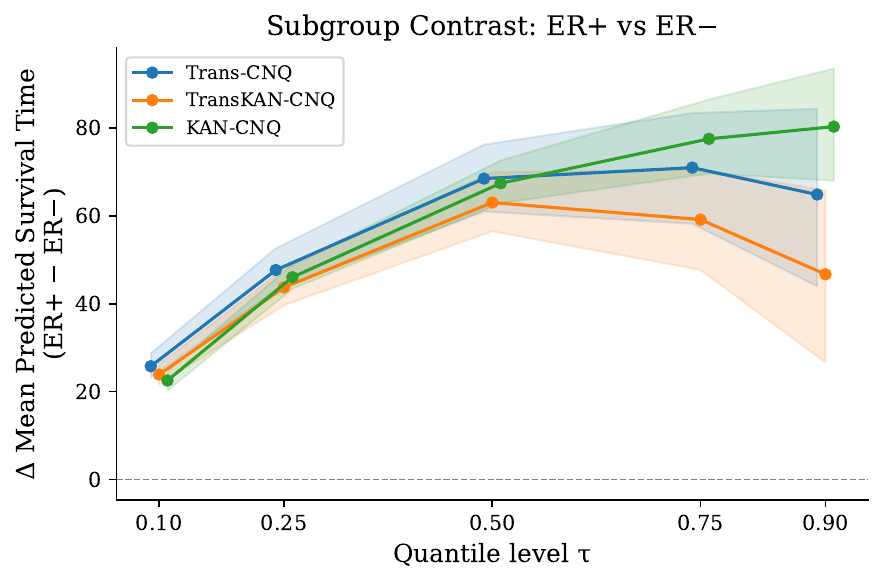} &
    \includegraphics[width=0.46\textwidth]{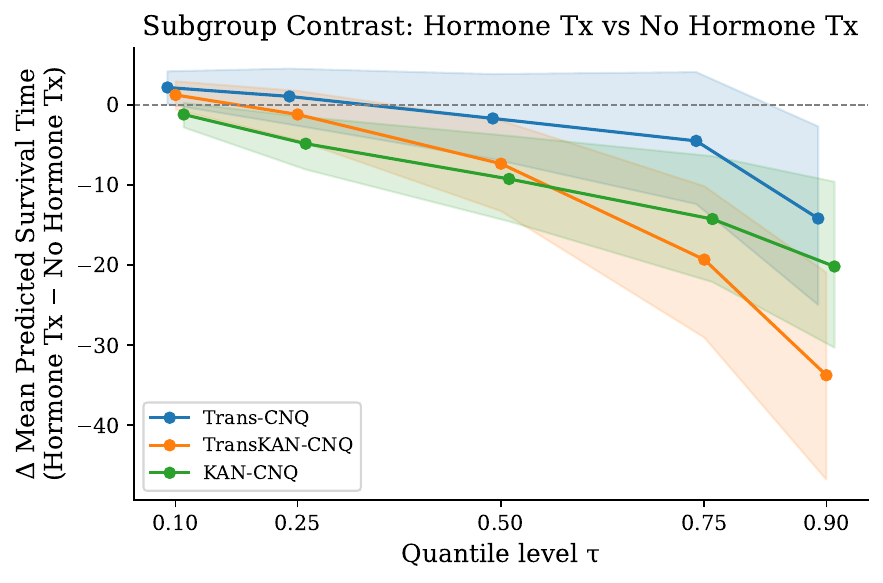} \\[-2pt]
    \small (a) ER+ vs.\ ER$-$ &
    \small (b) Hormone therapy vs.\ none
  \end{tabular}
  \caption{Clinically defined group contrasts $\Delta(\tau)$ for METABRIC with 95\% bootstrap bands.
           Non-constant profiles indicate distributional heterogeneity beyond a single proportional-hazards summary.}
  \label{fig:metabric-subgroup}
\end{figure}

\subsubsection{Representative Patient Profiles}
\label{sec:metabric-profiles}

Supplementary patient-profile materials present predicted quantile milestones for five representative METABRIC patients (seed 42, Trans-CNQ). These examples illustrate three clinically relevant uses of the model: distinguishing short-term risk among patients with similar medians, identifying patients with substantially different prognostic uncertainty, and assessing whether observed outcomes are compatible with the predicted survival distribution.

\textbf{Similar median, different tail risk (A vs.\ B).}
Both patients have comparable medians ($\hat{q}_{0.5} = 91.5$ and $100.6$ months), yet Patient~A's lower tail starts at $\hat{q}_{0.1} = 28.3$ vs.\ 34.0 months.
Patient~A (age 86) experienced an event at 41.8 months, within the predicted lower tail, while Patient~B (age 69) survived to 119.5 months near the median.
A single-point summary would treat these two patients as broadly similar, obscuring the clinically meaningful difference in short-term risk.

\textbf{Similar median, different uncertainty (C vs.\ D).}
Patient~C ($\hat{q}_{0.5}=102.7$, ER+, age 78) has an interval width of 150.9 months, whereas Patient~D ($\hat{q}_{0.5}=119.3$, ER$-$, age 58) has a much wider interval of 267.9 months. This difference indicates materially greater prognostic uncertainty for Patient~D, despite a somewhat longer median prediction, and could plausibly inform the intensity of follow-up or the degree of confidence attached to individual prognostic counseling.

\textbf{Individual calibration (E).}
The observed event ($T=100.3$ months) falls within 1\% of $\hat{q}_{0.5}=99.8$ and well inside the predicted 80\% interval $[\hat{q}_{0.1},\hat{q}_{0.9}]=[34.0,192.4]$, illustrating the plausibility of the individualized forecast on the original time scale. We likewise examined uncertainty stratification by prediction-interval width and found the expected monotone relationship between interval width, pinball loss, and empirical coverage; these supporting results are reported in the Supplementary Material to keep the main text focused on the primary clinical findings.

\subsubsection{Event Projection}
\label{sec:metabric-ep}

A clinically relevant downstream use of individualized survival distributions
is \emph{event projection}: given a calendar data cutoff $s$, how many
cumulative events are expected by a horizon $t>s$?  We compared a latent
projection, a censoring-aware Trans-CNQ projection, and a parametric Weibull
benchmark on the METABRIC test sets (seeds 41--65).  All three overestimate the
observed cumulative count at every cutoff.  The censoring-aware Trans-CNQ
projection is the most stable, with errors of $+18$--$+21\%$ across cutoffs,
whereas the Weibull benchmark is strongly horizon-dependent, overshooting by
$+63.8\%$ when only 24 months of follow-up are available but reaching $+6.9\%$
at the 96-month cutoff.  The Supplementary Material gives the projection
setup, the full error table and the projection curves.  The overestimation is
not a tuning artifact but a structural consequence of the objective: censored
observations enter it only through their weight, so nothing enforces $T_i>C_i$ and
predicted event times are pulled early. A hybrid loss that adds a censoring-aware lower-bound penalty to the
check loss is the natural remedy, which we leave to future work.

\subsection{FLCHAIN}
\label{sec:flchain}

\subsubsection{Secondary Validation Findings}
\label{sec:flchain-fi}

FLCHAIN serves as a secondary validation dataset: it tests whether the
qualitative findings from METABRIC generalize to a very different clinical
setting, and whether the architectures remain interpretively stable under
substantially heavier censoring.  Permutation feature importance (30 repeats
per feature per seed, against 50 for the smaller METABRIC test set; heatmaps in
the Supplementary Material) gives a coherent
and clinically plausible ordering, with age dominating across all $\tau$
(peak $\Delta=29.3$ at $\tau=0.5$), serum $\lambda$ FLC second, and MGUS and
sex negligible in the Transformer-based models.

\textbf{Architectural divergence.}
KAN-CNQ instead assigns the largest importance to creatinine, whereas both
Transformer-based models rank age first (creatinine: $\Delta\approx45.5$ for
KAN-CNQ vs.\ $\approx2.5$ for Trans-CNQ at $\tau=0.5$), despite comparable
predictive accuracy.  The disagreement reflects KAN's learnable B-spline
activations converging to different but equally accurate internal
representations, a non-identifiability that leaves prediction intact while
redistributing attribution across correlated covariates.  For interpretation, this matters: agreement between the two
Transformer variants does not by itself establish that an attribution is
architecture-independent.  Being able to visualize the learned spline components is
therefore not the same as reproducing the same scientific conclusions across
architectures.

\subsubsection{Clinically Defined Group Contrasts}
\label{sec:flchain-subgroup}

Figure~\ref{fig:flchain-subgroup} shows $\Delta(\tau)$ for three clinically defined group contrasts: sex, age (above vs.\ below the per-split median), and MGUS status. As in Section~\ref{sec:metabric-subgroup}, these comparisons are exploratory and descriptive with pointwise bands, intended to summarize prognostic separation across the survival distribution rather than to support causal claims.

\textbf{Sex (Figure~\ref{fig:flchain-subgroup}a).}
TransKAN-CNQ shows modestly positive $\Delta(\tau)$ for females (approximately 18--48 days longer), while Trans-CNQ hovers near zero ($-9$ to $28$ days) with confidence bands overlapping zero at every $\tau$. KAN-CNQ is positive at the lower quantiles but changes sign above $\tau=0.5$, reaching $-77$ days at $\tau=0.9$. The three architectures therefore agree neither on magnitude nor on direction, and no contrast separates from zero at any $\tau$: sex carries little prognostic signal in this cohort once the remaining covariates are accounted for.

\textbf{Age (Figure~\ref{fig:flchain-subgroup}b).}
Older patients have markedly shorter predicted survival across the entire distribution, with $\Delta(\tau)$ ranging from approximately $-40$ to $-257$ days and confidence bands excluding zero for all three architectures at every $\tau$. This is the one contrast on which all three models agree in sign, magnitude, and shape. The separation is smallest at $\tau=0.1$ and widens from $\tau=0.25$ onward, indicating that age compresses the central and upper survival range rather than merely shifting the lower tail.

\textbf{MGUS (Figure~\ref{fig:flchain-subgroup}c).}
Trans-CNQ reveals a crossover pattern: MGUS-positive patients have \emph{longer} predicted survival at lower quantiles ($\Delta \approx +93$ to $+174$ days at $\tau=0.1$--$0.25$) but \emph{shorter} predicted survival at upper quantiles ($\Delta \approx -116$ to $-198$ days at $\tau=0.75$--$0.9$). The other two architectures do not reproduce this shape: TransKAN-CNQ stays close to zero throughout ($-45$ to $-6$ days, with bands covering zero at every $\tau$), while KAN-CNQ increases monotonically to $+1{,}214$ days at $\tau=0.9$. MGUS-positive subjects constitute only about $1.5\%$ of each test split (roughly 24 patients), so all three contrasts rest on a very small group and their bands are correspondingly wide. We regard the MGUS contrast as inconclusive, and report it to illustrate that quantile-level contrasts inherit the sampling limitations of the subgroup defining them.

\begin{figure}[t]
  \centering
  \begin{tabular}{ccc}
    \includegraphics[width=0.31\textwidth]{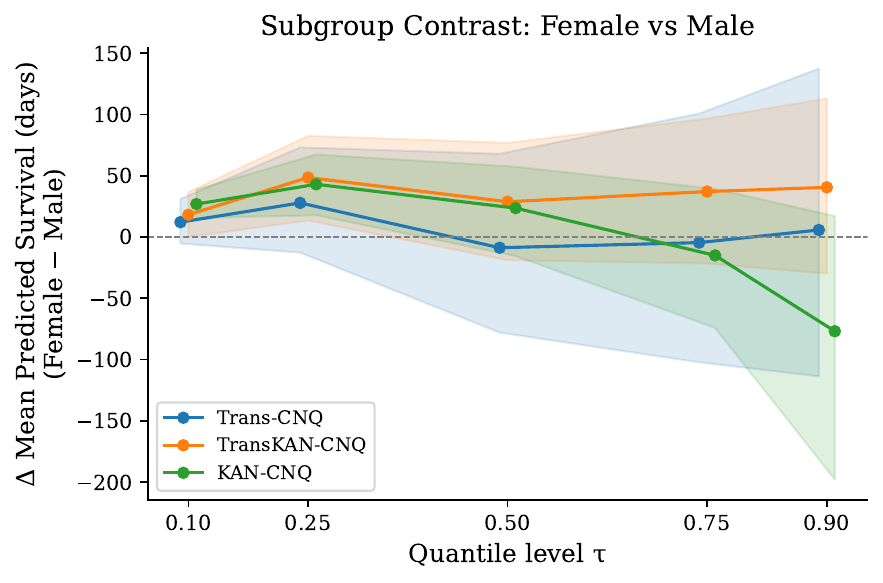} &
    \includegraphics[width=0.31\textwidth]{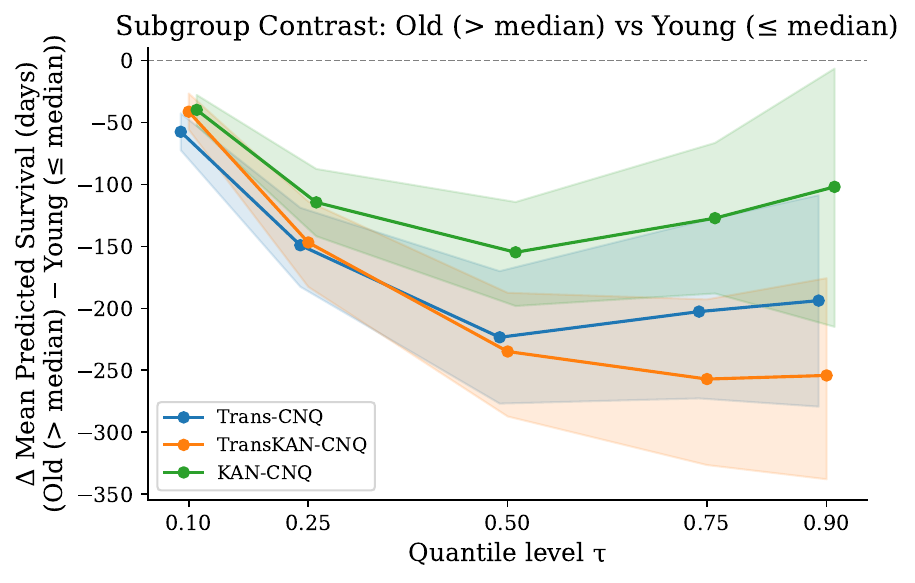} &
    \includegraphics[width=0.31\textwidth]{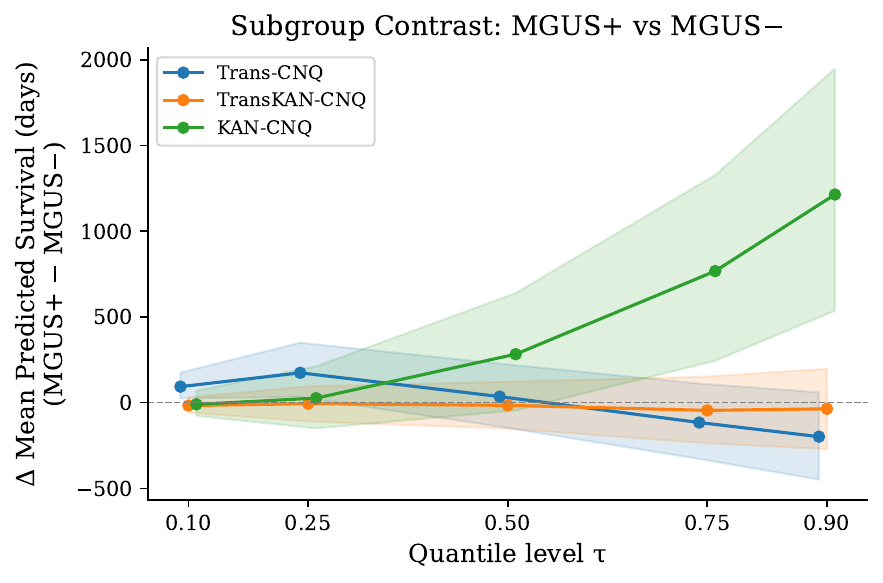} \\[-2pt]
    \small (a) Female vs.\ Male &
    \small (b) Old vs.\ Young &
    \small (c) MGUS+ vs.\ MGUS$-$
  \end{tabular}
  \caption{Clinically defined group contrasts $\Delta(\tau)$ for FLCHAIN with 95\% bootstrap bands.
           The three architectures agree closely on age (b) but not on sex (a) or MGUS (c); the MGUS bands are wide for
           all models because MGUS-positive subjects make up only about $1.5\%$ of each test split.}
  \label{fig:flchain-subgroup}
\end{figure}

Representative patient profiles and uncertainty stratification for FLCHAIN reproduce the METABRIC findings of Section~\ref{sec:metabric-profiles}; the figures and tables are in the Supplementary Material.

\section{Discussion}
\label{sec:discussion}

The proposed CNQ framework attained the lowest IPCW pinball loss on every cohort, with
interval calibration matching DeepQuantreg and bettering the hazard- and
tree-based competitors.

METABRIC and FLCHAIN differ in clinical context, size and censoring, so agreement
between them is replication rather than repetition.  Returning
to the questions in Section~\ref{s:data_motivation}: \textbf{(Q1)} the monotone parameterization yields non-crossing
individualized five-quantile milestones, and interval width behaves as an internal indicator of predictive
difficulty, increasing monotonically with both pinball loss and empirical
coverage. \textbf{(Q2)} quantile-specific importance and group contrasts reveal
covariate effects that vary materially across the survival distribution---the
METABRIC ER contrast widens steadily from the lower to the upper tail, and the
FLCHAIN age contrast is separated from zero by all three architectures at every
$\tau$---heterogeneity that a single hazard ratio would obscure. \textbf{(Q3)} the Transformer-based variants remained the
most accurate methods even on the small or heavily censored cohorts (GBSG-500,
NKI70), although the larger cross-split variability warrants caution for
individual-level predictions in data-limited regimes.

On the theoretical side, the rates are model-family upper bounds subject to the
scope conditions of Remark~\ref{rem:scope}, and every bound deteriorates as
$G_\star$ decreases.

Empirically, feature attributions were architecture-dependent even where
predictive accuracy was not, so we recommend comparing attributions across
backbones rather than relying on one.  Finally, the real-data subgroup and
feature-importance analyses are descriptive and hypothesis-generating rather
than causal, and should be confirmed in purpose-designed studies.

Future work falls into two directions.  On the applied side, the framework
extends naturally to competing risks and to time-dependent or longitudinally
measured covariates, and censoring-aware training objectives that use
lower-bound information directly should improve far-tail calibration and event
projection.  On the theoretical side, bounds for the fitted TransKAN-CNQ architecture, and for
cubic-spline EfficientKAN without the degree-$p$ realization used here, remain open.






\begin{acks}[Acknowledgments]
\textbf{Corresponding authors:} Zhe Qu (zhe.qu@servier.com), and Hongtu Zhu (htzhu@email.unc.edu).

This work was carried out in part during S.~Huang's internship at Servier
Pharmaceuticals, whose support is gratefully acknowledged.
\end{acks}

\begin{funding}

\end{funding}


\appendix
\clearpage
\section{Supplementary Dataset Diagnostics}
\label{sec:dataset-diagnostics}

\subsection{Data provenance and availability}
\label{sec:data-availability}

All six cohorts are public. We use previously preprocessed versions rather than
the raw sources, so the sample sizes and covariate counts reported in the main
text refer to those versions and need not match the original publications.
METABRIC, GBSG and SUPPORT are taken from the \texttt{pycox} distribution
\citep{kvamme2019time}, which redistributes the preprocessing of
\citet{katzman2018deepsurv}; GBSG-500 is a random 500-subject subsample of GBSG
drawn once and held fixed across seeds. FLCHAIN is obtained from the
\texttt{flchain} dataset of the R \texttt{survival} package
\citep{Dispenzieri2012}. NKI70 is obtained from the \texttt{nki70} dataset of
the R \texttt{penalized} package \citep{van2002gene}; we retain age, estrogen-receptor
status and six genes of the 70-gene signature (NUSAP1, FGF18, ZNF533, COL4A2,
CDCA7, MCM6), all standardized. Subjects with missing covariates are excluded
and, following the training pipeline, the few records with zero observed
duration are dropped. The scripts that build the $25$ seeded
train/validation/test partitions are released with the implementation, so the
exact splits underlying every reported number can be regenerated.

Before turning to the modeling results, Figure~\ref{fig:km-curves} documents the
marginal survival behavior of the six real-data cohorts used throughout the
paper. Each panel shows the Kaplan--Meier estimate of the marginal survival
function $\widehat S(t)=\mathbb P(T>t)$ together with pointwise $95\%$
confidence bands, so that the overall event rate, the speed of early decline,
and the length of the right tail can be read off directly for each dataset.
The six cohorts are deliberately heterogeneous: they range from the small,
heavily censored NKI70 study ($n=144$, $67\%$ censoring) to the large SUPPORT
cohort ($n=8{,}873$, $\approx32\%$ censoring) and the large, heavily censored
FLCHAIN cohort ($n=7{,}874$, $\approx72\%$ censoring), and they span oncology
trials, genomic prognostic studies, and a population-based cohort. This spread
of sample sizes, censoring proportions, and survival shapes is what allows the
real-data benchmark to stress-test distributional prediction across a broad
range of operating conditions; the curves in Figure~\ref{fig:km-curves} provide
the unconditional reference against which the conditional, covariate-dependent
quantile predictions of the proposed models are evaluated in the remainder of
this supplement.

\begin{center}
\includegraphics[width=\textwidth]{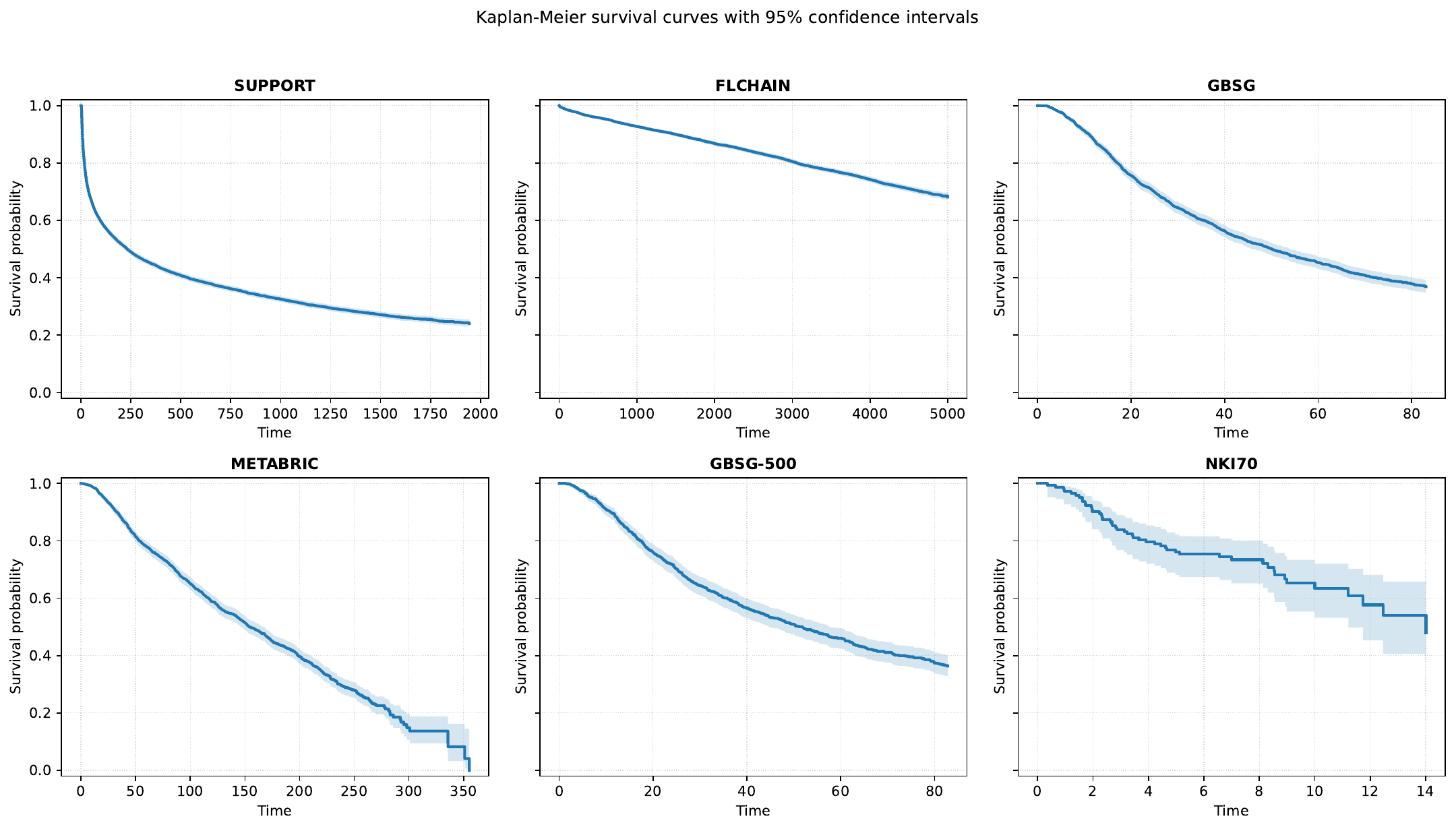}

\vspace{0.5em}
\refstepcounter{figure}
\label{fig:km-curves}
\parbox{0.95\textwidth}{
\small
\textbf{Figure~\thefigure.}
Kaplan--Meier survival curves with pointwise $95\%$ confidence intervals
(Greenwood's formula, shaded bands) for the six real-data cohorts analyzed
in the paper---GBSG, GBSG-500, METABRIC, SUPPORT, NKI70, and FLCHAIN. In
each panel the horizontal axis is the follow-up time (in the native
time unit of the cohort) and the vertical axis is the estimated marginal
survival probability $\widehat S(t)=\mathbb P(T>t)$, with the number-at-risk
declining as events accrue and subjects are censored. The panels illustrate
the wide range of operating conditions used to evaluate the methods: cohort
sizes span $n=144$ (NKI70) to $n=8{,}873$ (SUPPORT), censoring proportions
range from about $32\%$ (SUPPORT) to $72\%$ (FLCHAIN), and the curves differ
markedly in their overall event rate, the steepness of early decline, and
the length of the right tail. These marginal descriptives provide the
baseline against which the conditional, covariate-dependent quantile
predictions of the proposed models are assessed.
}
\end{center}

\clearpage
\section{Technical Result}

\begin{lemma}[IPCW identity for the marginal Kaplan--Meier weight]
\label{lem:ipcw}
Assume $(T,X)\perp C$.  Let $Y=\min(T,C)$,
$\delta=\mathbb I\{T\leq C\}$, $G(t)=\mathbb P(C>t)$, and
$G(t^-)=\mathbb P(C\geq t)$.  Assume \(G(T^-)>0\) almost surely.
For every measurable \(a(X)\) and \(\tau\in(0,1)\)
such that \(\rho_\tau\{\log T-a(X)\}\) is integrable,
\[
\mathbb E\!\left[
\frac{\delta}{G(Y^-)}
\rho_\tau\{\log Y-a(X)\}\right]
=
\mathbb E\!\left[\rho_\tau\{\log T-a(X)\}\right].
\]
\end{lemma}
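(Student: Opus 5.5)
On the event \(\{\delta=1\}\) we have \(Y=T\), so the integrand on the left equals
\[
\frac{\mathbb I\{T\le C\}}{G(T^-)}\,\rho_\tau\{\log T-a(X)\}.
\]
The plan is therefore to condition on \((T,X)\) and use the independence \((T,X)\perp C\) to integrate out \(C\). The steps below carry this out in order.

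\textbf{Step 1 (rewriting on the event set).} Write
\[
\frac{\delta}{G(Y^-)}\rho_\tau\{\log Y-a(X)\}=\frac{\mathbb I\{C\ge T\}}{G(T^-)}\,\rho_\tau\{\log T-a(X)\}.
\]
This identity holds pointwise. When \(\delta=0\) both sides vanish. When \(\delta=1\) we have \(Y=T\), and \(T\le C\) is the same event as \(C\ge T\). The assumption \(G(T^-)>0\) almost surely makes the right side well defined.

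\textbf{Step 2 (conditioning).} The quantity \(\rho_\tau\{\log T-a(X)\}/G(T^-)\) is a nonnegative measurable function of \((T,X)\). Apply the tower property with respect to \(\sigma(T,X)\). Since \(C\) is independent of \((T,X)\), the conditional expectation of the indicator is
\[
\mathbb E[\mathbb I\{C\ge T\}\mid T,X]=\mathbb P(C\ge t)\big|_{t=T}=G(T^-).
\]
This is justified by Fubini's theorem applied to the product law of \((T,X)\) and \(C\). All terms are nonnegative, because \(\rho_\tau\ge0\) for \(\tau\in(0,1)\), so Tonelli's theorem applies without any integrability concern.

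\textbf{Step 3 (cancellation).} The factor \(G(T^-)\) cancels, which leaves \(\mathbb E[\rho_\tau\{\log T-a(X)\}]\). This expectation is finite by hypothesis, so the left side is finite as well and the identity holds as an equality of finite numbers.

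\textbf{Main obstacle.} The only delicate point is the interplay between the left limit \(G(t^-)=\mathbb P(C\ge t)\) and the convention \(\delta=\mathbb I\{T\le C\}\). These must be matched correctly: ties \(T=C\) count as events, so the conditional probability must be \(\mathbb P(C\ge t)\) rather than \(\mathbb P(C>t)\). I would also remark on what happens on the null set where \(G(T^-)=0\), which is excluded by assumption. Nonnegativity of the check loss removes any need to split into positive and negative parts.
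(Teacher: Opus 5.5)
Your proposal is correct and follows the same route as the paper: use $Y=T$ on $\{\delta=1\}$, condition on $(T,X)$, apply $(T,X)\perp C$ to get $\mathbb P(C\ge T\mid T,X)=G(T^-)$, cancel, and take expectations, with the same emphasis on matching the left limit $G(t^-)$ to the tie convention $\delta=\mathbb I\{T\le C\}$. Your appeal to Tonelli via the nonnegativity of $\rho_\tau$ is a small rigor point the paper leaves implicit.
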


\begin{proof}
On $\{\delta=1\}$, $Y=T$.  Conditioning on $(T,X)$ and using
$(T,X)\perp C$ gives
\begin{align*}
\mathbb E\!\left[
\left.
\frac{\delta}{G(Y^-)}
\rho_\tau\{\log Y-a(X)\}
\right|T,X\right]
&=
\frac{\rho_\tau\{\log T-a(X)\}}{G(T^-)}
\mathbb P(C\geq T\mid T,X)\\
&=\rho_\tau\{\log T-a(X)\}.
\end{align*}
Taking expectations proves the claim.  The left limit is essential here:
with the convention $G(t)=\mathbb P(C>t)$, one has
$G(t^-)=\mathbb P(C\geq t)$.
\end{proof}

\section{Proof of Theorem~\ref{Risk_error}}
\label{appendix:proof}
Write $P_N$ for the empirical measure and abbreviate
\[
\ell_f(v,x)=\frac1K\sum_{k=1}^K
\rho_{\tau_k}\{v-f_k(x)\},\qquad
q_f(v,x)=\ell_f(v,x)-\ell_{f_N^\star}(v,x).
\]
The check loss is one-Lipschitz in its prediction argument.  Hence
\begin{equation}\label{eq:q-envelope}
|q_f(v,x)|\leq\frac1K\sum_{k=1}^K
|f_k(x)-f_{N,k}^\star(x)|\leq2M_f.
\end{equation}
Define the true-weight and plug-in centered losses
\[
    h_f(O)=\frac{\delta}{G(Y^-)}q_f(W,X),\qquad
    \widehat h_f(O)=\frac{\delta}{\widehat G(Y^-)}q_f(W,X).
\]
Both ratios are defined as zero on \(\{\delta=0\}\).
Lemma~\ref{lem:ipcw} gives
\begin{equation}\label{eq:Ph-risk}
Ph_f=L(f)-L(f_N^\star)=R_{\mathcal F_N}(f).
\end{equation}

\medskip\noindent\textit{Step 1: ERM and the estimated censoring weights.}\quad
Because $\widehat f_N$ minimizes $\widehat L_N$,
$P_N\widehat h_{\widehat f_N}\leq0$.  On the event
$\Delta_N\leq G_\star/2$, equations~\eqref{eq:q-envelope}--\eqref{eq:Ph-risk}
therefore imply
\begin{align}
R_{\mathcal F_N}(\widehat f_N)
&=(P-P_N)h_{\widehat f_N}
  +P_N(h_{\widehat f_N}-\widehat h_{\widehat f_N})
  +P_N\widehat h_{\widehat f_N}\notag\\
&\leq
\sup_{f\in\mathcal F_N}(P-P_N)h_f
+\frac{2M_f\Delta_N}{G_\star(G_\star-\Delta_N)}
\notag\\
\leq
\sup_{f\in\mathcal F_N}(P-P_N)h_f
+\frac{4M_f\Delta_N}{G_\star^2}.          \label{eq:erm-decomp}
\end{align}

\medskip\noindent\textit{Step 2: empirical \(L_2\) contraction.}\quad
For $f,g\in\mathcal F_N$, Cauchy--Schwarz over the $K$ outputs gives
\begin{align}
\left\{\sum_{i=1}^N
|h_f(O_i)-h_g(O_i)|^2\right\}^{1/2}
&\leq
\frac1{G_\star}
\left[\sum_{i=1}^N
\left\{\frac1K\sum_{k=1}^K
|f_k(X_i)-g_k(X_i)|\right\}^2\right]^{1/2}\notag\\
&\leq
\frac1{G_\star\sqrt K}
\left\{ \sum_{i=1}^N\sum_{k=1}^K
|f_k(X_i)-g_k(X_i)|^2\right\}^{1/2}.       \label{eq:l2-contraction}
\end{align}
Consequently, if
$\mathcal H_N(O_{1:N})=\{(h_f(O_i))_{i=1}^N:f\in\mathcal F_N\}$,
then
\begin{equation}\label{eq:loss-entropy}
\log\mathcal N\{\mathcal H_N(O_{1:N}),\|\cdot\|_2,u\}
\leq\frac{A_N}{K G_\star^2u^2}.
\end{equation}
This is the point at which the empirical Frobenius covering result of
\citet{zhang2024generalization} enters; no population-to-supremum-norm
conversion is required.

\medskip\noindent\textit{Step 3: entropy integral.}\quad
Conditional on the observations, Dudley's truncated entropy integral,
in the form used in Lemmas 2--3 of \citet{zhang2024generalization}, and
\eqref{eq:loss-entropy} yield
\begin{align}
\widehat{\mathfrak R}_N(\mathcal H_N)
&\leq
\inf_{0<a\leq 2M_f\sqrt N/G_\star}\left[
\frac{4a}{\sqrt N}
+\frac{12}{N}\int_a^{2M_f\sqrt N/G_\star}
\sqrt{\frac{A_N}{K G_\star^2u^2}}\,du
\right]\notag\\
&\leq
C\frac{\sqrt{A_N}}{G_\star\sqrt K\,N}
\left\{1+\log_+\!\left(
\frac{M_f\sqrt K\,N}{\sqrt{A_N}}\right)\right\}.    \label{eq:dudley}
\end{align}
For
\(a_0=\sqrt{A_N}/(G_\star\sqrt{KN})\), use \(a=a_0\) when
\(a_0\leq2M_f\sqrt N/G_\star\).  In the complementary case the trivial
bound \(\widehat{\mathfrak R}_N(\mathcal H_N)\leq2M_f/G_\star\) is no larger
than a universal multiple of the right-hand side of~\eqref{eq:dudley}.
Here and below $C$ denotes a universal constant whose value may change
from line to line.  Symmetrization and the standard bounded empirical
process concentration inequality, using
$|h_f|\leq2M_f/G_\star$, give, with probability at least
$1-ce^{-\eta}$,
\begin{align}
\sup_{f\in\mathcal F_N}(P-P_N)h_f
\leq C\bigg[
&\frac{\sqrt{A_N}}{G_\star\sqrt K\,N}
\left\{1+\log_+\!\left(
\frac{M_f\sqrt K\,N}{\sqrt{A_N}}\right)\right\}\notag\\
&+\frac{M_f}{G_\star}
\left\{\sqrt{\frac{\eta+1}{N}}+\frac{\eta+1}{N}\right\}
\bigg].                                             \label{eq:ep-bound}
\end{align}

\medskip\noindent\textit{Step 4: conclusion.}\quad
Intersect the event in~\eqref{eq:ep-bound} with the Kaplan--Meier event
in (A2), substitute $\Delta_N\leq\kappa_N(\eta)$ into
\eqref{eq:erm-decomp}, and absorb numerical constants into $C$.  This is
exactly the bound in Theorem~\ref{Risk_error}, with failure probability at most
$(c+c_G)e^{-\eta}$.

For completeness, the DKW--Kaplan--Meier inequality
of \citet{bitouze1999dkw}, also quoted in
\citet{goldberg2019hoeffding}, states on a fixed identifiable time range
$[0,\tau]$,
\[
\mathbb P\!\left[
\sqrt N\,S_\tau
\sup_{0<t\leq\tau}|\widehat G(t^-)-G(t^-)|
\geq\sqrt{\eta/2}+D_o/2\right]
\leq\frac52e^{-\eta},
\]
where $S_\tau=\mathbb P(T\geq\tau)$.  Thus the abstract quantity
$\kappa_N(\eta)$ in (A2) has the asserted $N^{-1/2}$ order whenever the
targeted event-time range lies inside such an identifiable interval and
$S_\tau$ is bounded away from zero.
It does not by itself verify (A2) over an entire unbounded support,
nor at a bounded terminal point with \(S_\tau=0\).
Accordingly, no verification of (A2) over the full continuous
event-time support is claimed here.

\section{Complexity-Indexed Architecture Bounds and Approximation Rates}
\label{app:architecture-theory}

This section retains architecture complexity before choosing a
sample-size-dependent sieve and then bounds approximation and estimation
error inside the same class.  Throughout, \(q=(q_1,\ldots,q_K)\) denotes the
true conditional log-time quantile vector and \(\Phi\) is the
softplus--cumulative-sum map in the main text.  The covariate dimension
\(p\) and number of target quantiles \(K\) are held fixed as the sieve index
and sample size vary.  Its Jacobian satisfies
\begin{equation}\label{eq:app-Phi-lip}
 \|\Phi(r)-\Phi(\widetilde r)\|_2
 \leq L_\Phi\|r-\widetilde r\|_2,\qquad
 L_\Phi\leq\{K(K+1)/2\}^{1/2}.
\end{equation}
Because check loss is one-Lipschitz in its prediction argument,
\begin{equation}\label{eq:app-risk-lip}
 0\leq L(f)-L(q)
 \leq \frac1K\mathbb E\|f(X)-q(X)\|_1
 \leq K^{-1/2}
       \{\mathbb E\|f(X)-q(X)\|_2^2\}^{1/2}.
\end{equation}

\subsection{KAN: arbitrary complexity}
\label{app:kan-complexity}

Let
\[
 s_{\rm KAN}
 =\{L,(d_\ell),(p_\ell),
     (B_\ell,c_\ell,C_\ell,\rho_\ell)_{\ell\leq L}\}
\]
collect the depth, widths, basis counts, coefficient budgets, basis
Lipschitz bounds, layer offsets, and layer Lipschitz constants of a
basis-expansion KAN; include the affine \(K\)-output head as its final
layer.  Write
\(\widetilde d(s)=\max_\ell d_\ell\),
\(\widetilde p(s)=\max_\ell p_\ell\), and
\(C(s)=\max_\ell C_\ell\).
For this subsection, set
\(\bar\rho_\ell=1\vee\rho_\ell\) and evaluate
\eqref{eq:app-alpha}--\eqref{eq:app-kan-oracle} with
\(\bar\rho_\ell\) in place of \(\rho_\ell\); this only increases
\(\alpha_{i,N}(s)\).  Writing \(R_0=D_N\) and
\(R_\ell=C(s)+\bar\rho_\ell R_{\ell-1}\), we have
\(R_\ell\geq R_{\ell-1}\).  The basis-function Lipschitz step is
therefore bounded by
\(c_\ell R_{\ell-1}\leq c_\ell R_\ell\), which validates the possibly
enlarged coefficient used below.
For a realized design with
\(\|\mathbf X\|_F\leq D_N\), define
\begin{align}
 \widetilde\alpha_N(s)
 &=\sum_{i=1}^L\alpha_{i,N}(s),\notag\\
 \alpha_{i,N}(s)
 &=B_i^{2/3}c_i^{2/3}
 \left(\prod_{j=i+1}^L\rho_j\right)^{2/3}
 \left\{
 C(s)\sum_{j=0}^{i-1}\prod_{k=i-j+1}^{i}\rho_k
 +D_N\prod_{k=1}^{i}\rho_k
 \right\}^{2/3},                                             \label{eq:app-alpha}\\
 \mathfrak E_N^{\rm KAN}(s)
 &=L_\Phi^2\widetilde\alpha_N(s)^3
   \log\{2\widetilde d(s)\widetilde p(s)\}.                  \label{eq:app-kan-E}
\end{align}
Empty products equal one.

\begin{corollary}[KAN oracle bound at arbitrary complexity]
\label{cor:app-kan-oracle}
Let \(s=s_N\) be any deterministic architecture-and-budget sequence
satisfying Assumptions 1--2 of
\citet{zhang2024generalization}.  Suppose
\(\|X_i\|_2\leq B_X\) almost surely, take \(D_N=B_X\sqrt N\), and intersect
the corresponding non-crossing class with
\(\max_k\|f_k\|_\infty\leq M_f\).  Under (A1)--(A2) and
\(\kappa_N(\eta)=O(N^{-1/2})\) at fixed confidence,
\begin{align}
 R_{\mathcal F_s}(\widehat f_{N,s})
 =O_{\mathbb P}\Bigg[&
 \frac{\sqrt{\mathfrak E_N^{\rm KAN}(s)}}{N}
 \left\{1+\log_+\!\left(
 \frac{M_f\sqrt K\,N}
      {\sqrt{\mathfrak E_N^{\rm KAN}(s)}}\right)\right\}
 +\frac1{\sqrt N}\Bigg].                                    \label{eq:app-kan-oracle}
\end{align}
If \(s\) is fixed, then
\(\mathfrak E_N^{\rm KAN}(s)=O(N)\) and the rate reduces to
\(O_{\mathbb P}(\log N/\sqrt N)\).
\end{corollary}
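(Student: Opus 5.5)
The plan is to apply Theorem~\ref{Risk_error} with an explicit entropy constant \(A_N\). It therefore suffices to verify the empirical Frobenius entropy condition~\eqref{eq:empirical_entropy} for the non-crossing KAN class \(\mathcal F_s(\mathbf X)\) with \(A_N\) a constant multiple of \(\mathfrak E_N^{\rm KAN}(s)\).

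\emph{Step 1: covering the raw KAN outputs.} Let \(\mathcal G_s\) be the raw KAN class with its affine \(K\)-output head, and let \(\mathcal G_s(\mathbf X)\subset\mathbb R^{N\times K}\) be its realization on a design with \(\|\mathbf X\|_F\le B_X\sqrt N=D_N\). Under Assumptions 1--2 of \citet{zhang2024generalization}, their covering theorem gives
\[
\log\mathcal N\{\mathcal G_s(\mathbf X),\|\cdot\|_F,u\}\le \frac{\widetilde\alpha_N(s)^3\log\{2\widetilde d(s)\widetilde p(s)\}}{u^2}.
\]
Here \(\alpha_{i,N}(s)\) is exactly the per-layer sensitivity term in \eqref{eq:app-alpha}. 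Replacing \(\rho_\ell\) by \(\bar\rho_\ell\) only enlarges the bound, and the monotonicity \(R_\ell\ge R_{\ell-1}\) noted before the corollary justifies using \(c_\ell R_\ell\) as the basis-Lipschitz factor at layer \(\ell\). I expect this step, which matches their layerwise budgets to the quantities defined here (including \(D_N\) scaling with \(\sqrt N\)), to be the main bookkeeping obstacle. I would take their result as a black box and simply check that the head is covered as a final layer.

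\emph{Step 2: pushing the cover through \(\Phi\) and the sup-norm truncation.} The non-crossing map \(\Phi\) acts row-wise. By \eqref{eq:app-Phi-lip}, applying \(\Phi\) to each row of a \(u/L_\Phi\)-cover of \(\mathcal G_s(\mathbf X)\) gives a \(u\)-cover of \(\Phi\circ\mathcal G_s(\mathbf X)\) in Frobenius norm. Intersecting with \(\{\max_k\|f_k\|_\infty\le M_f\}\) yields a subset. A cover of a subset can be recentred into the subset at the cost of doubling the radius, which changes only constants. Hence~\eqref{eq:empirical_entropy} holds with \(A_N=4L_\Phi^2\widetilde\alpha_N^3\log(2\widetilde d\widetilde p)=4\mathfrak E_N^{\rm KAN}(s)\), almost surely in \(\mathbf X\).

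\emph{Step 3: plugging in and the fixed-\(s\) case.} Apply Theorem~\ref{Risk_error} at fixed \(\eta\). The \(K\) and \(G_\star\) factors are constants, the \(M_f\) terms give \(O(N^{-1/2})\), and \(M_f\kappa_N(\eta)/G_\star^2=O(N^{-1/2})\) by hypothesis. Choosing \(\eta\) large makes the failure probability arbitrarily small, which gives \eqref{eq:app-kan-oracle} in the \(O_{\mathbb P}\) sense; the factor \(4\) inside the logarithm is absorbed. For fixed \(s\), every \(\alpha_{i,N}\) is \(O(D_N^{2/3})=O(N^{1/3})\), so \(\widetilde\alpha_N^3=O(N)\) and \(\mathfrak E_N^{\rm KAN}=O(N)\). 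The first term is then \(O(N^{-1/2}\{1+\log N\})\), and the bound reduces to \(O_{\mathbb P}(\log N/\sqrt N)\).
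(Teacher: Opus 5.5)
Your proposal is correct and takes essentially the same route as the paper: you use Theorem~1 of \citet{zhang2024generalization} to cover the raw KAN output matrix, inflate the entropy coefficient by \(L_\Phi^2\) via the row-wise Lipschitz bound~\eqref{eq:app-Phi-lip}, apply Theorem~\ref{Risk_error}, and note that for fixed \(s\) each brace in~\eqref{eq:app-alpha} is \(O(\sqrt N)\), so \(\mathfrak E_N^{\rm KAN}(s)=O(N)\). Your additional steps---recentring the cover after intersecting with the \(M_f\)-envelope, absorbing the resulting factor \(4\), and passing from the high-probability bound to \(O_{\mathbb P}\) at fixed \(\eta\)---spell out details the paper leaves implicit rather than changing the argument.
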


\begin{proof}
Theorem 1 of \citet{zhang2024generalization} gives the empirical
Frobenius covering coefficient
\(\widetilde\alpha_N(s)^3
\log\{2\widetilde d(s)\widetilde p(s)\}\) for the raw KAN output matrix.
Equation~\eqref{eq:app-Phi-lip} multiplies it by \(L_\Phi^2\), so
Theorem~\ref{Risk_error} gives~\eqref{eq:app-kan-oracle}.  For fixed
\(s\), every brace in~\eqref{eq:app-alpha} is \(O(\sqrt N)\).  Hence
\(\alpha_{i,N}=O(N^{1/3})\),
\(\widetilde\alpha_N=O(N^{1/3})\), and
\(\mathfrak E_N^{\rm KAN}=O(N)\).
\end{proof}

\begin{remark}
Widths, basis counts, and budgets may depend on \(N\) in
Corollary~\ref{cor:app-kan-oracle}, but their dependence must remain in
\eqref{eq:app-alpha}--\eqref{eq:app-kan-oracle}.  In particular, the
fixed-class order \(\mathfrak E_N^{\rm KAN}=O(N)\) cannot be inserted after
choosing a growing spline resolution.
\end{remark}

\subsection{KAN: approximation and a matched growing sieve}
\label{app:kan-holder}

Assume \(X\in[0,1]^p\), and, for some \(\beta>0\), \(B_q<\infty\), and
\(\gamma>0\),
\begin{equation}\label{eq:app-holder}
 \max_{k\leq K}\|q_k\|_{C^\beta([0,1]^p)}\leq B_q,\qquad
 \inf_x\{q_k(x)-q_{k-1}(x)\}\geq\gamma,\quad k\geq2.
\end{equation}
Define the raw target
\begin{equation}\label{eq:app-raw-target}
 r^\circ_1=q_1,\qquad
 r^\circ_k=\log\{\exp(q_k-q_{k-1})-1\},\quad k\geq2.
\end{equation}
The positive gap and the standard Hölder composition inequality imply
\(\max_k\|r^\circ_k\|_{C^\beta}\leq C_r\).

Let \(\{B_{\nu,J}\}_{\nu\in\mathcal I_J}\) be a degree-\(m\) B-spline
basis on a quasi-uniform \(J\)-cell partition, where \(m\geq2\) and
\(\beta\leq m+1\), and put \(M_J=|\mathcal I_J|=J+O(1)\).  A stable
tensor-product quasi-interpolant gives coefficients, uniformly bounded in
\(J\), such that
\begin{align}
 r_{k,J}(x)
 &=\sum_{\boldsymbol\nu\in\mathcal I_J^p}
 c_{k,\boldsymbol\nu,J}
 \prod_{j=1}^pB_{\nu_j,J}(x_j),\notag\\
 \max_{k\leq K}\|r_{k,J}-r^\circ_k\|_\infty
 &\leq C J^{-\beta};
                                                               \label{eq:app-tensor}
\end{align}
see \citet{schumaker2007spline}.

\begin{lemma}[Two-layer spline-KAN realization]
\label{lem:app-kan-realization}
Suppose the second KAN layer has a fixed edge dictionary whose span
contains \(t\mapsto t^p\) on the bounded node range; a fixed-grid
B-spline space of degree at least \(p\) suffices.  Then \(r_J\) in
\eqref{eq:app-tensor} has an exact two-layer KAN realization with hidden
width \(K2^pM_J^{p-1}=O(J^{p-1})\).  If every edge coefficient of this
architecture is free, the full dense class has \(P_J=O(J^p)\) scalar
coefficients.
\end{lemma}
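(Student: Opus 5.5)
The plan is to realize each tensor-product term through a product-to-sum identity. The KAN layer only sums univariate edge functions at each node and then applies a univariate edge function again. So the product \(\prod_{j=1}^p u_j\), with \(u_j=B_{\nu_j,J}(x_j)\), has to be written as a signed sum of \(p\)-th powers of linear combinations of the \(u_j\). The identity I will use is the polarization formula
\[
 \prod_{j=1}^p u_j=\frac{1}{2^p p!}\sum_{\epsilon\in\{\pm1\}^p}
 \Bigl(\prod_{j=1}^p\epsilon_j\Bigr)\Bigl(\sum_{j=1}^p\epsilon_j u_j\Bigr)^p .
\]
Each hidden node carries one signed linear combination \(\sum_j\epsilon_j u_j\). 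This is realized exactly in the first layer: the edge \(x_j\mapsto \epsilon_j B_{\nu_j,J}(x_j)\) lies in the degree-\(m\) first-layer spline span. The second layer applies \(t\mapsto t^p\), which is in the dictionary by hypothesis. It then multiplies by the coefficient \(c_{k,\boldsymbol\nu,J}\prod_j\epsilon_j/(2^pp!)\) and sums over hidden nodes into output \(k\).

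Used naively, this gives one hidden node per pair \((\boldsymbol\nu,\epsilon)\), which means \(K2^pM_J^p\) nodes. That is too many by a factor \(M_J\). To get width \(K2^pM_J^{p-1}\), I will fix \((\nu_2,\dots,\nu_p)\) and \(\epsilon\), and absorb the sum over \(\nu_1\) into the first coordinate's edge function. Because the map \(u_1\mapsto(\epsilon_1u_1+\cdots)^p\) is nonlinear, a literal absorption does not work. The fix is to apply polarization to the partially summed product
\[
 \Bigl(\sum_{\nu_1}c_{k,\boldsymbol\nu,J}B_{\nu_1,J}(x_1)\Bigr)\prod_{j\ge2}B_{\nu_j,J}(x_j)
\]
instead of to the individual products. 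The first factor is itself a single univariate spline \(S_{k,\nu_{2:p}}(x_1)\). It therefore sits in the span of one first-layer edge, with coefficients equal to the \(c\)'s, which are uniformly bounded. Each hidden node is then indexed by \((k,\nu_2,\dots,\nu_p,\epsilon)\), giving exactly \(K2^pM_J^{p-1}=O(J^{p-1})\) nodes. Node values stay in a bounded range, since B-splines lie in \([0,1]\) and the coefficients are bounded. On that range the dictionary contains \(t^p\), and a fixed-grid degree-\(\ge p\) spline reproduces polynomials of degree \(p\) exactly, which justifies the parenthetical sufficiency claim.

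The parameter count is routine. In the dense class every first-layer edge carries \(M_J\) coefficients. There are \(p\cdot K2^pM_J^{p-1}\) such edges, giving \(O(J^p)\) coefficients. The second layer contributes a fixed dictionary size per edge times \(O(J^{p-1})\) edges. The total is \(P_J=O(J^p)\) for fixed \(p\) and \(K\).

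The main obstacle I expect is the width-reduction step, that is, checking that the sum over \(\nu_1\) can be folded into one edge without breaking exactness. The polarization identity is multilinear in \(u_1\), so linearity in the \(\nu_1\)-sum holds, and this is exactly what makes the folding legitimate. I would verify this carefully first.
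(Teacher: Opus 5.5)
Your proposal is correct and matches the paper's proof: the paper also folds the coefficient sum over one index into a single univariate spline edge (it uses $\nu_p$ where you use $\nu_1$), applies the polarization identity to the resulting $p$-fold product, and counts $p\cdot K2^pM_J^{p-1}\cdot O(J)=O(J^p)$ first-layer coefficients. One small tightening is needed: the bound on the node range that is uniform in $J$ requires the B-spline partition of unity (or local support), not just $B_{\nu,J}\in[0,1]$. Without it, $\bigl|\sum_{\nu_1}c_{k,\boldsymbol\nu,J}B_{\nu_1,J}(x_1)\bigr|$ is only bounded by a constant times $M_J$.
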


\begin{proof}
For
\(\boldsymbol\nu_{-p}=(\nu_1,\ldots,\nu_{p-1})\), write
\[
 u_{k,\boldsymbol\nu_{-p},j}(t)=B_{\nu_j,J}(t),\quad j<p,
 \qquad
 u_{k,\boldsymbol\nu_{-p},p}(t)
 =\sum_{\nu_p}c_{k,(\boldsymbol\nu_{-p},\nu_p),J}B_{\nu_p,J}(t).
\]
Then \(r_{k,J}\) is the sum over \(\boldsymbol\nu_{-p}\) of
\(\prod_{j=1}^pu_{k,\boldsymbol\nu_{-p},j}(x_j)\).  The polarization
identity
\[
 \prod_{j=1}^pu_j
 =
 \frac1{2^pp!}\sum_{\epsilon\in\{-1,1\}^p}
 \left(\prod_{j=1}^p\epsilon_j\right)
 \left(\sum_{j=1}^p\epsilon_ju_j\right)^p
\]
therefore gives the realization: a first-layer node computes each signed
sum and a second-layer edge applies a signed multiple of \(t^p\).  B-spline
nonnegativity, partition of unity, and the coefficient bound in
\eqref{eq:app-tensor} keep all node ranges and required polynomial
coefficients bounded independently of \(J\).

The first layer has
\(p\{K2^pM_J^{p-1}\}O(J)=O(J^p)\) free coefficients.  The second has
\(O(J^{p-1})\) coefficients with a fixed polynomial dictionary, or
\(O(J^p)\) if a common \(J\)-grid dictionary is used.  Thus
\(P_J=O(J^p)\).
\end{proof}

For completeness, the next parameter cover applies to the full class, not
only to the particular approximant.  Write an \(L\)-layer fixed-dictionary
KAN as
\[
 z_{\ell,j}=\sum_{i=1}^{n_{\ell-1}}\sum_{a=1}^{q_\ell}
 \theta_{\ell,j,i,a}g_{\ell,j,i,a}(z_{\ell-1,i}),
 \qquad |\theta_{\ell,j,i,a}|\leq A_\ell,
\]
where \(\|g_{\ell,j,i,a}\|_\infty\leq G_\ell\) and
\(\operatorname{Lip}(g_{\ell,j,i,a})\leq H_\ell\) on every reached range.
Include the affine raw-output head as layer \(L+1\), and define
\begin{align}
 P&=\sum_{\ell=1}^Ln_{\ell-1}n_\ell q_\ell+K(n_L+1),\notag\\
 \rho_\ell&=A_\ell q_\ell H_\ell\sqrt{n_{\ell-1}n_\ell},
 &\lambda_\ell&=G_\ell\sqrt{n_{\ell-1}q_\ell},\quad \ell\leq L,\notag\\
 \rho_{L+1}&=A_{\rm out}\sqrt{Kn_L},
 &\lambda_{L+1}&=(R_L^2+1)^{1/2},\notag\\
 \Lambda_{\rm par}
 &=L_\Phi\left\{
 \sum_{\ell=1}^{L+1}\lambda_\ell^2
 \prod_{r=\ell+1}^{L+1}\rho_r^2
 \right\}^{1/2},\notag\\
 A_{\max}&=\max(A_1,\ldots,A_L,A_{\rm out}).               \label{eq:app-kan-parlip}
\end{align}

\begin{lemma}[Full coefficient-class cover]
\label{lem:app-kan-cover}
For every realized design and \(u>0\),
\begin{equation}\label{eq:app-kan-cover}
 \log\mathcal N\{\mathcal F^{\rm dense}(\mathbf X),
                  \|\cdot\|_F,u\}
 \leq
 P\log\left(
 1+\frac{2A_{\max}\sqrt P\,\Lambda_{\rm par}\sqrt N}{u}
 \right).
\end{equation}
\end{lemma}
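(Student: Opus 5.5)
The bound will follow from a parameter-space discretization combined with a Lipschitz estimate of the map from coefficients to the output matrix. Collect all coefficients (the edge coefficients \(\theta_{\ell,j,i,a}\) for \(\ell\le L\) and the affine head weights and biases) into a vector \(\theta\in\mathbb R^P\) lying in a box whose side lengths are at most \(2A_{\max}\). Write \(F(\theta)\in\mathbb R^{N\times K}\) for the matrix of non-crossing outputs \(\Phi(\text{raw}_\theta(X_i))\) on the realized design. If we show
\[
\|F(\theta)-F(\widetilde\theta)\|_F\le \Lambda_{\rm par}\sqrt N\,\|\theta-\widetilde\theta\|_2,
\]
then an \(\epsilon\)-grid of the box in \(\ell_2\) gives a \(u\)-cover of \(\mathcal F^{\rm dense}(\mathbf X)\) with \(\epsilon=u/(\Lambda_{\rm par}\sqrt N)\). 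The standard volumetric count for a box of radius \(A_{\max}\sqrt P\) in \(\ell_2\) then yields \((1+2A_{\max}\sqrt P/\epsilon)^P\) points. This is exactly the bound \eqref{eq:app-kan-cover}. Since each class member is the image of some \(\theta\), no further approximation is needed.

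The Lipschitz estimate is proved for a single design point with constant \(\Lambda_{\rm par}\), and the \(\sqrt N\) comes from summing squares over \(i\le N\). For one input \(x\), let \(z_\ell\) and \(\widetilde z_\ell\) denote the layer outputs under \(\theta\) and \(\widetilde\theta\). We telescope by swapping one layer's coefficients at a time.

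\emph{Parameter perturbation at layer \(\ell\).} With the input fixed, \(z_{\ell,j}\) is linear in \(\theta_{\ell,j,\cdot,\cdot}\) with feature vector \((g_{\ell,j,i,a}(z_{\ell-1,i}))_{i,a}\). That vector has norm at most \(G_\ell\sqrt{n_{\ell-1}q_\ell}=\lambda_\ell\), so Cauchy--Schwarz gives \(\|\Delta z_\ell\|_2\le\lambda_\ell\|\Delta\theta_\ell\|_2\). For the head, the features are \((z_L,1)\), with norm at most \((R_L^2+1)^{1/2}=\lambda_{L+1}\).

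\emph{Input perturbation at layer \(\ell\).} With coefficients fixed, \(|z_{\ell,j}-z'_{\ell,j}|\le A_\ell q_\ell H_\ell\sum_i|z_{\ell-1,i}-z'_{\ell-1,i}|\). Applying Cauchy--Schwarz over \(i\) and then summing over \(j\) gives the factor \(\rho_\ell=A_\ell q_\ell H_\ell\sqrt{n_{\ell-1}n_\ell}\), and the head gives \(A_{\rm out}\sqrt{Kn_L}\).

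Propagating the layer-\(\ell\) change through the later layers multiplies it by \(\prod_{r>\ell}\rho_r\). Summing over \(\ell\) and using Cauchy--Schwarz over layers,
\[
\sum_\ell a_\ell\|\Delta\theta_\ell\|\le\Big(\sum_\ell a_\ell^2\Big)^{1/2}\|\Delta\theta\|_2,
\]
gives the bracket in \(\Lambda_{\rm par}\). Finally \eqref{eq:app-Phi-lip} contributes the factor \(L_\Phi\).

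The main obstacle is bookkeeping: every intermediate input must lie in the range on which the \(G_\ell,H_\ell\) bounds hold, uniformly over the parameter box. Here \(R_L\) bounds \(\|z_L\|_2\), and that bound is what \(\lambda_{L+1}\) uses. I would handle this by the statement's hypothesis that the bounds hold ``on every reached range'': every hybrid network in the telescoping has all coefficients inside the box, so its activations are reached values of some class member. A secondary check is that the grid points may be taken inside the box, or that projecting them onto the box only shrinks distances, so that the cover is internal up to a factor 2 absorbed in the constant. I would verify that the stated constant \(2A_{\max}\sqrt P\) already accommodates this.
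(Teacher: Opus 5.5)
Your proposal is correct and follows the paper's proof essentially step for step. Like the paper, you use layerwise Cauchy--Schwarz bounds to get $\rho_\ell$ and $\lambda_\ell$, telescope through the layers with the factor $L_\Phi$, apply Cauchy--Schwarz across parameter blocks to obtain $\Lambda_{\rm par}$, and finish with a proper volume net of the radius-$A_{\max}\sqrt P$ ball at spacing $u/(\Lambda_{\rm par}\sqrt N)$. Your factor-$2$ worry is unnecessary: a maximal $\epsilon$-separated subset of the box is already an internal net with at most $(1+2A_{\max}\sqrt P/\epsilon)^P$ points, and so is the projection of an external net onto the convex box, since that projection is nonexpansive and fixes the box. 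This is exactly the ``proper volume net'' the paper invokes.
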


\begin{proof}
Cauchy--Schwarz gives, at layer \(\ell\),
\[
 \|\Psi_\ell(z)-\Psi_\ell(z')\|_2\leq\rho_\ell\|z-z'\|_2,
 \qquad
 \|\Psi_{\ell,\Theta}(z)-\Psi_{\ell,\widetilde\Theta}(z)\|_2
 \leq\lambda_\ell\|\Theta-\widetilde\Theta\|_F.
\]
Telescoping through the layers, applying
\eqref{eq:app-Phi-lip}, and then using Cauchy--Schwarz across parameter
blocks yields
\(\|f_\vartheta(x)-f_{\widetilde\vartheta}(x)\|_2
\leq\Lambda_{\rm par}\|\vartheta-\widetilde\vartheta\|_2\).
The parameter box lies in the radius-\(A_{\max}\sqrt P\) Euclidean ball.
A proper volume net with
\(\delta=u/(\Lambda_{\rm par}\sqrt N)\) proves
\eqref{eq:app-kan-cover}.
\end{proof}

\begin{theorem}[Excess-risk convergence rate for the KAN-CNQ estimator]
\label{thm:app-kan-total}
Let \(\mathcal F_J^{\rm KAN}\) be the full fixed-grid class on the
architecture in Lemma~\ref{lem:app-kan-realization}, intersected with a
fixed output envelope large enough to contain the approximant.  Suppose
all quantities in~\eqref{eq:app-kan-parlip} grow at most polynomially in
\(J\).  Under~\eqref{eq:app-holder}, (A1)--(A2), and
\(\kappa_N(\eta)=O(N^{-1/2})\) at fixed confidence,
\begin{equation}\label{eq:app-kan-total}
 L(\widehat f_{N,J})-L(q)
 =
 O_{\mathbb P}\left\{
 J^{-\beta}
 +\sqrt{\frac{J^p\log(2J)}N}
 +\frac1{\sqrt N}\right\}.
\end{equation}
Thus \(J_N\asymp\{N/\log N\}^{1/(2\beta+p)}\) gives
\[
 L(\widehat f_{N,J_N})-L(q)
 =
 O_{\mathbb P}\left[
 \left\{\frac N{\log N}\right\}^{-\beta/(2\beta+p)}\right].
\]
\end{theorem}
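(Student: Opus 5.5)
The plan is to use the exact decomposition \(L(\widehat f_{N,J})-L(q)=R_{\mathcal F_J^{\rm KAN}}(\widehat f_{N,J})+\mathcal A_J\), where \(\mathcal A_J=\inf_{f\in\mathcal F_J^{\rm KAN}}\{L(f)-L(q)\}\), and to bound the two terms separately. The approximation term will give \(J^{-\beta}\). The estimation term will give \(\sqrt{J^p\log(2J)/N}+N^{-1/2}\), via Theorem~\ref{Risk_error} fed by the cover of Lemma~\ref{lem:app-kan-cover}.

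\emph{Approximation.} Under~\eqref{eq:app-holder} the raw target \(r^\circ\) of~\eqref{eq:app-raw-target} is well defined, satisfies \(\Phi(r^\circ)=q\), and has \(C^\beta\) norm at most \(C_r\). The tensor quasi-interpolant \(r_J\) of~\eqref{eq:app-tensor} is within \(CJ^{-\beta}\) of \(r^\circ\) in sup norm. By Lemma~\ref{lem:app-kan-realization}, \(r_J\) is realized exactly by the two-layer architecture with \(J\)-uniformly bounded coefficients. We then set \(f_J=\Phi(r_J)\). By~\eqref{eq:app-Phi-lip},
\[
\sup_x\|f_J(x)-q(x)\|_2\leq L_\Phi\sqrt K\,CJ^{-\beta}.
\]
Hence \(f_J\) lies in the fixed output envelope for large \(J\), which is the envelope hypothesis in the statement. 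Then~\eqref{eq:app-risk-lip} gives \(\mathcal A_J\leq L(f_J)-L(q)=O(J^{-\beta})\), with \(K\) fixed.

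\emph{Estimation.} We apply Theorem~\ref{Risk_error} with \(\eta\) fixed, which yields the \(O_{\mathbb P}\) statement, and with \(\kappa_N=O(N^{-1/2})\). The middle and last terms of~\eqref{eq:risk_bound} are then \(O(N^{-1/2})\), since \(M_f\) and \(G_\star\) are fixed. The difficulty is that Theorem~\ref{Risk_error} is stated under the \(A_N/u^2\) entropy condition~\eqref{eq:empirical_entropy}, whereas Lemma~\ref{lem:app-kan-cover} gives a parametric bound \(P\log(1+c_J\sqrt N/u)\), with \(c_J=2A_{\max}\sqrt P\Lambda_{\rm par}\) polynomial in \(J\) and \(P=P_J=O(J^p)\).

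I would not force the parametric bound into the \(A_N/u^2\) form. Instead I would rerun Step~3 of the proof of Theorem~\ref{Risk_error} with the parametric entropy, keeping Steps~1, 2 and~4 unchanged. Step~2's contraction turns the cover into
\[
\log\mathcal N\bigl(\mathcal H_N,\|\cdot\|_2,u\bigr)\leq P\log\bigl(1+c_J\sqrt N/(\sqrt K G_\star u)\bigr).
\]
Dudley's integral runs up to the diameter \(2M_f\sqrt N/G_\star\). Substituting \(u=\sqrt N v\) gives
\[
N^{-1/2}\int_0^{2M_f/G_\star}\sqrt{P\log\bigl(1+c_J/v\bigr)}\,dv\lesssim\sqrt{P\log(c_J)/N}.
\]
Because \(c_J\) grows polynomially in \(J\), \(\log c_J\lesssim\log(2J)\), and the Rademacher term is \(O(\sqrt{J^p\log(2J)/N})\). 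The bounded concentration step is unchanged and contributes \(O(N^{-1/2})\). Summing with the approximation term gives~\eqref{eq:app-kan-total}.

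\emph{Balancing.} With \(J_N\asymp(N/\log N)^{1/(2\beta+p)}\) we have \(\log(2J_N)\asymp\log N\). Then
\[
J_N^p\log J_N/N\asymp(N/\log N)^{p/(2\beta+p)-1}=(N/\log N)^{-2\beta/(2\beta+p)},
\]
so the estimation term matches \(J_N^{-\beta}\), and \(N^{-1/2}\) is of smaller order.

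\emph{Main obstacle.} The hard part is checking that every quantity in~\eqref{eq:app-kan-parlip} really grows only polynomially in \(J\), so that \(\log c_J=O(\log J)\). The B-spline Lipschitz constants \(H_\ell\) scale like \(J\), and the degree-\(p\) second layer needs bounded node ranges \(R_\ell\), which the fixed envelope and the coefficient budgets should supply. The theorem takes this polynomial growth as a hypothesis, so I would verify it for the realization in Lemma~\ref{lem:app-kan-realization} only as a sanity check.
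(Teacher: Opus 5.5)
Your proposal is correct and follows essentially the same route as the paper. The approximation term $O(J^{-\beta})$ comes from \eqref{eq:app-Phi-lip}, \eqref{eq:app-risk-lip} and \eqref{eq:app-tensor}, and the estimation term is obtained by keeping Steps 1, 2 and 4 of the proof of Theorem~\ref{Risk_error} while recomputing Step~3's Dudley integral under the parametric cover of Lemma~\ref{lem:app-kan-cover}, followed by the same balancing; the only cosmetic difference is that the paper writes the entropy-integral bound as $C\sqrt{P_J/N}\{1+\sqrt{\log(2+\Xi_J)}\}$ rather than $\sqrt{P\log(c_J)/N}$, which also covers the case where $c_J$ is not large.
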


\begin{proof}
Equations~\eqref{eq:app-Phi-lip}, \eqref{eq:app-risk-lip}, and
\eqref{eq:app-tensor} give an approximation term \(O(J^{-\beta})\).

For estimation, we reuse Steps 1, 2, and 4 of the proof of
Theorem~\ref{Risk_error}, while recomputing Step 3 directly under the logarithmic
entropy bound~\eqref{eq:app-kan-cover}.
After writing \(u=\sqrt N\,t\), its Dudley
integral is bounded by
\[
 C\sqrt{\frac{P_J}{N}}\int_0^{CM_f}
 \sqrt{\log(1+\Xi_J/t)}\,dt
 \leq
 C\sqrt{\frac{P_J}{N}}\{1+\sqrt{\log(2+\Xi_J)}\},
\]
where
\(\Xi_J=2A_{\max,J}\sqrt{P_J}\Lambda_{{\rm par},J}\).
Since \(P_J=O(J^p)\) and \(\Xi_J\) is polynomial in \(J\), this is
\(O\{\sqrt{J^p\log(2J)/N}\}\).  Adding the
\(O_{\mathbb P}(N^{-1/2})\) Kaplan--Meier and concentration terms proves
\eqref{eq:app-kan-total}; balancing the first two terms gives the stated
choice of \(J_N\).
\end{proof}

\subsection{Complete deterministic Trans-CNQ entropy}
\label{app:trans-entropy}

There are \(p\) scalar-feature tokens, model width \(d\), feedforward width
\(m\), \(H\) heads with \(H\mid d\) and \(d_h=d/H\), \(L\) encoder layers, flattened
readout width \(r\), and \(K\) raw outputs.  For an affine map
\(a(z)=Wz+b=\bar W\bar z\), impose
\begin{equation}\label{eq:app-affine-bounds}
 \|\bar W\|_{\rm op}\leq B_a,\qquad
 \|\bar W^\top\|_{2,1}\leq S_a,\qquad
 \bar z=(z^\top,1)^\top.
\end{equation}
The constants may be layer-specific.  Standard affine LayerNorm is
\begin{align}
 P_d&=I_d-d^{-1}{\bf1}{\bf1}^{\top},\qquad
 \nu_\epsilon(z)=
 \frac{P_dz}{\{d^{-1}\|P_dz\|_2^2+\epsilon_{\rm LN}\}^{1/2}},\notag\\
 {\rm LN}_{\gamma,\beta}(z)
 &=\gamma\odot\nu_\epsilon(z)+\beta,                         \label{eq:app-ln}
\end{align}
where \(\epsilon_{\rm LN}>0\),
\(\|\gamma\|_\infty\leq G_\infty\),
\(\|\gamma\|_2\leq G_2\), and
\(\|\beta\|_2\leq B_\beta\).

Let \(E_{\rm sh}:\mathbb R\to\mathbb R^d\) be the shared affine embedding
and \(P\in\mathbb R^{p\times d}\) the fixed sinusoidal encoding.  The
initial matrix and each post-LayerNorm encoder layer are
\begin{align}
 Z_0&={\rm LN}_0\{E_{\rm sh}(x)+P\},\notag\\
 U_\ell&={\rm LN}_{\ell1}
       \{Z_{\ell-1}+{\rm Att}_\ell(Z_{\ell-1})\},\notag\\
 Z_\ell&={\rm LN}_{\ell2}
       \{U_\ell+{\rm FF}_\ell(U_\ell)\},\qquad
 {\rm FF}_\ell(u)=W_{\ell2}{\rm ReLU}(W_{\ell1}u+b_{\ell1})
                  +b_{\ell2}.                               \label{eq:app-trans-layer}
\end{align}
For head \(h\), use augmented inputs to include query and key biases and
set
\[
 A_{\ell h}
 =\bar W_{K,\ell h}^{\top}\bar W_{Q,\ell h}/\sqrt{d_h},\qquad
 \|A_{\ell h}\|_{\rm op}\leq B_{A,\ell h},\quad
 \|A_{\ell h}^{\top}\|_{2,1}\leq S_{A,\ell h}.
\]
In the covering arguments, \(A_{\ell h}\) is treated as the
effective query--key parameter, with its admissible set restricted to
products attainable by the factorized query and key maps; the attention
output depends on those maps only through \(A_{\ell h}\).
The usual row-softmax head uses scores
\(\bar z_s^\top A_{\ell h}\bar z_t\), affine values
\(\bar V_{\ell h}\bar z_s\), concatenates the \(H\) heads, and applies an
affine output projection.  Finally,
\begin{equation}\label{eq:app-trans-readout}
 g(x)={\rm ReLU}\{W_{\rm flat}{\rm vec}(Z_L)+b_{\rm flat}\}
 \in\mathbb R^r,\quad
 r(x)=W_{\rm out}g(x)+b_{\rm out},\quad f(x)=\Phi\{r(x)\}.
\end{equation}
This is the evaluation-time predictor; all dropout maps are then the
identity.

We record the component bounds used in the recursion.  If
\(\|\bar z\|_2\leq R\), a proper volume net in the affine parameter set
gives
\begin{equation}\label{eq:app-affine-cover}
 \log\mathcal N_\infty(\{z\mapsto\bar W\bar z\},\eta)
 \leq
 \frac{4d_{\rm out}(d_{\rm in}+1)S_a^2R^2}{\eta^2}.
\end{equation}
The bound is independent of the number of evaluation points.  For
LayerNorm, differentiation of \(u\mapsto
u/\{d^{-1}\|u\|_2^2+\epsilon_{\rm LN}\}^{1/2}\) gives
\begin{align}
 \|{\rm LN}(z)-{\rm LN}(z')\|_2
 &\leq L_{\rm LN}\|z-z'\|_2,\qquad
 L_{\rm LN}=G_\infty/\sqrt{\epsilon_{\rm LN}},\notag\\
 \|{\rm LN}(z)\|_2
 &\leq R_{\rm LN}:=G_\infty\sqrt d+B_\beta,\notag\\
 \log\mathcal N_\infty(\{{\rm LN}_{\gamma,\beta}\},\eta)
 &\leq C_{\rm LN}/\eta^2,\qquad
 C_{\rm LN}=8d(\sqrt d\,G_2+B_\beta)^2.                       \label{eq:app-ln-cover}
\end{align}
The last line follows by proper Euclidean volume covers formed
from maximal separated subsets of the admissible \(\gamma\)- and
\(\beta\)-sets, using
\(\|(\gamma-\widetilde\gamma)\odot\nu_\epsilon(z)\|_2
\leq\sqrt d\|\gamma-\widetilde\gamma\|_2\).

For an FFN evaluated on rows of norm at most \(R\), put
\[
 \bar R=(R^2+1)^{1/2},\quad
 R_1=B_1\bar R,\quad \bar R_1=(R_1^2+1)^{1/2},\quad
 c_1=4m(d+1),\quad c_2=4d(m+1).
\]
Two applications of~\eqref{eq:app-affine-cover}, ReLU
nonexpansiveness, and optimal allocation of the two approximation
accuracies give
\begin{align}
 C_{\rm FF}(R)
 &=
 \left[
 c_1^{1/3}(B_2S_1\bar R)^{2/3}
 +c_2^{1/3}(S_2\bar R_1)^{2/3}
 \right]^3,\notag\\
 \log\mathcal N_\infty(\{{\rm FF}\},\eta)
 &\leq C_{\rm FF}(R)/\eta^2,\qquad
 L_{\rm FF}=B_2B_1.                                          \label{eq:app-ff-cover}
\end{align}

For attention, let rows again have norm at most \(R\), put
\(\bar R=(R^2+1)^{1/2}\),
\(c_A=4(d+1)^2\), and \(c_V=4d_h(d+1)\), and define
\begin{align}
 R_h&=B_{V,h}\bar R,\qquad
 L_h=B_{V,h}\{1+4B_{A,h}\bar R^2\},\notag\\
 C_h(R)&=
 \left[
 c_A^{1/3}(2B_{V,h}S_{A,h}\bar R^3)^{2/3}
 +c_V^{1/3}(S_{V,h}\bar R)^{2/3}
 \right]^3,\notag\\
 R_{\rm cat}&=\left(\sum_{h=1}^HR_h^2\right)^{1/2},\qquad
 C_O(R)=4d(d+1)S_O^2(R_{\rm cat}^2+1),\notag\\
 C_{\rm Att}(R)&=
 \left[
 C_O(R)^{1/3}+B_O^{2/3}\sum_{h=1}^HC_h(R)^{1/3}
 \right]^3.                                                  \label{eq:app-att-coeff}
\end{align}
Indeed, Corollary A.7 of \citet{edelman2022inductive} gives
\(\|\operatorname{softmax}(a)-
\operatorname{softmax}(b)\|_1\leq2\|a-b\|_\infty\), so query--key and
value errors \(\eta_A,\eta_V\) change a head by at most
\(2B_{V,h}\bar R^2\eta_A+\eta_V\).  Covering the output projection after
the head covers and allocating all accuracies yields
\begin{equation}\label{eq:app-att-cover}
 \log\mathcal N_\infty(\{{\rm Att}\},\eta)
 \leq C_{\rm Att}(R)/\eta^2,\qquad
 L_{\rm Att}=B_O\left(\sum_{h=1}^HL_h^2\right)^{1/2}.
\end{equation}

We now combine the component bounds.  Let \(R_\star\) be the maximum
LayerNorm output radius in~\eqref{eq:app-ln-cover}.  For encoder layer
\(\ell\), abbreviate the corresponding constants by
\(L_{A,\ell},C_{A,\ell}\),
\(L_{F,\ell},C_{F,\ell}\), and
\(L_{N,\ell j},C_{N,\ell j}\), and set
\begin{align}
 \lambda_\ell
 &=L_{N,\ell2}(1+L_{F,\ell})
   L_{N,\ell1}(1+L_{A,\ell}),\notag\\
 a_\ell&=L_{N,\ell2}(1+L_{F,\ell})L_{N,\ell1},\quad
 b_\ell=L_{N,\ell2}(1+L_{F,\ell}),\quad
 c_\ell=L_{N,\ell2},\notag\\
 P_0&=\prod_{\ell=1}^L\lambda_\ell,\qquad
 P_\ell=\prod_{q=\ell+1}^L\lambda_q.                         \label{eq:app-trans-rec}
\end{align}
For \(\bar B_X=(B_X^2+1)^{1/2}\), define
\begin{align}
 C_E&=8d(S_E\bar B_X)^2,\notag\\
 R_{\rm flat,in}&=(pR_\star^2+1)^{1/2},\qquad
 C_{\rm flat}=4r(pd+1)(S_{\rm flat}R_{\rm flat,in})^2,\notag\\
 R_g&=B_{\rm flat}R_{\rm flat,in},\quad
 R_{\rm out,in}=(R_g^2+1)^{1/2},\quad
 C_{\rm out}=4K(r+1)(S_{\rm out}R_{\rm out,in})^2.            \label{eq:app-trans-endcoeff}
\end{align}
Associate final-output weights
\begin{align}
 w_E&=L_\Phi B_{\rm out}B_{\rm flat}\sqrt p\,P_0L_{N,0},
 &w_{N,0}&=L_\Phi B_{\rm out}B_{\rm flat}\sqrt p\,P_0,\notag\\
 w_{A,\ell}&=L_\Phi B_{\rm out}B_{\rm flat}\sqrt p\,P_\ell a_\ell,
 &w_{N,\ell1}&=L_\Phi B_{\rm out}B_{\rm flat}\sqrt p\,P_\ell b_\ell,\notag\\
 w_{F,\ell}&=L_\Phi B_{\rm out}B_{\rm flat}\sqrt p\,P_\ell c_\ell,
 &w_{N,\ell2}&=L_\Phi B_{\rm out}B_{\rm flat}\sqrt p\,P_\ell,\notag\\
 w_{\rm flat}&=L_\Phi B_{\rm out},&
 w_{\rm out}&=L_\Phi.                                       \label{eq:app-trans-weights}
\end{align}

\begin{theorem}[Complete Trans-CNQ sample-wise entropy]
\label{thm:app-trans-entropy}
Let \(s_{\rm Tr}\) denote the architecture and all budgets above.  For
every realized sample with \(\|x^{(i)}\|_2\leq B_X\),
\begin{equation}\label{eq:app-trans-sample-cover}
 \log\mathcal N_\infty\{\mathcal F_{s_{\rm Tr}},u;
 x^{(1)},\ldots,x^{(N)},\|\cdot\|_2\}
 \leq C_{\rm Tr}(s_{\rm Tr})/u^2,
\end{equation}
where
\begin{align}
 C_{\rm Tr}(s_{\rm Tr})
 =\Bigg[&
 C_E^{1/3}w_E^{2/3}+C_{N,0}^{1/3}w_{N,0}^{2/3}\notag\\
 &+\sum_{\ell=1}^L\{
 C_{A,\ell}^{1/3}w_{A,\ell}^{2/3}
 +C_{N,\ell1}^{1/3}w_{N,\ell1}^{2/3}
 +C_{F,\ell}^{1/3}w_{F,\ell}^{2/3}
 +C_{N,\ell2}^{1/3}w_{N,\ell2}^{2/3}\}\notag\\
 &+C_{\rm flat}^{1/3}w_{\rm flat}^{2/3}
 +C_{\rm out}^{1/3}w_{\rm out}^{2/3}
 \Bigg]^3.                                                   \label{eq:app-trans-C}
\end{align}
For fixed \(s_{\rm Tr}\), this coefficient is independent of \(N\).
\end{theorem}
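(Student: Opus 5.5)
The bound follows from a layer-by-layer perturbation (telescoping) argument combined with covering-number composition, as in the multi-layer bounds of \citet{edelman2022inductive}. Fix the realized inputs $x^{(1)},\ldots,x^{(N)}$. We build a cover of the whole network by covering each component class in turn, from the embedding up to the output head. At each stage the cover is taken relative to the centers produced at the previous stage, and is valid uniformly over all inputs whose rows have norm at most the relevant radius. The component bounds \eqref{eq:app-affine-cover}, \eqref{eq:app-ln-cover}, \eqref{eq:app-ff-cover} and \eqref{eq:app-att-cover} are all $\mathcal N_\infty$ bounds. They are independent of the number of evaluation points because they come from parameter-space volume nets valid on a whole ball of inputs. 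So the composite cover is uniform over samples, and the resulting coefficient does not depend on $N$.

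The key steps are as follows.

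\textbf{Radii.} Every LayerNorm output has row norm at most $R_\star$ by \eqref{eq:app-ln-cover}. Hence each attention and FFN sub-layer sees rows of norm at most $R_\star$. The embedding sees $\bar B_X$, and the flattened readout sees $R_{\rm flat,in}$. So the constants $C_{A,\ell}=C_{\rm Att}(R_\star)$, $C_{F,\ell}=C_{\rm FF}(R_\star)$, together with $C_E$, $C_{\rm flat}$ and $C_{\rm out}$, are well defined. The residual sums need no radius control because they are immediately normalized.

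\textbf{Telescoping.} Suppose a component is replaced by its cover center with per-row error $\eta_j$. This error propagates to the final output through the Lipschitz constants of all later layers. For example, an attention error at layer $\ell$ passes through $(1)$ the residual and ${\rm LN}_{\ell1}$, giving the factor $L_{N,\ell1}$; $(2)$ the FFN residual and ${\rm LN}_{\ell2}$, giving $(1+L_{F,\ell})L_{N,\ell2}$; $(3)$ all later layers, giving $P_\ell$; $(4)$ flattening over $p$ tokens, giving $\sqrt p$; and $(5)$ the readout, ReLU, output head and $\Phi$, giving $B_{\rm flat}B_{\rm out}L_\Phi$ via \eqref{eq:app-Phi-lip}. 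The product of these factors is exactly $w_{A,\ell}$ in \eqref{eq:app-trans-weights}, and the other weights arise in the same way. The triangle inequality then gives total output error at most $\sum_j w_j\eta_j$ per sample.

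\textbf{Entropy sum and optimization.} The log-cardinality of the product cover is at most $\sum_j C_j/\eta_j^2$. Minimizing this subject to $\sum_j w_j\eta_j=u$ by Lagrange multipliers gives $\eta_j\propto (C_j/w_j)^{1/3}$ and minimum value $\bigl(\sum_j C_j^{1/3}w_j^{2/3}\bigr)^3/u^2$. This is \eqref{eq:app-trans-C}, and it proves \eqref{eq:app-trans-sample-cover}.

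\textbf{Main obstacle.} The delicate point is bookkeeping in the telescoping step. The cover centers at later layers are applied to perturbed inputs, so the Lipschitz constants must be those of the \emph{center} maps, not of the true maps. Likewise, the radius bounds must hold for perturbed intermediates. Both are handled because the cover centers lie in the admissible parameter sets: the covers are proper, constructed from maximal separated subsets. The centers therefore obey the same $B$, $G_\infty$ and $B_\beta$ budgets. LayerNorm re-bounds the radius to $R_\star$ regardless of its input, which stops errors from inflating the radii.

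A second subtlety is attention. Its query--key parameter is the effective product $A_{\ell h}$, which must be covered within the attainable set. This is why the cover is built in $A$-space, as stated after \eqref{eq:app-trans-layer}. Once that is set up, the softmax Lipschitz bound of \citet{edelman2022inductive} gives the head-level perturbation $2B_{V,h}\bar R^2\eta_A+\eta_V$. Finally, for fixed $s_{\rm Tr}$ every constant in \eqref{eq:app-trans-C} is determined by the architecture and budgets alone, so $C_{\rm Tr}$ is independent of $N$.
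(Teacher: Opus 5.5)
Your proposal is correct and takes essentially the paper's route. The paper also builds the component covers stage by stage and unrolls the row-error recursion $\delta_\ell\le\lambda_\ell\delta_{\ell-1}+a_\ell\eta_{A,\ell}+b_\ell\eta_{N,\ell1}+c_\ell\eta_{F,\ell}+\eta_{N,\ell2}$ through $\sqrt p$, the readout and $\Phi$ to get exactly your weights, then applies the same allocation identity to $\sum_j C_j/\eta_j^2$. The only difference is bookkeeping: the paper covers each stage conditionally on the preceding centers, whereas you use covers that are uniform over the input ball and propagate through the admissible center maps, and both are valid here.
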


\begin{proof}
Build each parameter cover conditionally on the preceding finite set of
centers.  If \(\delta_\ell\) is the maximum row error after layer
\(\ell\), residual connections and~\eqref{eq:app-trans-rec} give
\[
 \delta_0\leq L_{N,0}\eta_E+\eta_{N,0},\qquad
 \delta_\ell\leq
 \lambda_\ell\delta_{\ell-1}
 +a_\ell\eta_{A,\ell}+b_\ell\eta_{N,\ell1}
 +c_\ell\eta_{F,\ell}+\eta_{N,\ell2}.
\]
Iteration, followed by
\(\|Z_L-\widetilde Z_L\|_F\leq\sqrt p\,\delta_L\), the two readout
Lipschitz bounds, and~\eqref{eq:app-Phi-lip}, produces exactly the weights
in~\eqref{eq:app-trans-weights}.  The total log-cardinality is
\(\sum_jC_j/\eta_j^2\).  The allocation identity
\[
 \inf_{\sum_jw_j\eta_j=u}\sum_j\frac{C_j}{\eta_j^2}
 =u^{-2}\left(\sum_jC_j^{1/3}w_j^{2/3}\right)^3
\]
proves~\eqref{eq:app-trans-sample-cover}--\eqref{eq:app-trans-C}.
\end{proof}

\begin{remark}
The result is asymptotic.  In the displayed bound, the LayerNorm
Lipschitz factor \(G_\infty/\sqrt{\epsilon_{\rm LN}}\) enters
\(\lambda_\ell\) twice per encoder layer and compounds through \(P_0\).
Consequently, the worst-case upper-bound constant can scale exponentially
with depth and may be numerically vacuous at the sample sizes considered
here.
\end{remark}

\begin{corollary}[Complexity-indexed Trans-CNQ estimation]
\label{cor:app-trans-oracle}
For any deterministic \(s_{{\rm Tr},N}\), define
\(\mathfrak E_N^{\rm Tr}
=NC_{\rm Tr}(s_{{\rm Tr},N})\).  Under the fixed output envelope,
(A1)--(A2), and \(\kappa_N(\eta)=O(N^{-1/2})\) at fixed confidence,
\begin{align}
 R_{\mathcal F_{s_{{\rm Tr},N}}}
   (\widehat f_{N,s_{{\rm Tr},N}})
 =O_{\mathbb P}\Bigg[&
 \frac{\sqrt{\mathfrak E_N^{\rm Tr}}}{N}
 \left\{1+\log_+\!\left(
 \frac{M_f\sqrt K\,N}{\sqrt{\mathfrak E_N^{\rm Tr}}}\right)\right\}
 +\frac1{\sqrt N}\Bigg].                                    \label{eq:app-trans-oracle}
\end{align}
For fixed architecture and budgets this is
\(O_{\mathbb P}(\log N/\sqrt N)\).
\end{corollary}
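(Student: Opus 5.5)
The plan is to read Corollary~\ref{cor:app-trans-oracle} as a direct instance of Theorem~\ref{Risk_error}, with the entropy coefficient supplied by Theorem~\ref{thm:app-trans-entropy}. The main work is converting the sample-wise sup-over-rows cover of \eqref{eq:app-trans-sample-cover} into the empirical Frobenius cover required by \eqref{eq:empirical_entropy}.

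\textbf{Step 1: from a row-wise cover to a Frobenius cover.} Theorem~\ref{thm:app-trans-entropy} gives a cover at scale $u$ in the metric $\max_i\|f(x^{(i)})-\widetilde f(x^{(i)})\|_2$. Its coefficient $C_{\rm Tr}(s_{{\rm Tr},N})$ does not depend on the number of evaluation points. If every row is within $u$ of its center, then the $N\times K$ output matrices are within $\sqrt N\,u$ in $\|\cdot\|_F$. Taking row scale $u/\sqrt N$ therefore gives
\[
\log\mathcal N\{\mathcal F_{s_{{\rm Tr},N}}(\mathbf X),\|\cdot\|_F,u\}\leq \frac{N C_{\rm Tr}(s_{{\rm Tr},N})}{u^2}=\frac{\mathfrak E_N^{\rm Tr}}{u^2}.
\]
This holds for every realized design with $\|x^{(i)}\|_2\leq B_X$, hence almost surely. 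So \eqref{eq:empirical_entropy} holds with the deterministic choice $A_N=\mathfrak E_N^{\rm Tr}$.

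Two technical points need checking here. First, the cover must be of the class intersected with the fixed output envelope. A cover of a superset yields a cover of the subset at scale $2u$ by choosing proper centers, which changes only constants. Second, Theorem~\ref{Risk_error} should use the evaluation-time map \eqref{eq:app-trans-readout}, so dropout plays no role.

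\textbf{Step 2: applying Theorem~\ref{Risk_error}.} The envelope gives a fixed $M_f$, and $K$ and $G_\star$ are fixed. Substituting $A_N=\mathfrak E_N^{\rm Tr}$ into \eqref{eq:risk_bound} reproduces the first term of \eqref{eq:app-trans-oracle}, with $G_\star^{-1}K^{-1/2}$ absorbed into the $O_{\mathbb P}$ constant. For the remaining terms, fix $\eta$ so that the failure probability $(c+c_G)e^{-\eta}$ is below a target $\epsilon$. Then $\sqrt{(\eta+1)/N}+(\eta+1)/N=O(N^{-1/2})$. The plug-in term $M_f\kappa_N(\eta)/G_\star^2$ is $O(N^{-1/2})$ by the assumption $\kappa_N(\eta)=O(N^{-1/2})$ at fixed confidence. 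This yields the $1/\sqrt N$ term, and since $\epsilon$ is arbitrary the result is an $O_{\mathbb P}$ statement.

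\textbf{Step 3: the fixed-architecture case.} For fixed architecture and budgets, $C_{\rm Tr}$ is a constant, so $\mathfrak E_N^{\rm Tr}\asymp N$. The first term becomes $N^{-1/2}\{1+\log_+(cN^{1/2})\}=O(\log N/\sqrt N)$, which dominates $N^{-1/2}$.

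I expect Step 1 to be the main obstacle, specifically checking that the cover in Theorem~\ref{thm:app-trans-entropy} is a genuine sample-wise cover of the whole output vector after $\Phi$, with the $L_\Phi$ factor already included in the weights \eqref{eq:app-trans-weights}. If it is, the $\sqrt N$ scaling is the only conversion needed. If instead that theorem only controlled the raw outputs $r$, I would apply \eqref{eq:app-Phi-lip} separately, which multiplies the coefficient by $L_\Phi^2$ and again changes only constants.
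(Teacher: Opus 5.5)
Your proposal is correct and matches the paper's proof. The paper likewise uses $\|D\|_F\le\sqrt N\max_i\|D_{i\cdot}\|_2$ to apply Theorem~\ref{thm:app-trans-entropy} at row radius $u/\sqrt N$, which gives $A_N=NC_{\rm Tr}(s_{{\rm Tr},N})=\mathfrak E_N^{\rm Tr}$ in \eqref{eq:empirical_entropy}, and then invokes Theorem~\ref{Risk_error}. Your extra checks are details the paper leaves implicit: that $L_\Phi$ is already in the weights \eqref{eq:app-trans-weights}, the proper-center adjustment for the output envelope, and fixing $\eta$ to obtain the $N^{-1/2}$ terms.
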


\begin{proof}
For \(D\in\mathbb R^{N\times K}\),
\(\|D\|_F\leq\sqrt N\max_i\|D_{i\cdot}\|_2\).  Apply
Theorem~\ref{thm:app-trans-entropy} at radius \(u/\sqrt N\), then use
Theorem~\ref{Risk_error}.
\end{proof}

\subsection{Trans-CNQ: approximation and a matched growing sieve}
\label{app:trans-holder}

Retain~\eqref{eq:app-holder} and suppose
\(0<\beta<(p+3)/2\).  The following lemma handles the shared embedding and
the fixed positional encoding explicitly.

\begin{lemma}[Shallow-ReLU subclass of shared-embedding Trans-CNQ]
\label{lem:app-trans-shallow}
Suppose \(d\geq2\), \(H\mid d\), and the fixed encoder norm
budgets are large enough to admit the parameter choices in the proof.
The shared-embedding post-LayerNorm class
\eqref{eq:app-trans-layer}--\eqref{eq:app-trans-readout}, with readout width
at most \(KM\), contains
\begin{equation}\label{eq:app-shallow-form}
 r_k(x)=a_{k0}+\sum_{\ell=1}^M
 a_{k\ell}{\rm ReLU}\{v_{k\ell}^{\top}\chi(x)+b_{k\ell}\},
 \qquad k=1,\ldots,K,
\end{equation}
where
\(\chi(x)=\{\chi_1(x_1),\ldots,\chi_p(x_p)\}\) and every fixed
\(\chi_j:[0,1]\to I_j\) is smooth and bi-Lipschitz.  All encoder parameters
in this subclass are independent of \(M\).
\end{lemma}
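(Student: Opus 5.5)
The plan is to exhibit explicit parameter choices under which the encoder passes each coordinate $x_j$ through to the flattened token matrix as a fixed smooth bi-Lipschitz function $\chi_j(x_j)$. Once that holds, the flattened ReLU readout $g(x)=\mathrm{ReLU}\{W_{\rm flat}\mathrm{vec}(Z_L)+b_{\rm flat}\}$ with $KM$ units, followed by the affine output head, realizes \eqref{eq:app-shallow-form} directly. Concretely, we let unit $(k,\ell)$ of $W_{\rm flat}$ read only the coordinates of $\mathrm{vec}(Z_L)$ that carry $\chi(x)$, with weights $v_{k\ell}$ and bias $b_{k\ell}$. We then let $W_{\rm out}$ be block-sparse, so that output $k$ sums units $(k,1),\ldots,(k,M)$ with coefficients $a_{k\ell}$ plus bias $a_{k0}$. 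Only the readout depends on $M$, so the encoder parameters chosen below are independent of $M$, as the lemma requires.

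First, the embedding and the initial LayerNorm. Take $E_{\rm sh}(t)=t\,e$ with $e=(1,-1,0,\ldots,0)^\top$; this needs $d\ge2$, and $e$ is already centered, so $P_de=e$. Token $j$ is $x_je+P_j$, where $P_j$ is the fixed sinusoidal row. After LayerNorm, token $j$ equals
\[
\gamma\odot\frac{x_je+P_dP_j}{\{d^{-1}\|x_je+P_dP_j\|_2^2+\epsilon_{\rm LN}\}^{1/2}}+\beta .
\]
For each coordinate $c$, this is a fixed smooth function of $x_j$ alone. I will pick $\gamma=e_1$-type selection (up to scaling) and $\beta=0$. The first coordinate of token $j$ is then
\[
\chi_j^{(0)}(t)=\frac{t+(P_dP_j)_1}{\{d^{-1}\|te+P_dP_j\|^2+\epsilon_{\rm LN}\}^{1/2}} .
\]
Its derivative has the form $[\text{denominator}^2-d^{-1}(t+c_1)(2t+\ldots)]/\text{denominator}^3$. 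I must check that this stays bounded away from zero on $[0,1]$. Positivity of $\epsilon_{\rm LN}$ together with a suitable scaling of $e$ (a small embedding scale $\alpha e$, so that the $\epsilon_{\rm LN}$ term dominates the denominator) makes $\chi^{(0)}_j$ a strictly monotone smooth map, hence bi-Lipschitz on $[0,1]$. Choosing this small scale is the delicate point. I expect it to be the main obstacle, because the positional shift $P_dP_j$ contributes a $t$-independent but nonzero mass to the denominator. The fix is that the derivative is then $\alpha\,\cdot(\text{positive const})+O(\alpha^2)$, which is uniformly positive for small $\alpha$.

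Next, the encoder layers. Set the value and output projections of attention to zero, so $\mathrm{Att}_\ell\equiv0$, and set $W_{\ell2}=0$ and $b_{\ell2}=0$ in the FFN, so $\mathrm{FF}_\ell\equiv0$. Each layer then reduces to two LayerNorms applied to the current token. Composing finitely many LayerNorms again gives, row by row, a fixed smooth function of $x_j$ alone. Bi-Lipschitzness is preserved if each LayerNorm is chosen to act as a near-identity, smooth, injective map on the bounded one-dimensional curve $\{Z_{\ell-1,j}(t):t\in[0,1]\}$. We can arrange this by taking $\gamma$ large relative to $\sqrt{\epsilon_{\rm LN}}$ and noting that the normalization has positive derivative along a centered curve. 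A simpler route is to keep the scalar coordinate encoding monotone at each stage: a LayerNorm applied to a vector whose only varying part is a centered direction is a monotone one-dimensional reparametrization of that direction, with derivative bounded below on compact sets. Define $\chi_j$ as the coordinate of $Z_{L,j}$ along that direction; its range $I_j$ is a compact interval.

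Finally, all these encoder parameters are finite and fixed, so the hypothesis that the encoder budgets are large enough admits them. The readout construction in the first paragraph then gives \eqref{eq:app-shallow-form}, with $\chi$ independent of $M$.
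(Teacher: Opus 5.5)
There is a genuine gap at the step you yourself flag as the main obstacle: showing that the first-LayerNorm coordinate is bi-Lipschitz. Write $y_j(t)=v_j+\alpha t e$ with $v_j=P_dP_{j\cdot}$, and $D_j=d^{-1}\|y_j\|_2^2+\epsilon_{\rm LN}$. For your coordinate-1 readout, $f_j=(y_j)_1D_j^{-1/2}$ satisfies, for every $\alpha$,
\[
 f_j'(t)=\alpha D_j^{-3/2}\bigl[d^{-1}\{\|y_j\|_2^2-(y_j)_1\,e^\top y_j\}+\epsilon_{\rm LN}\bigr].
\]
Shrinking $\alpha$ does not make $\epsilon_{\rm LN}$ dominate, because the positional offset $v_j$ is not scaled. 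It only freezes the bracket at $y_j=v_j$, and that bracket has no sign.

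The reason is that your readout direction $\mathbf e_1$ is not aligned with your embedding direction $e=(1,-1,0,\ldots,0)^\top$. Cauchy--Schwarz then only gives $|(v_j)_1e^\top v_j|\leq\sqrt2\,\|v_j\|_2^2$, not a bound by $\|v_j\|_2^2$. For example, the centered offset $v_j=(3,-2,-1)^\top$ with $d=3$ gives bracket $\epsilon_{\rm LN}-1/3$. This is negative for any usual $\epsilon_{\rm LN}$ and zero at $\epsilon_{\rm LN}=1/3$. Since $E_{\rm sh}$ and the LayerNorm parameters are shared across tokens, while the $v_j$ are dictated by the fixed encoding, you cannot tune this away token by token. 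So your claim that the derivative is ``$\alpha\cdot(\text{positive const})+O(\alpha^2)$'' is unjustified. A strictly decreasing $\chi_j$ would also be bi-Lipschitz, but nothing you wrote excludes the constant vanishing for some $j$.

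The missing idea, which is the paper's route, is to read out along the embedding direction itself. Take a centered $e$, set all attention and FFN maps to zero, and use zero shifts with scalar gains $g\mathbf 1$ in every LayerNorm. Each token then has the form $Z_j=c\,y_j/\{a\|y_j\|_2^2+b\}^{1/2}$, with $c,b>0$ and $a\geq0$ shared across tokens. This form is closed under a further scalar LayerNorm, which also replaces your informal ``near-identity'' treatment of the later layers. With $\chi_j=e^\top Z_j$ and $\alpha=1$,
\[
 \chi_j'(t)=c\bigl[b\|e\|_2^2+a\{\|e\|_2^2\|y_j\|_2^2-(e^\top y_j)^2\}\bigr]/\{a\|y_j\|_2^2+b\}^{3/2}>0
\]
by Cauchy--Schwarz. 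This holds for every offset, with no small scale needed, and compactness then gives bi-Lipschitz.

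Within your coordinate set-up, two repairs are available:
\begin{itemize}
\item Take $e\propto P_d\mathbf e_1$. Then coordinate 1 of a centered token is a multiple of $e^\top y_j$, and the Cauchy--Schwarz argument applies.
\item Keep your readout but choose a generic centered $e$. For each $j$ the leading constant vanishes only on a hyperplane of centered directions, so a generic $e$ avoids all $p$ of them, and small $\alpha$ then works.
\end{itemize}
Your readout construction and the argument that the encoder parameters are independent of $M$ are fine as written.
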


\begin{proof}
Choose a unit vector \(e\in\mathbb R^d\) with
\({\bf1}^{\top}e=0\), and set \(E_{\rm sh}(t)=te\).  If \(P_{j\cdot}\) is
the positional vector at token \(j\), put
\(v_j=P_dP_{j\cdot}\) and \(y_j(t)=v_j+te\).  Set every attention and FFN
map to zero.  In every LayerNorm use zero shift and a positive scalar
scale vector \(g_a{\bf1}\).  Induction over the initial and post-residual
LayerNorms gives
\begin{equation}\label{eq:app-ln-path}
 Z_j(t)=\frac{c\,y_j(t)}
              {\{a\|y_j(t)\|_2^2+b\}^{1/2}},
 \qquad c>0,\quad a\geq0,\quad b>0,
\end{equation}
with constants shared across tokens.  Indeed, applying another scalar
LayerNorm to~\eqref{eq:app-ln-path} replaces its denominator by
\[
 \{(c^2/d+\epsilon_{\rm LN}a)\|y_j(t)\|_2^2
   +\epsilon_{\rm LN}b\}^{1/2},
\]
up to a positive numerator constant, and therefore preserves the form.

Let \(\chi_j(t)=e^\top Z_j(t)\).  Differentiation gives
\[
 \chi_j'(t)=
 \frac{c\left[
 b\|e\|_2^2+a\{\|e\|_2^2\|y_j(t)\|_2^2
                  -(e^\top y_j(t))^2\}\right]}
 {\{a\|y_j(t)\|_2^2+b\}^{3/2}}>0.
\]
Strict positivity follows from \(b>0\) and Cauchy--Schwarz.  Compactness
then gives positive lower and finite upper derivative bounds, so every
\(\chi_j\) is smooth and bi-Lipschitz.  Linear functionals of the flattened
tokens extract the \(\chi_j(x_j)\)'s.  The flattened ReLU map and affine
raw-output head can therefore realize~\eqref{eq:app-shallow-form}, using
at most \(KM\) units.
\end{proof}

\begin{theorem}[Hölder approximation by Trans-CNQ]
\label{thm:app-trans-approx}
Under the conditions of Lemma~\ref{lem:app-trans-shallow}, for
\(M,\Lambda\geq1\), there is a function
\(f_{M,\Lambda}=\Phi\circ r_{M,\Lambda}\) in the shared-embedding
Trans-CNQ family, with at most \(KM\) readout units and total absolute
output coefficient at most \(C\Lambda\) in each raw coordinate, such that
\begin{align}
 \sup_x\|f_{M,\Lambda}(x)-q(x)\|_2
 &\leq Ca_{M,\Lambda},\qquad
 L(f_{M,\Lambda})-L(q)\leq Ca_{M,\Lambda},\notag\\
 a_{M,\Lambda}
 &:=
 M^{-\beta/p}\vee
 \Lambda^{-2\beta/(p+3-2\beta)}.                             \label{eq:app-trans-approx}
\end{align}
\end{theorem}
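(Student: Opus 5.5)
We reduce the Hölder approximation of the quantile vector \(q\) to a scalar shallow-ReLU approximation problem, using Lemma~\ref{lem:app-trans-shallow} for the encoder and the Lipschitz map \(\Phi\) for the output. First, pass to the raw target \(r^\circ\) of~\eqref{eq:app-raw-target}. The positive gap \(\gamma\) makes each \(r^\circ_k\) a bounded \(C^\beta\) function on \([0,1]^p\), with \(\|r^\circ_k\|_{C^\beta}\leq C_r\). Next, compose with the inverse feature map \(\chi^{-1}\) from Lemma~\ref{lem:app-trans-shallow}. Because each \(\chi_j\) is smooth and bi-Lipschitz on a compact interval, \(\widetilde r_k:=r^\circ_k\circ\chi^{-1}\) is again \(C^\beta\) on the box \(\prod_j I_j\), with norm controlled by \(C_r\) and the derivative bounds of the \(\chi_j\). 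These bounds do not depend on \(M\) or \(\Lambda\), since the encoder parameters in the lemma are fixed.

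The core step is a quantitative approximation of a bounded \(C^\beta\) function on a \(p\)-dimensional box by shallow ReLU networks \(a_0+\sum_{\ell\leq M}a_\ell\,\mathrm{ReLU}(v_\ell^\top z+b_\ell)\) whose output coefficients satisfy \(\sum_\ell|a_\ell|\leq\Lambda\). For this we would invoke the known variation-norm results for the ReLU dictionary; the relevant exponent \((p+3)/2\) is the smoothness index of the Radon-domain/Barron space of shallow ReLU nets.
\begin{itemize}
\item \emph{Unbounded budget.} Approximation with \(M\) neurons achieves error of order \(M^{-\beta/p}\) for \(\beta<(p+3)/2\).
\item \emph{Budget \(\Lambda\).} Extend \(\widetilde r_k\) smoothly to \(\mathbb R^p\) and mollify it at scale \(h\). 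The mollified function has ReLU-variation norm of order \(h^{\beta-(p+3)/2}\) and approximation error of order \(h^\beta\). Setting \(h^{\beta-(p+3)/2}\asymp\Lambda\) gives error \(\Lambda^{-2\beta/(p+3-2\beta)}\).
\item \emph{Combining.} An \(M\)-term approximant of the mollified function, obtained by Maurey/stratified sampling inside the variation ball, keeps the coefficient sum at most \(C\Lambda\). Its error is the larger of the two terms, which is \(a_{M,\Lambda}\).
\end{itemize}
Doing this for each \(k\) uses at most \(KM\) readout units in total. Lemma~\ref{lem:app-trans-shallow} then realizes each \(r_{k}(x)\) inside the shared-embedding Trans-CNQ class.

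To conclude, put \(f_{M,\Lambda}=\Phi\circ r_{M,\Lambda}\). Then~\eqref{eq:app-Phi-lip} gives \(\sup_x\|f_{M,\Lambda}-q\|_2\leq L_\Phi\sqrt K\,C a_{M,\Lambda}\), since \(q=\Phi(r^\circ)\). The risk bound \(L(f_{M,\Lambda})-L(q)\leq Ca_{M,\Lambda}\) then follows from~\eqref{eq:app-risk-lip}, with \(K\) fixed.

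The main obstacle is the budgeted ReLU approximation step, in particular obtaining the \(M^{-\beta/p}\) rate and the \(\Lambda\) rate simultaneously with one approximant. The rate \(M^{-\beta/p}\) holds even without a budget only for \(\beta\) in a restricted range. We would first try to cite the Siegel--Xu / Mao--Zhou-type bounds, which give \(\inf_{\|g\|_{\mathcal V}\leq\Lambda}\|f-g\|_\infty\lesssim\Lambda^{-2\beta/(p+3-2\beta)}\) together with \(M\)-term rates for elements of the variation ball. If those bounds only cover sub-\(\beta\) smoothness, we would fall back to a direct construction: a localized spline partition-of-unity approximant built from ReLU differences. The coefficient growth of that construction would then be tracked to recover the same exponent.
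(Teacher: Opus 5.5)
Your proposal is correct and follows the paper's proof essentially step for step. Lemma~\ref{lem:app-trans-shallow} reduces the problem to approximating $r^\circ_k\circ\chi^{-1}$, a bounded $C^\beta$ function on $I_1\times\cdots\times I_p$; each coordinate is then approximated by a budgeted shallow ReLU network, and \eqref{eq:app-Phi-lip} and \eqref{eq:app-risk-lip} finish the argument. The budgeted step you flag as the main obstacle is handled in the paper by citing Corollary~2.4 of \citet{yang2025shallow}, after an affine rescaling and a H\"older extension to a ball; that result directly gives a width-$M$ network with variation norm at most $\Lambda$ and uniform error $a_{M,\Lambda}$ for $0<\beta<(p+3)/2$, so your mollification-plus-Maurey sketch, which essentially reconstructs that result, and your spline fallback are not needed.
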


\begin{proof}
By Lemma~\ref{lem:app-trans-shallow}, it is enough to approximate
\[
 z\longmapsto
 r^\circ_k\{\chi_1^{-1}(z_1),\ldots,\chi_p^{-1}(z_p)\}
\]
on \(I_1\times\cdots\times I_p\), where \(r^\circ\) is defined in
\eqref{eq:app-raw-target}.  These transformed functions lie in a bounded
\(C^\beta\) ball.  After an affine rescaling and a standard Hölder
extension to a Euclidean ball, Corollary 2.4 of
\citet{yang2025shallow}, specialized to ReLU, gives a width-\(M\) shallow
network with output variation norm at most \(\Lambda\) and uniform error
\eqref{eq:app-trans-approx}.  Combining the \(K\) coordinatewise networks
uses at most \(KM\) units.  Equations~\eqref{eq:app-Phi-lip} and
\eqref{eq:app-risk-lip} complete the proof.
\end{proof}

Let \(\mathcal F_{M,\Lambda}^{\rm Tr}(M_0)\) be the full norm-restricted
Trans-CNQ class with fixed encoder dimensions and encoder budgets,
readout width \(KM\), and
\begin{equation}\label{eq:app-trans-growing-budgets}
 B_{\rm flat}\leq C\sqrt M,\qquad
 S_{\rm flat}\leq CM,\qquad
 B_{\rm out}+S_{\rm out}\leq C\Lambda,
 \qquad \max_k\|f_k\|_\infty\leq M_0.
\end{equation}
The approximant in Theorem~\ref{thm:app-trans-approx} belongs to this class
when \(M_0\) is sufficiently large.

\begin{lemma}[Growing-readout parameter entropy]
\label{lem:app-trans-growing-cover}
For every realized design, with
\begin{equation}\label{eq:app-trans-Xi}
 \Xi_{M,\Lambda}=C(M+\Lambda+1)\Lambda\sqrt M,
\end{equation}
we have
\begin{equation}\label{eq:app-trans-growing-cover}
 \log\mathcal N\{
 \mathcal F_{M,\Lambda}^{\rm Tr}(M_0)(\mathbf X),
 \|\cdot\|_F,u\}
 \leq
 CM\log\left(1+\frac{\Xi_{M,\Lambda}\sqrt N}{u}\right).
\end{equation}
\end{lemma}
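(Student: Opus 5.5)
The encoder is fixed and only the readout grows, so I would prove this with a parameter-space volume cover, following Lemma~\ref{lem:app-kan-cover}. The encoder needs a separate treatment because its parameter count does not depend on \(M\).

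\emph{Step 1: the fixed-dimensional encoder.} I would parametrize the class by a vector \(\vartheta=(\vartheta_{\rm enc},\vartheta_{\rm flat},\vartheta_{\rm out})\). The encoder block \(\vartheta_{\rm enc}\) collects the embedding, the LayerNorm, attention (through \(A_{\ell h}\) and the value maps) and FFN parameters, which are fixed in number. By~\eqref{eq:app-ln-cover}, \eqref{eq:app-ff-cover}, \eqref{eq:app-att-cover} and the recursion~\eqref{eq:app-trans-rec}, the map \(\vartheta_{\rm enc}\mapsto Z_L(x)\) is Lipschitz with a constant \(C\) independent of \(M\) and \(\Lambda\). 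This holds uniformly over \(\|x\|_2\le B_X\), and \(\|Z_L\|_F\le\sqrt p R_\star\). The encoder budgets are fixed, so the encoder parameter set lies in a fixed bounded ball of fixed dimension. A proper volume net therefore costs \(C\log(1+C/\delta)\) entropy.

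\emph{Step 2: Lipschitz dependence on the growing readout.} For a fixed encoder output \(Z_L\), the map \((W_{\rm flat},b_{\rm flat})\mapsto g(x)\) is \(R_{\rm flat,in}\)-Lipschitz in Frobenius norm, since ReLU is nonexpansive. Also \(\|g(x)\|_2\le B_{\rm flat}R_{\rm flat,in}\le C\sqrt M\). The map \((W_{\rm out},b_{\rm out})\mapsto r(x)\) is \((\|g\|_2^2+1)^{1/2}\le C\sqrt M\)-Lipschitz. Perturbing the encoder moves \(r\) by at most \(B_{\rm out}B_{\rm flat}\cdot C\|\Delta\vartheta_{\rm enc}\|\le C\Lambda\sqrt M\,\|\Delta\vartheta_{\rm enc}\|\). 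Perturbing the flat layer moves \(r\) by at most \(B_{\rm out}R_{\rm flat,in}\|\Delta\vartheta_{\rm flat}\|\le C\Lambda\|\Delta\vartheta_{\rm flat}\|\). Composing with \(\Phi\) via~\eqref{eq:app-Phi-lip} and using Cauchy--Schwarz across blocks gives
\[
\sup_x\|f_\vartheta(x)-f_{\widetilde\vartheta}(x)\|_2\le C\Lambda\sqrt M\,\|\vartheta-\widetilde\vartheta\|_2 .
\]
For the readout parameters I would use Euclidean boxes inside balls of radius \(C(S_{\rm flat}+S_{\rm out}+1)\le C(M+\Lambda+1)\). This follows from \(\|\bar W\|_F\le\|\bar W^\top\|_{2,1}\).

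\emph{Step 3: the cover.} The readout has \(P_M=r(pd+1)+K(r+1)\le CM\) parameters, since \(r\le KM\) and \(p,d,K\) are fixed. Together with the fixed encoder count, the total dimension is at most \(CM\), and the whole parameter set sits in a ball of radius \(C(M+\Lambda+1)\). I would take a proper net with \(\delta=u/(C\Lambda\sqrt M\sqrt N)\), which converts the sup-norm bound into the Frobenius norm over \(N\) rows. The volume bound \(D\log(1+2R/\delta)\) with \(D\le CM\) and \(R\le C(M+\Lambda+1)\) then yields~\eqref{eq:app-trans-growing-cover} with \(\Xi_{M,\Lambda}\) as in~\eqref{eq:app-trans-Xi}. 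The envelope intersection \(\max_k\|f_k\|_\infty\le M_0\) only removes functions. Using a proper net, meaning centres chosen inside the intersected class (up to the usual factor 2 in \(u\)), keeps the cover valid.

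\emph{Main obstacle.} The delicate step is Step 1. The attention parameter \(A_{\ell h}\) is a product of factorized maps, so I would cover the admissible \(A_{\ell h}\) set directly as a bounded subset of \(\mathbb R^{(d+1)\times(d+1)}\). I would also need to check that the softmax, LayerNorm and residual Lipschitz constants in~\eqref{eq:app-trans-rec} bound the parameter-to-output sensitivity, not only the input sensitivity. That requires a telescoping argument analogous to the proof of Theorem~\ref{thm:app-trans-entropy}, with parameter-Lipschitz constants replacing the entropy coefficients. All of those constants depend only on the fixed encoder budgets, so they are absorbed into \(C\).
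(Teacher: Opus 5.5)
Your proposal is correct and follows essentially the same route as the paper. The paper also builds a proper volume net over an effective parameter vector, with the attainable query--key products $A_{\ell h}$ as coordinates, of dimension at most $CM$ and radius $C(M+\Lambda+1)$; it uses the uniform parameter-to-output Lipschitz constant $C\Lambda\sqrt M$ and takes $\delta=u/(C\Lambda\sqrt{MN})$ to pass to the empirical Frobenius metric. Your block-by-block readout bookkeeping makes explicit what the paper states in one line, and the paper builds the maximal separated set directly inside the envelope-restricted parameter set, which removes your factor~2 in $u$.
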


\begin{proof}
For this cover, represent each network by an effective parameter
vector \(\vartheta^{\rm eff}\): its query--key coordinates are the
\(A_{\ell h}\)'s above, restricted to attainable products, and its other
coordinates are the remaining affine and LayerNorm parameters.
The number of scalar effective parameters is at most \(CM\), because the encoder
dimension is fixed and only the two readout matrices grow.  The norm
constraints imply that the concatenated effective parameter vector has radius at
most \(C(M+\Lambda+1)\).  The affine perturbation inequality, the
LayerNorm bound~\eqref{eq:app-ln-cover}, the attention and FFN
perturbation calculations above, and the finite recursion in the proof of
Theorem~\ref{thm:app-trans-entropy} give
\[
 \|f_{\vartheta^{\rm eff}}(x)
      -f_{\widetilde\vartheta^{\rm eff}}(x)\|_2
 \leq C\Lambda\sqrt M\,
       \|\vartheta^{\rm eff}
          -\widetilde\vartheta^{\rm eff}\|_2
\]
uniformly on \([0,1]^p\).  A maximal
\(\delta\)-separated subset of the admissible effective parameter set is
a proper net; volume comparison with
\(\delta=u/(C\Lambda\sqrt{MN})\) then proves
\eqref{eq:app-trans-growing-cover}.  Intersecting with the output envelope
does not increase the covering number because the net is constructed
inside the admissible parameter set.
\end{proof}

\begin{theorem}[Excess-risk convergence rate for the Trans-CNQ estimator]
\label{thm:app-trans-total}
Under~\eqref{eq:app-holder}, the conditions of
Lemma~\ref{lem:app-trans-shallow}, (A1)--(A2), and
\(\kappa_N(\eta)=O(N^{-1/2})\) at fixed confidence, the marginal-IPCW ERM
over \(\mathcal F_{M,\Lambda}^{\rm Tr}(M_0)\) satisfies
\begin{align}
 L(\widehat f_{N,M,\Lambda})-L(q)
 =O_{\mathbb P}\Bigg\{&
 a_{M,\Lambda}
 +\sqrt{\frac MN}
  \left[1+\sqrt{\log(2+\Xi_{M,\Lambda})}\right]
 +\frac1{\sqrt N}\Bigg\}.                                   \label{eq:app-trans-total}
\end{align}
In particular, for
\[
 \nu=\frac{p+3-2\beta}{2p},\qquad
 \Lambda_N=M_N^\nu,\qquad
 M_N\asymp\left\{\frac N{\log N}\right\}^{p/(2\beta+p)},
\]
\[
 L(\widehat f_{N,M_N,\Lambda_N})-L(q)
 =
 O_{\mathbb P}\left[
 \left\{\frac N{\log N}\right\}^{-\beta/(2\beta+p)}\right].
\]
\end{theorem}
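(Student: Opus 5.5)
The proof decomposes the total excess risk as \(L(\widehat f_{N,M,\Lambda})-L(q)=R_{\mathcal F^{\rm Tr}_{M,\Lambda}(M_0)}(\widehat f_{N,M,\Lambda})+\mathcal A_{N}\), where \(\mathcal A_N=\inf_{f\in\mathcal F^{\rm Tr}_{M,\Lambda}(M_0)}\{L(f)-L(q)\}\). Each term is then bounded separately, in the same way as in the proof of Theorem~\ref{thm:app-kan-total}.

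\textbf{Approximation term.} Theorem~\ref{thm:app-trans-approx} supplies \(f_{M,\Lambda}\) with \(L(f_{M,\Lambda})-L(q)\leq Ca_{M,\Lambda}\). Its readout uses width at most \(KM\) and output coefficient \(C\Lambda\). I must check that this approximant lies in the class~\eqref{eq:app-trans-growing-budgets}: the output budgets match directly, and the sup bound \(\sup_x\|f_{M,\Lambda}-q\|_2\leq Ca_{M,\Lambda}\leq C\) together with boundedness of \(q\) places it inside the envelope \(M_0\) once \(M_0\) is large. The flattened-layer budgets \(B_{\rm flat}\leq C\sqrt M\) and \(S_{\rm flat}\leq CM\) need to accommodate \(M\) rows \(v_{k\ell}\) of bounded norm. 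This holds if the inner weights in the construction of \citet{yang2025shallow} are normalized to the unit sphere, with the variation carried by the \(a_{k\ell}\); I would state that normalization explicitly. Hence \(\mathcal A_N\leq Ca_{M,\Lambda}\).

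\textbf{Estimation term.} I reuse Steps 1, 2 and 4 of the proof of Theorem~\ref{Risk_error}. Step 1 gives the ERM and Kaplan--Meier decomposition \eqref{eq:erm-decomp}, which contributes \(O(\kappa_N)=O(N^{-1/2})\). Step 2 is the contraction \eqref{eq:l2-contraction}; it transfers the Frobenius cover of Lemma~\ref{lem:app-trans-growing-cover} to the loss class, with a factor \(1/(G_\star\sqrt K)\). Step 3 I redo with the logarithmic entropy \eqref{eq:app-trans-growing-cover}, exactly as in Theorem~\ref{thm:app-kan-total}. Substituting \(u=\sqrt N t\), the Dudley integral is bounded by
\[
 C\sqrt{\frac MN}\int_0^{CM_0}\sqrt{\log(1+\Xi_{M,\Lambda}/t)}\,dt
 \leq C\sqrt{\frac MN}\bigl\{1+\sqrt{\log(2+\Xi_{M,\Lambda})}\bigr\}.
\]
The integrability of \(\sqrt{\log(1/t)}\) near zero means no truncation is needed. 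Bounded-difference concentration, using \(|h_f|\leq 2M_0/G_\star\), adds \(O_{\mathbb P}(N^{-1/2})\) at fixed \(\eta\). This gives \eqref{eq:app-trans-total}.

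\textbf{Rate.} With \(\Lambda_N=M_N^\nu\) and \(\nu=(p+3-2\beta)/(2p)\), we get \(\Lambda^{-2\beta/(p+3-2\beta)}=M^{-\beta/p}\). The assumption \(\beta<(p+3)/2\) makes \(\nu>0\), so \(\Lambda_N\geq1\) grows. Then \(a_{M_N,\Lambda_N}=M_N^{-\beta/p}\), and \(\Xi\) is polynomial in \(M_N\), so \(\log(2+\Xi)\asymp\log N\). Balancing \(M^{-\beta/p}\) against \(\sqrt{M\log N/N}\) gives \(M_N\asymp(N/\log N)^{p/(2\beta+p)}\), and both terms are then of order \((N/\log N)^{-\beta/(2\beta+p)}\). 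This dominates the \(N^{-1/2}\) term.

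\textbf{Main obstacle.} The delicate step is the membership check for the approximant. The shallow-network result controls only the output variation norm, while the class also imposes budgets \(B_{\rm flat}\) and \(S_{\rm flat}\) on the hidden layer and fixed encoder budgets. Everything hinges on rescaling each ReLU unit by positive homogeneity, which moves norm from \(v_{k\ell}\) and \(b_{k\ell}\) into \(a_{k\ell}\) without inflating the total beyond \(C\Lambda\). It also requires that the bounded ranges \(I_j\) of the \(\chi_j\) keep \(\|(v,b)\|\) uniformly bounded after the affine rescaling.
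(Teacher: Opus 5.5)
Your proposal is correct and matches the paper's proof essentially step for step. The approximation term comes from Theorem~\ref{thm:app-trans-approx}; the estimation term reuses Steps 1, 2 and 4 of Theorem~\ref{Risk_error}, with Step 3 redone under the logarithmic entropy of Lemma~\ref{lem:app-trans-growing-cover} (same substitution $u=\sqrt N\,t$, same bound $C\sqrt{M/N}\{1+\sqrt{\log(2+\Xi_{M,\Lambda})}\}$); and the rate follows from $\Lambda=M^\nu$ and the same balancing. Your explicit membership check, rescaling each ReLU unit by positive homogeneity so that the $B_{\rm flat}\le C\sqrt M$, $S_{\rm flat}\le CM$ and output budgets hold, fills in a step the paper only asserts in one line.
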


\begin{proof}
Theorem~\ref{thm:app-trans-approx} bounds the approximation term.

For estimation, we reuse Steps 1, 2, and 4 of the proof of
Theorem~\ref{Risk_error}, while recomputing Step 3 directly under the logarithmic
entropy bound~\eqref{eq:app-trans-growing-cover}.
After
\(u=\sqrt N\,t\), the entropy integral is bounded by
\[
 C\sqrt{\frac MN}\int_0^{2M_0\sqrt K}
 \sqrt{\log(1+\Xi_{M,\Lambda}/t)}\,dt
 \leq
 C\sqrt{\frac MN}
 \{1+\sqrt{\log(2+\Xi_{M,\Lambda})}\}.
\]
This proves~\eqref{eq:app-trans-total}.  Taking
\(\Lambda=M^\nu\) makes both terms in
\eqref{eq:app-trans-approx} equal to \(M^{-\beta/p}\).
The displayed \(M_N\) balances this with
\(\sqrt{M\log N/N}\), and substitution gives the final rate.
\end{proof}

\begin{remark}

The fitted Trans-CNQ architecture is a fixed member of the
corresponding Transformer sieve.  Under the stated input and norm budgets,
Corollary~\ref{cor:app-kan-oracle} likewise covers the fitted KAN because
its EfficientKAN edge functions satisfy the basis-expansion conditions of
\citet{zhang2024generalization}.\footnote{The spline grid remains
fixed at its initialization in our training pipeline:
\texttt{update\_grid} is never invoked, so the edge dictionary does not
adapt to the training data.}  In contrast, the KAN approximation result,
and hence its total rate, uses the special fixed-grid degree-\(p\) gate in
Lemma~\ref{lem:app-kan-realization}, rather than the fitted default
cubic-spline EfficientKAN architecture.  This distinction applies to all
datasets considered here, whose covariate dimensions are at least seven.
Increasing the grid size refines the number of spline knots but does not
change the cubic spline degree, and therefore does not by itself supply the
exact degree-\(p\) realization used in the proof.  These are upper-bound
comparisons, not evidence that either fitted architecture is statistically
superior.  Neither construction covers TransKAN-CNQ.
\end{remark}

\clearpage

\section{Quantile-Specific Pinball Loss}
\label{app:per_tau_pinball}

This section reports the IPCW-weighted pinball loss decomposed by quantile level $\tau\in\{0.1,0.25,0.5,0.75,0.9\}$ for METABRIC and FLCHAIN, complementing the quantile-averaged values in the main text. Tables~\ref{tab:per-tau-pinball-metabric} and~\ref{tab:per-tau-pinball-flchain} report the mean $\pm$ standard deviation across 25 random splits (log-time scale). On both datasets the loss peaks near the median ($\tau=0.5$) and is smallest in the upper tail. On METABRIC the two Transformer-based variants are at or below KAN-CNQ across all quantile levels; on FLCHAIN the three models are within $0.004$ of one another at every $\tau$, consistent with the aggregate results in the main text.

\begin{table}[t]
  \centering
  \caption{Per-quantile IPCW pinball loss (log-time scale) on METABRIC for the
  three proposed models, reported separately at each of the five quantile levels
  $\tau\in\{0.1,0.25,0.5,0.75,0.9\}$ rather than averaged; entries are the mean
  $\pm$ standard deviation across 25 random splits (lower is better). This
  decomposition complements the quantile-averaged results in the main text and
  shows where in the distribution each model gains: the loss peaks near the
  median ($\tau=0.5$) and is smallest in the upper tail ($\tau=0.9$), and the
  two Transformer-based variants (Trans-CNQ, TransKAN-CNQ) are at or below
  KAN-CNQ at every quantile level. IPCW weights are untruncated, matching the
  headline tables in the main text; sensitivity to truncation is reported in
  Table~\ref{tab:ipcw-trunc-metabric}.}
  \label{tab:per-tau-pinball-metabric}
  \small
  \begin{tabular}{l ccccc}
    \toprule
    Model & $\tau=0.1$ & $\tau=0.25$ & $\tau=0.5$ & $\tau=0.75$ & $\tau=0.9$ \\
    \midrule
    TransKAN-CNQ & 0.168 {\scriptsize $\pm$ 0.014} & 0.278 {\scriptsize $\pm$ 0.024} & 0.310 {\scriptsize $\pm$ 0.031} & 0.215 {\scriptsize $\pm$ 0.033} & 0.103 {\scriptsize $\pm$ 0.018} \\
    Trans-CNQ & 0.171 {\scriptsize $\pm$ 0.013} & 0.283 {\scriptsize $\pm$ 0.022} & 0.321 {\scriptsize $\pm$ 0.034} & 0.231 {\scriptsize $\pm$ 0.037} & 0.115 {\scriptsize $\pm$ 0.026} \\
    KAN-CNQ & 0.177 {\scriptsize $\pm$ 0.013} & 0.290 {\scriptsize $\pm$ 0.020} & 0.325 {\scriptsize $\pm$ 0.033} & 0.229 {\scriptsize $\pm$ 0.043} & 0.114 {\scriptsize $\pm$ 0.032} \\\bottomrule
  \end{tabular}
\end{table}

\begin{table}[t]
  \centering
  \caption{Per-quantile IPCW pinball loss (log-time scale) on FLCHAIN for the
  three proposed models, reported separately at each of the five quantile levels
  $\tau\in\{0.1,0.25,0.5,0.75,0.9\}$; entries are the mean $\pm$ standard
  deviation across 25 random splits (lower is better). The companion to
  Table~\ref{tab:per-tau-pinball-metabric} for the larger, more heavily censored
  ($\approx72\%$) FLCHAIN cohort: the same qualitative pattern holds---the loss
  is largest around the median and smallest in the upper tail. On FLCHAIN the
  three models are within $0.004$ of one another at every $\tau$, so the
  per-quantile ordering is not separated on this cohort. IPCW weights are
  untruncated, matching the headline tables in the main text; sensitivity to
  truncation is reported in Table~\ref{tab:ipcw-trunc-flchain}.}
  \label{tab:per-tau-pinball-flchain}
  \small
  \begin{tabular}{l ccccc}
    \toprule
    Model & $\tau=0.1$ & $\tau=0.25$ & $\tau=0.5$ & $\tau=0.75$ & $\tau=0.9$ \\
    \midrule
    TransKAN-CNQ & 0.278 {\scriptsize $\pm$ 0.022} & 0.396 {\scriptsize $\pm$ 0.024} & 0.378 {\scriptsize $\pm$ 0.020} & 0.237 {\scriptsize $\pm$ 0.012} & 0.107 {\scriptsize $\pm$ 0.006} \\
    Trans-CNQ & 0.280 {\scriptsize $\pm$ 0.021} & 0.399 {\scriptsize $\pm$ 0.023} & 0.382 {\scriptsize $\pm$ 0.021} & 0.239 {\scriptsize $\pm$ 0.014} & 0.108 {\scriptsize $\pm$ 0.006} \\
    KAN-CNQ & 0.277 {\scriptsize $\pm$ 0.021} & 0.396 {\scriptsize $\pm$ 0.023} & 0.380 {\scriptsize $\pm$ 0.021} & 0.238 {\scriptsize $\pm$ 0.013} & 0.107 {\scriptsize $\pm$ 0.006} \\\bottomrule
  \end{tabular}
\end{table}

\section{Simulation Study Details}
\label{app:sim}

\subsection{Design}
To assess the performance and robustness of our proposed methods, we conducted a simulation study with the covariate dimension fixed at $p=10$. We considered three event-time distributions: Gaussian, Gamma, and Weibull. The distributional parameters (the mean and variance for Gaussian, and the shape and scale for Gamma and Weibull) were modeled as functions of the covariates. Censoring times were generated from a uniform distribution on $[0,c]$, with $c$ adjusted to achieve three censoring proportions: light ($\approx 25\%$), medium ($\approx 50\%$), and heavy ($\approx 75\%$). We further varied the sample size over $n\in\{150,750,1500\}$. The combination of three event-time distributions, three censoring levels, and three sample sizes yields $27$ simulation settings.

\subsection{Performance Comparisons}
Table~\ref{tab:sim-pinball} reports the IPCW pinball loss for all 27 settings, and Figure~\ref{fig:sim-grid} visualizes the Weibull results. The picture is strongly \emph{distribution-dependent}. Under the Weibull and Gamma designs, where covariates reshape the distributional shape and tail behavior, the proposed models are clearly best, attaining the lowest pinball loss in $7/9$ and $6/9$ settings respectively, with the largest margins at moderate-to-large samples (e.g., Weibull with $50\%$ censoring and $n=1500$: Trans-CNQ $0.112$ versus the best baseline RSF $0.176$). Under the symmetric Gaussian design, by contrast, the quantile- and hazard-based methods perform very similarly (pinball losses of roughly $0.02$--$0.06$; only AFT under heavy censoring and CQRNN fall outside this range), and Random Survival Forest attains the lowest value in eight of the nine settings, though it leads the proposed models only marginally (e.g., $0.024$ versus $0.025$). This pattern is exactly what one would expect: the advantage of flexible distributional modeling is greatest when the conditional distribution is non-symmetric and its shape varies with covariates, and negligible when a symmetric location-family already captures the outcome.

The clearest and most consistent gains are relative to the other quantile-based neural methods. CQRNN is outperformed by the best proposed model in all $27$ settings, with pinball losses ranging from $0.068$ to $0.548$ and a particularly large gap under the Gaussian design, where it is three to five times less accurate than every other method. DeepQuantreg, which fits one network per quantile level under the same IPCW objective, is the more informative comparison because it isolates the effect of joint non-crossing estimation from the choice of loss: each of the three proposed models attains a lower pinball loss than DeepQuantreg in $25$ of the settings in which both were run, and the best proposed model does so in $26$ of $27$. The exceptions are confined to $n=150$ under medium or heavy censoring in the Gamma and Weibull designs, where all methods are unstable; at $n\ge750$ the gap is uniform and often large (for example, Weibull with $75\%$ censoring at $n=1500$: $0.117$ versus $0.321$). DeepQuantreg is nonetheless the strongest non-proposed method in $2$ of the $27$ settings and is never the weakest, so the ordering reflects the estimation strategy rather than a poorly tuned baseline. Random Survival Forest is the strongest overall competitor (it wins the Gaussian settings and is a close second under Weibull/Gamma), whereas the hazard-based deep learners (DeepSurv, CoxCC, CoxTime, PC-Hazard) and classical models (Cox, AFT, CTQR) are less accurate distributional predictors, with CTQR and AFT the weakest. Within the proposed family, TransKAN-CNQ is the most consistently strong (lowest of the three in $18$ of the $27$ settings), Trans-CNQ is a very close second ($7/27$) and occasionally preferable under small samples or heavy censoring, and KAN-CNQ is typically less competitive but still improves on most baselines. Larger $n$ and lighter censoring improve all methods; at the smallest sample size ($n=150$) the proposed models' advantage narrows and RSF becomes more competitive even under Weibull/Gamma.
\begin{sidewaystable}
  \centering
  \caption{Simulation study: IPCW pinball loss (self-normalized, log-time scale, averaged over $\tau\in\{0.1,0.25,0.5,0.75,0.9\}$) across all 27 settings. Each cell is the mean over 25 replications with the standard deviation in parentheses; lower is better. The best \emph{proposed} model per row is \textbf{bolded}. A dash (--) marks one configuration (Gamma, $50\%$ censoring, $n=150$) for which the TransKAN-CNQ run is unavailable. Gaussian, Gamma, and Weibull are abbreviated as GS, GM, and WB, respectively.}
  \label{tab:sim-pinball}
  \setlength{\tabcolsep}{2pt}\tiny
  \begin{tabular}{lll ccc c cccccc cccc}
    \toprule
    & & & \multicolumn{3}{c}{\emph{Proposed}} & & \multicolumn{6}{c}{\emph{Deep learning}} & \multicolumn{4}{c}{\emph{Traditional}} \\
    \cmidrule(lr){4-6}\cmidrule(lr){8-13}\cmidrule(lr){14-17}
    Dist. & Censoring & $n$ & TransKAN-CNQ & Trans-CNQ & KAN-CNQ & & DeepQuantreg & CQRNN & DeepSurv & CoxCC & CoxTime & PC-Hazard & RSF & Cox & AFT & CTQR \\
    \midrule
    GS & Light (25\%) & 150 & 0.050 {\scriptsize (0.033)} & \textbf{0.047} {\scriptsize (0.026)} & 0.053 {\scriptsize (0.026)} & & 0.112 {\scriptsize (0.026)} & 0.150 {\scriptsize (0.035)} & 0.052 {\scriptsize (0.032)} & 0.052 {\scriptsize (0.032)} & 0.051 {\scriptsize (0.033)} & 0.073 {\scriptsize (0.035)} & 0.048 {\scriptsize (0.036)} & 0.053 {\scriptsize (0.033)} & 0.060 {\scriptsize (0.033)} & 0.054 {\scriptsize (0.039)} \\
     & Light (25\%) & 750 & 0.029 {\scriptsize (0.011)} & \textbf{0.029} {\scriptsize (0.010)} & 0.032 {\scriptsize (0.009)} & & 0.048 {\scriptsize (0.005)} & 0.086 {\scriptsize (0.041)} & 0.029 {\scriptsize (0.007)} & 0.030 {\scriptsize (0.007)} & 0.029 {\scriptsize (0.007)} & 0.030 {\scriptsize (0.007)} & 0.027 {\scriptsize (0.009)} & 0.035 {\scriptsize (0.012)} & 0.045 {\scriptsize (0.021)} & 0.034 {\scriptsize (0.014)} \\
     & Light (25\%) & 1500 & \textbf{0.025} {\scriptsize (0.006)} & 0.025 {\scriptsize (0.006)} & 0.028 {\scriptsize (0.006)} & & 0.035 {\scriptsize (0.006)} & 0.068 {\scriptsize (0.016)} & 0.026 {\scriptsize (0.007)} & 0.027 {\scriptsize (0.007)} & 0.025 {\scriptsize (0.006)} & 0.024 {\scriptsize (0.006)} & 0.024 {\scriptsize (0.006)} & 0.031 {\scriptsize (0.008)} & 0.036 {\scriptsize (0.011)} & 0.029 {\scriptsize (0.007)} \\
     & Medium (50\%) & 150 & \textbf{0.043} {\scriptsize (0.014)} & 0.043 {\scriptsize (0.012)} & 0.051 {\scriptsize (0.013)} & & 0.130 {\scriptsize (0.054)} & 0.147 {\scriptsize (0.038)} & 0.043 {\scriptsize (0.014)} & 0.044 {\scriptsize (0.013)} & 0.044 {\scriptsize (0.013)} & 0.095 {\scriptsize (0.074)} & 0.038 {\scriptsize (0.009)} & 0.043 {\scriptsize (0.012)} & 0.067 {\scriptsize (0.022)} & 0.042 {\scriptsize (0.010)} \\
     & Medium (50\%) & 750 & \textbf{0.031} {\scriptsize (0.014)} & 0.033 {\scriptsize (0.015)} & 0.035 {\scriptsize (0.014)} & & 0.058 {\scriptsize (0.008)} & 0.159 {\scriptsize (0.028)} & 0.031 {\scriptsize (0.009)} & 0.032 {\scriptsize (0.009)} & 0.032 {\scriptsize (0.010)} & 0.032 {\scriptsize (0.008)} & 0.029 {\scriptsize (0.009)} & 0.036 {\scriptsize (0.013)} & 0.058 {\scriptsize (0.037)} & 0.036 {\scriptsize (0.016)} \\
     & Medium (50\%) & 1500 & \textbf{0.027} {\scriptsize (0.008)} & 0.028 {\scriptsize (0.008)} & 0.030 {\scriptsize (0.008)} & & 0.041 {\scriptsize (0.010)} & 0.151 {\scriptsize (0.041)} & 0.027 {\scriptsize (0.009)} & 0.028 {\scriptsize (0.009)} & 0.027 {\scriptsize (0.008)} & 0.026 {\scriptsize (0.008)} & 0.025 {\scriptsize (0.007)} & 0.033 {\scriptsize (0.010)} & 0.059 {\scriptsize (0.057)} & 0.032 {\scriptsize (0.011)} \\
     & Heavy (75\%) & 150 & \textbf{0.042} {\scriptsize (0.018)} & 0.045 {\scriptsize (0.021)} & 0.059 {\scriptsize (0.021)} & & 0.137 {\scriptsize (0.035)} & 0.121 {\scriptsize (0.039)} & 0.040 {\scriptsize (0.011)} & 0.040 {\scriptsize (0.012)} & 0.042 {\scriptsize (0.013)} & 0.096 {\scriptsize (0.035)} & 0.036 {\scriptsize (0.010)} & 0.042 {\scriptsize (0.014)} & 0.170 {\scriptsize (0.079)} & 0.040 {\scriptsize (0.013)} \\
     & Heavy (75\%) & 750 & 0.032 {\scriptsize (0.012)} & \textbf{0.032} {\scriptsize (0.010)} & 0.039 {\scriptsize (0.014)} & & 0.079 {\scriptsize (0.010)} & 0.134 {\scriptsize (0.018)} & 0.032 {\scriptsize (0.010)} & 0.034 {\scriptsize (0.011)} & 0.033 {\scriptsize (0.010)} & 0.034 {\scriptsize (0.012)} & 0.030 {\scriptsize (0.010)} & 0.034 {\scriptsize (0.010)} & 0.178 {\scriptsize (0.100)} & 0.035 {\scriptsize (0.011)} \\
     & Heavy (75\%) & 1500 & \textbf{0.027} {\scriptsize (0.007)} & 0.028 {\scriptsize (0.007)} & 0.032 {\scriptsize (0.008)} & & 0.048 {\scriptsize (0.006)} & 0.115 {\scriptsize (0.017)} & 0.028 {\scriptsize (0.007)} & 0.029 {\scriptsize (0.007)} & 0.028 {\scriptsize (0.007)} & 0.028 {\scriptsize (0.007)} & 0.026 {\scriptsize (0.007)} & 0.031 {\scriptsize (0.008)} & 0.160 {\scriptsize (0.102)} & 0.031 {\scriptsize (0.008)} \\
    \addlinespace
    GM & Light (25\%) & 150 & \textbf{0.371} {\scriptsize (0.128)} & 0.390 {\scriptsize (0.118)} & 0.426 {\scriptsize (0.069)} & & 0.509 {\scriptsize (0.038)} & 0.472 {\scriptsize (0.046)} & 0.492 {\scriptsize (0.056)} & 0.499 {\scriptsize (0.058)} & 0.477 {\scriptsize (0.045)} & 0.599 {\scriptsize (0.064)} & 0.387 {\scriptsize (0.027)} & 0.563 {\scriptsize (0.052)} & 0.563 {\scriptsize (0.051)} & 0.902 {\scriptsize (0.167)} \\
     & Light (25\%) & 750 & \textbf{0.196} {\scriptsize (0.010)} & 0.200 {\scriptsize (0.011)} & 0.240 {\scriptsize (0.014)} & & 0.283 {\scriptsize (0.020)} & 0.330 {\scriptsize (0.026)} & 0.293 {\scriptsize (0.026)} & 0.324 {\scriptsize (0.030)} & 0.318 {\scriptsize (0.031)} & 0.409 {\scriptsize (0.036)} & 0.275 {\scriptsize (0.011)} & 0.533 {\scriptsize (0.021)} & 0.534 {\scriptsize (0.022)} & 0.863 {\scriptsize (0.084)} \\
     & Light (25\%) & 1500 & \textbf{0.192} {\scriptsize (0.007)} & 0.199 {\scriptsize (0.007)} & 0.213 {\scriptsize (0.006)} & & 0.214 {\scriptsize (0.007)} & 0.312 {\scriptsize (0.013)} & 0.239 {\scriptsize (0.016)} & 0.291 {\scriptsize (0.017)} & 0.287 {\scriptsize (0.017)} & 0.373 {\scriptsize (0.019)} & 0.255 {\scriptsize (0.006)} & 0.532 {\scriptsize (0.014)} & 0.532 {\scriptsize (0.014)} & 0.868 {\scriptsize (0.050)} \\
     & Medium (50\%) & 150 & -- & 0.513 {\scriptsize (0.050)} & \textbf{0.502} {\scriptsize (0.047)} & & 0.503 {\scriptsize (0.031)} & 0.519 {\scriptsize (0.058)} & 0.520 {\scriptsize (0.061)} & 0.525 {\scriptsize (0.074)} & 0.531 {\scriptsize (0.070)} & 0.611 {\scriptsize (0.081)} & 0.395 {\scriptsize (0.039)} & 0.559 {\scriptsize (0.076)} & 0.562 {\scriptsize (0.078)} & 0.676 {\scriptsize (0.092)} \\
     & Medium (50\%) & 750 & \textbf{0.203} {\scriptsize (0.015)} & 0.206 {\scriptsize (0.015)} & 0.245 {\scriptsize (0.016)} & & 0.403 {\scriptsize (0.025)} & 0.374 {\scriptsize (0.048)} & 0.317 {\scriptsize (0.028)} & 0.356 {\scriptsize (0.033)} & 0.345 {\scriptsize (0.028)} & 0.420 {\scriptsize (0.036)} & 0.286 {\scriptsize (0.016)} & 0.527 {\scriptsize (0.033)} & 0.530 {\scriptsize (0.034)} & 0.709 {\scriptsize (0.214)} \\
     & Medium (50\%) & 1500 & \textbf{0.185} {\scriptsize (0.008)} & 0.187 {\scriptsize (0.008)} & 0.209 {\scriptsize (0.006)} & & 0.278 {\scriptsize (0.029)} & 0.360 {\scriptsize (0.045)} & 0.257 {\scriptsize (0.013)} & 0.302 {\scriptsize (0.013)} & 0.288 {\scriptsize (0.017)} & 0.372 {\scriptsize (0.023)} & 0.259 {\scriptsize (0.009)} & 0.519 {\scriptsize (0.024)} & 0.521 {\scriptsize (0.024)} & 0.666 {\scriptsize (0.028)} \\
     & Heavy (75\%) & 150 & 0.469 {\scriptsize (0.054)} & \textbf{0.465} {\scriptsize (0.044)} & 0.474 {\scriptsize (0.049)} & & 0.452 {\scriptsize (0.043)} & 0.548 {\scriptsize (0.088)} & 0.579 {\scriptsize (0.093)} & 0.580 {\scriptsize (0.102)} & 0.580 {\scriptsize (0.097)} & 0.573 {\scriptsize (0.090)} & 0.426 {\scriptsize (0.059)} & 0.619 {\scriptsize (0.096)} & 0.629 {\scriptsize (0.098)} & 0.630 {\scriptsize (0.129)} \\
     & Heavy (75\%) & 750 & 0.369 {\scriptsize (0.095)} & 0.409 {\scriptsize (0.059)} & \textbf{0.349} {\scriptsize (0.065)} & & 0.425 {\scriptsize (0.016)} & 0.503 {\scriptsize (0.039)} & 0.375 {\scriptsize (0.037)} & 0.427 {\scriptsize (0.045)} & 0.429 {\scriptsize (0.038)} & 0.450 {\scriptsize (0.027)} & 0.299 {\scriptsize (0.019)} & 0.569 {\scriptsize (0.043)} & 0.577 {\scriptsize (0.044)} & 0.579 {\scriptsize (0.045)} \\
     & Heavy (75\%) & 1500 & \textbf{0.190} {\scriptsize (0.011)} & 0.196 {\scriptsize (0.014)} & 0.237 {\scriptsize (0.012)} & & 0.422 {\scriptsize (0.017)} & 0.409 {\scriptsize (0.046)} & 0.281 {\scriptsize (0.020)} & 0.340 {\scriptsize (0.037)} & 0.329 {\scriptsize (0.043)} & 0.409 {\scriptsize (0.036)} & 0.262 {\scriptsize (0.017)} & 0.554 {\scriptsize (0.049)} & 0.562 {\scriptsize (0.051)} & 0.568 {\scriptsize (0.045)} \\
    \addlinespace
    WB & Light (25\%) & 150 & \textbf{0.230} {\scriptsize (0.054)} & 0.255 {\scriptsize (0.043)} & 0.266 {\scriptsize (0.064)} & & 0.342 {\scriptsize (0.032)} & 0.334 {\scriptsize (0.061)} & 0.355 {\scriptsize (0.048)} & 0.366 {\scriptsize (0.048)} & 0.350 {\scriptsize (0.047)} & 0.435 {\scriptsize (0.071)} & 0.278 {\scriptsize (0.025)} & 0.391 {\scriptsize (0.050)} & 0.391 {\scriptsize (0.048)} & 0.746 {\scriptsize (0.167)} \\
     & Light (25\%) & 750 & 0.121 {\scriptsize (0.014)} & \textbf{0.121} {\scriptsize (0.013)} & 0.142 {\scriptsize (0.029)} & & 0.202 {\scriptsize (0.014)} & 0.293 {\scriptsize (0.077)} & 0.191 {\scriptsize (0.013)} & 0.243 {\scriptsize (0.030)} & 0.234 {\scriptsize (0.028)} & 0.299 {\scriptsize (0.047)} & 0.197 {\scriptsize (0.030)} & 0.384 {\scriptsize (0.086)} & 0.390 {\scriptsize (0.077)} & 0.756 {\scriptsize (0.284)} \\
     & Light (25\%) & 1500 & 0.115 {\scriptsize (0.004)} & \textbf{0.112} {\scriptsize (0.004)} & 0.119 {\scriptsize (0.003)} & & 0.151 {\scriptsize (0.005)} & 0.215 {\scriptsize (0.021)} & 0.159 {\scriptsize (0.007)} & 0.202 {\scriptsize (0.019)} & 0.207 {\scriptsize (0.019)} & 0.258 {\scriptsize (0.019)} & 0.168 {\scriptsize (0.004)} & 0.363 {\scriptsize (0.013)} & 0.356 {\scriptsize (0.075)} & 0.706 {\scriptsize (0.050)} \\
     & Medium (50\%) & 150 & \textbf{0.296} {\scriptsize (0.072)} & 0.345 {\scriptsize (0.070)} & 0.365 {\scriptsize (0.082)} & & 0.396 {\scriptsize (0.053)} & 0.478 {\scriptsize (0.042)} & 0.394 {\scriptsize (0.071)} & 0.399 {\scriptsize (0.068)} & 0.378 {\scriptsize (0.061)} & 0.479 {\scriptsize (0.110)} & 0.282 {\scriptsize (0.041)} & 0.416 {\scriptsize (0.071)} & 0.415 {\scriptsize (0.071)} & 0.656 {\scriptsize (0.202)} \\
     & Medium (50\%) & 750 & \textbf{0.127} {\scriptsize (0.008)} & 0.131 {\scriptsize (0.012)} & 0.151 {\scriptsize (0.012)} & & 0.269 {\scriptsize (0.024)} & 0.366 {\scriptsize (0.023)} & 0.218 {\scriptsize (0.020)} & 0.283 {\scriptsize (0.032)} & 0.265 {\scriptsize (0.026)} & 0.296 {\scriptsize (0.028)} & 0.200 {\scriptsize (0.009)} & 0.390 {\scriptsize (0.024)} & 0.395 {\scriptsize (0.024)} & 0.638 {\scriptsize (0.076)} \\
     & Medium (50\%) & 1500 & \textbf{0.112} {\scriptsize (0.006)} & 0.112 {\scriptsize (0.005)} & 0.124 {\scriptsize (0.006)} & & 0.173 {\scriptsize (0.010)} & 0.285 {\scriptsize (0.016)} & 0.173 {\scriptsize (0.009)} & 0.208 {\scriptsize (0.018)} & 0.209 {\scriptsize (0.015)} & 0.261 {\scriptsize (0.027)} & 0.176 {\scriptsize (0.007)} & 0.388 {\scriptsize (0.018)} & 0.393 {\scriptsize (0.018)} & 0.669 {\scriptsize (0.065)} \\
     & Heavy (75\%) & 150 & 0.421 {\scriptsize (0.079)} & \textbf{0.415} {\scriptsize (0.070)} & 0.438 {\scriptsize (0.073)} & & 0.421 {\scriptsize (0.055)} & 0.467 {\scriptsize (0.110)} & 0.453 {\scriptsize (0.097)} & 0.459 {\scriptsize (0.103)} & 0.437 {\scriptsize (0.089)} & 0.507 {\scriptsize (0.083)} & 0.319 {\scriptsize (0.066)} & 0.465 {\scriptsize (0.098)} & 0.471 {\scriptsize (0.101)} & 0.554 {\scriptsize (0.162)} \\
     & Heavy (75\%) & 750 & \textbf{0.159} {\scriptsize (0.060)} & 0.194 {\scriptsize (0.071)} & 0.181 {\scriptsize (0.057)} & & 0.377 {\scriptsize (0.019)} & 0.395 {\scriptsize (0.029)} & 0.281 {\scriptsize (0.036)} & 0.375 {\scriptsize (0.070)} & 0.366 {\scriptsize (0.040)} & 0.371 {\scriptsize (0.052)} & 0.223 {\scriptsize (0.012)} & 0.444 {\scriptsize (0.033)} & 0.452 {\scriptsize (0.033)} & 0.543 {\scriptsize (0.061)} \\
     & Heavy (75\%) & 1500 & \textbf{0.117} {\scriptsize (0.009)} & 0.129 {\scriptsize (0.012)} & 0.124 {\scriptsize (0.006)} & & 0.321 {\scriptsize (0.021)} & 0.365 {\scriptsize (0.032)} & 0.209 {\scriptsize (0.017)} & 0.247 {\scriptsize (0.036)} & 0.250 {\scriptsize (0.030)} & 0.260 {\scriptsize (0.027)} & 0.190 {\scriptsize (0.013)} & 0.434 {\scriptsize (0.030)} & 0.443 {\scriptsize (0.031)} & 0.551 {\scriptsize (0.034)} \\
    \bottomrule
  \end{tabular}
\end{sidewaystable}

\begin{figure}[t]
  \centering
  \includegraphics[width=\textwidth]{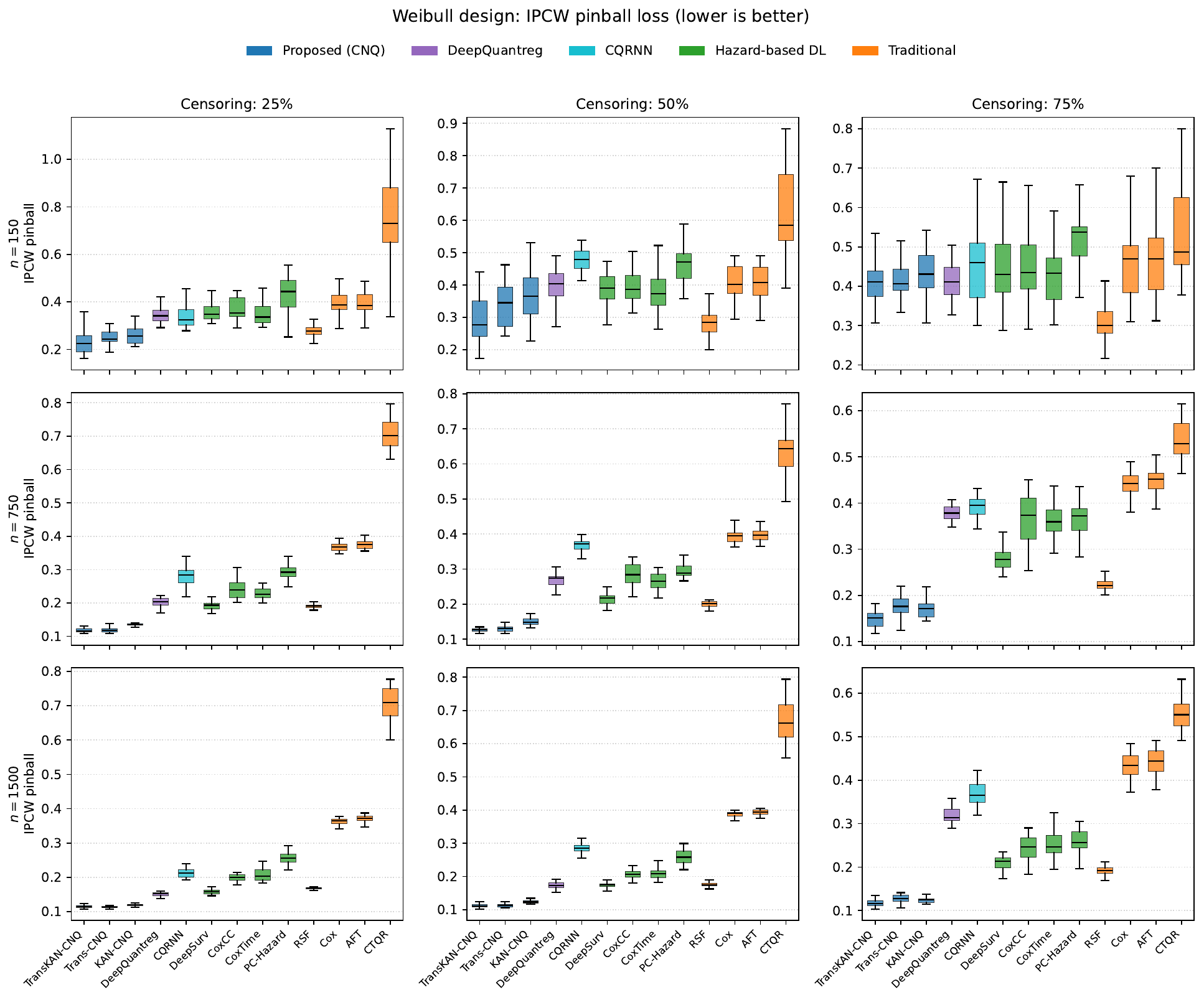}
  \caption{Simulation study, Weibull design: inverse-probability-of-censoring-weighted
  (IPCW) pinball loss (self-normalized, log-time scale, averaged over the five
  quantile levels $\tau\in\{0.1,0.25,0.5,0.75,0.9\}$) by method. The figure is
  a $3\times3$ grid of panels indexed by sample size (rows, $n\in\{150,750,1500\}$)
  and censoring level (columns, light $\approx25\%$, medium $\approx50\%$, heavy
  $\approx75\%$); within each panel the methods are arranged along the horizontal
  axis and the vertical axis reports the pinball loss (lower is better), with
  boxes/markers summarizing the distribution over the 25 replications. Colors
  distinguish the three proposed CNQ models (blue), DeepQuantreg (purple) and
  CQRNN (cyan) from the remaining deep-learning and
  traditional baselines. The proposed models attain the lowest loss in seven of
  the nine panels; the two exceptions are $n=150$ under medium and heavy
  censoring, where Random Survival Forest (RSF) is slightly lower. The margin
  over RSF, the strongest competing baseline, widens markedly as the sample size
  grows (from $-0.096$ to $+0.048$ at $n=150$ to between $+0.056$ and $+0.073$ at
  $n=1500$) and is otherwise stable across censoring levels, visualizing the
  Weibull rows of Table~\ref{tab:sim-pinball}.}
  \label{fig:sim-grid}
\end{figure}

\subsection{Quantile Crossing}
\label{app:crossing}
A practical advantage of the framework is that it enforces valid quantile ordering
by construction. The CNQ output layer returns the lowest quantile together with
non-negative increments, so $\widehat q_{\tau_1}\le\cdots\le\widehat q_{\tau_K}$
holds identically for every subject. We verified this directly on the fitted
models: across all datasets, settings and seeds, the measured crossing rate of all
three CNQ architectures is exactly zero, as the parameterization guarantees.

To quantify what is given up without this constraint, we refit the same five
quantile levels $\{0.1,0.25,0.5,0.75,0.9\}$ with DeepQuantreg, which trains one
network per level under the same IPCW objective, and measured how often the
resulting curves violate ordering. Table~\ref{tab:crossing} reports four rates,
each averaged over the 25 seeds and, for the simulation, over the 27 settings:
the fraction of test subjects with at least one adjacent-pair violation; the
fraction of (subject, adjacent-pair) combinations that are violated; and the
fraction of subjects for whom the two intervals used in this paper inverted,
$\widehat q_{0.1}>\widehat q_{0.9}$ and $\widehat q_{0.25}>\widehat q_{0.75}$.
These are the same DeepQuantreg runs whose losses appear in
Table~\ref{tab:sim-pinball} and in the real-data pinball table of the main text.

\begin{table}[h]
  \centering
  \caption{Quantile-crossing rates for DeepQuantreg when the five levels are fit
  separately (mean over 25 seeds; the simulation figure additionally averages the
  27 settings). The corresponding rate for all three CNQ models is $0$, both by
  construction and as measured on the fitted models. ``Any adjacent'' is the fraction of test subjects with at least
  one violated adjacent pair; ``adjacent pair'' is the fraction of
  (subject, pair) combinations violated.}
  \label{tab:crossing}
  \small
  \begin{tabular}{lcccc}
    \toprule
    & Any adjacent & Adjacent pair & $\widehat q_{0.1}>\widehat q_{0.9}$ & $\widehat q_{0.25}>\widehat q_{0.75}$ \\
    \midrule
    Real data (6 cohorts)   & $18.24\%$ & $5.07\%$  & $0.18\%$ & $0.45\%$ \\
    Simulation (27 settings) & $34.17\%$ & $10.49\%$ & $2.11\%$ & $6.12\%$ \\
    \bottomrule
  \end{tabular}
\end{table}

Crossing is thus not a rare edge case: at least one adjacent pair is inverted for
roughly one in five test subjects on the real cohorts and one in three in the
simulation, and $146$ of the $150$ real fits ($97.3\%$) and $648$ of the $675$
simulated fits ($96.0\%$) contain at least one crossing subject. The rate varies
widely with the data, from $2.4\%$ of subjects on SUPPORT to $45.7\%$ on the
144-subject NKI70 cohort, and reaches $65.2\%$ in the hardest simulation setting
(Weibull, $75\%$ censoring, $n=1500$). Inversions of the outer interval are much
rarer than adjacent-pair violations in the real cohorts ($0.18\%$), so the
$80\%$ coverage reported for DeepQuantreg in the main text is
computed on intervals that are almost always well defined; in the simulation,
however, the outer interval inverts for $2.11\%$ of subjects overall and for
$29.6\%$ in the worst Weibull setting, where the reported coverage of a
``prediction interval'' is correspondingly hard to interpret.

Crossing predictions are incompatible with a coherent conditional distribution and
undermine interpretability, especially when quantiles summarize short-,
intermediate-, and long-term prognosis; eliminating this pathology at the model
level rather than by post hoc correction yields predictions that are both more
coherent and more suitable for downstream clinical interpretation.

\clearpage
\section{Supplementary Tables}

This section collects the numerical results underlying the individual-level and
uncertainty analyses summarized in the main text, first for METABRIC and then
for FLCHAIN. Table~\ref{tab:metabric-profiles} lists the predicted quantile
milestones for five representative METABRIC patients: reading each row from
$\hat q_{0.1}$ to $\hat q_{0.9}$ traces the full predicted survival distribution
for that patient, and the accompanying covariates and observed outcome show how
the model turns a covariate vector into interpretable short-, intermediate-, and
long-term milestones. In particular, patients with similar predicted medians can
differ substantially in their lower-tail risk and in the width of their
predicted interval, which is precisely the individualized information a single
risk score cannot convey. Table~\ref{tab:metabric-unc} aggregates this behavior
across the cohort by stratifying test subjects into terciles of predicted
$80\%$-interval width $w_i=\hat q_{0.9,i}-\hat q_{0.1,i}$: for every architecture,
both the pinball loss and the empirical $50\%$/$80\%$ coverage increase
monotonically from the narrow to the wide tercile, showing that the model's own
interval width is a usable, label-free indicator of predictive difficulty.

\begin{table}[t]
  \centering
  \caption{Predicted quantile milestones for five representative METABRIC
  patients (Trans-CNQ, seed 42), with survival reported in months. For each
  patient (A--E) the table lists the five predicted quantiles
  $\hat q_{0.1},\hat q_{0.25},\hat q_{0.5},\hat q_{0.75},\hat q_{0.9}$
  (non-crossing by construction, so they increase from left to right), the
  observed follow-up time $T_{\mathrm{obs}}$, the event indicator, and two key
  clinical covariates (estrogen-receptor status ER and age in years). The rows
  illustrate how the model translates covariates into individualized,
  interpretable survival-time quantiles---for example, the ER$-$ patient D
  receives a wider predicted interval and a longer upper tail than the ER+
  patients---and correspond to the METABRIC profiles plotted in
  Figure~\ref{fig:patient-profiles}(a).}
  \label{tab:metabric-profiles}
  \small
  \begin{tabular}{lrrrrr rr cc}
    \toprule
    & $\hat{q}_{0.1}$ & $\hat{q}_{0.25}$ & $\hat{q}_{0.5}$ & $\hat{q}_{0.75}$ & $\hat{q}_{0.9}$
    & $T_{\mathrm{obs}}$ & Event & ER & Age \\
    \midrule
    Patient A & 28.3 & 48.7 &  91.5 & 138.8 & 184.8 &  41.8 & Yes & ER+ & 86 \\
    Patient B & 34.0 & 56.6 & 100.6 & 153.5 & 204.4 & 119.5 & Yes & ER+ & 69 \\
    Patient C & 36.9 & 64.7 & 102.7 & 149.9 & 187.8 &  85.6 & Yes & ER+ & 78 \\
    Patient D & 29.8 & 56.3 & 119.3 & 211.2 & 297.7 & 104.7 & Yes & ER$-$ & 58 \\
    Patient E & 34.0 & 56.7 &  99.8 & 147.9 & 192.4 & 100.3 & Yes & ER+ & 74 \\
    \bottomrule
  \end{tabular}
\end{table}

\begin{table}[t]
  \centering
  \caption{Uncertainty stratification on METABRIC (averaged over 25 seeds) for
  each of the three proposed models. Test subjects are grouped into terciles
  ---narrow, medium, wide---according to the predicted $80\%$ interval width
  $w_i=\hat q_{0.9,i}-\hat q_{0.1,i}$, an internal, label-free difficulty score.
  For each model and tercile the table reports the pinball loss (survival scale,
  in months) and the empirical coverage of the nominal $80\%$ and $50\%$
  prediction intervals. Across all three models the pinball loss and both
  coverages rise monotonically from the narrow to the wide tercile, confirming
  that predictions the model itself flags as uncertain (wider intervals) are
  genuinely harder and receive higher coverage; this is the tabular counterpart
  of Figure~\ref{fig:uncertainty}(a).}
  \label{tab:metabric-unc}
  \small
  \begin{tabular}{ll ccc}
    \toprule
    Model & Tercile & Pinball Loss & 80\% Cov. & 50\% Cov. \\
    \midrule
    TransKAN-CNQ  & Narrow & 20.10 & 0.759 & 0.476 \\
                  & Medium & 26.30 & 0.747 & 0.451 \\
                  & Wide   & 29.30 & 0.830 & 0.517 \\
    \addlinespace
    Trans-CNQ     & Narrow & 20.59 & 0.713 & 0.424 \\
                  & Medium & 28.02 & 0.705 & 0.396 \\
                  & Wide   & 30.78 & 0.798 & 0.526 \\
    \addlinespace
    KAN-CNQ       & Narrow & 23.27 & 0.705 & 0.412 \\
                  & Medium & 25.15 & 0.798 & 0.424 \\
                  & Wide   & 32.00 & 0.794 & 0.469 \\
    \bottomrule
  \end{tabular}
\end{table}

The next two tables repeat these analyses on the larger, more heavily censored
FLCHAIN cohort to check that the qualitative conclusions extend beyond oncology.
Table~\ref{tab:flchain-profiles} reports predicted quantile milestones for five
representative FLCHAIN patients, expressed in days rather than months, and again
shows coherent, individualized distributions with widely varying interval
widths. Table~\ref{tab:flchain-unc} gives the corresponding uncertainty
stratification (with IPCW weights truncated at the 95th percentile and an extra
column for the average number of subjects per tercile): as in METABRIC, wider
predicted intervals are accompanied by both larger pinball loss and higher
empirical coverage, so the interval-width diagnostic behaves the same way in a
cohort with $\approx72\%$ censoring. Together, Tables~\ref{tab:metabric-profiles}--\ref{tab:flchain-unc}
provide the detailed numbers behind the patient-profile and uncertainty figures
in Section~\ref{sec:supp-figs}.

\begin{table}[t]
  \centering
  \caption{Predicted quantile milestones for five representative FLCHAIN
  patients (Trans-CNQ, seed 42), with survival reported in days. As in
  Table~\ref{tab:metabric-profiles}, each row gives the five non-crossing
  predicted quantiles $\hat q_{0.1},\ldots,\hat q_{0.9}$, the observed follow-up
  time $T_{\mathrm{obs}}$, the event indicator, and two covariates (sex and age
  in years) for one patient (A--E). The examples show individualized milestones
  on a very different clinical scale (all-cause mortality over thousands of
  days) and correspond to the FLCHAIN profiles plotted in
  Figure~\ref{fig:patient-profiles}(b).}
  \label{tab:flchain-profiles}
  \small
  \begin{tabular}{lrrrrr rr cc}
    \toprule
    & $\hat{q}_{0.1}$ & $\hat{q}_{0.25}$ & $\hat{q}_{0.5}$ & $\hat{q}_{0.75}$ & $\hat{q}_{0.9}$
    & $T_{\mathrm{obs}}$ & Event & Sex & Age \\
    \midrule
    Patient A &  439 & 1{,}062 & 1{,}876 & 2{,}815 & 3{,}757 & 2{,}047 & Yes & F & 90 \\
    Patient B &  155 &   614   & 1{,}735 & 2{,}965 & 4{,}293 & 2{,}706 & Yes & M & 59 \\
    Patient C &  847 & 2{,}086 & 3{,}221 & 4{,}207 & 5{,}363 & 2{,}981 & Yes & F & 50 \\
    Patient D &  658 & 1{,}895 & 3{,}533 & 5{,}063 & 6{,}639 & 3{,}347 & Yes & F & 70 \\
    Patient E &  614 & 1{,}731 & 3{,}246 & 4{,}586 & 5{,}940 & 3{,}245 & Yes & M & 69 \\
    \bottomrule
  \end{tabular}
\end{table}

\begin{table}[t]
  \centering
  \caption{Uncertainty stratification on FLCHAIN (averaged over 25 seeds, with
  IPCW weights truncated at the 95th percentile) for each of the three proposed
  models. As in Table~\ref{tab:metabric-unc}, subjects are split into narrow,
  medium, and wide terciles by predicted $80\%$ interval width $w_i$; for each
  model and tercile the table reports the pinball loss, the empirical $80\%$ and
  $50\%$ interval coverages, and the average number of test subjects per tercile
  $\overline{n}_{\text{unc}}$. The monotone increase of loss and coverage with
  interval width---reproduced here in a larger, more heavily censored cohort
  ---confirms that interval width is a reliable internal indicator of predictive
  difficulty; see also Figure~\ref{fig:uncertainty}(b).}
  \label{tab:flchain-unc}
  \small
  \begin{tabular}{ll cccc}
    \toprule
    Model & Tercile & Pinball Loss & 80\% Cov. & 50\% Cov. & Avg.\ $n_{\text{unc}}$ \\
    \midrule
    TransKAN-CNQ  & Narrow & 354.7 & 0.773 & 0.469 & 234.6 \\
                  & Medium & 389.8 & 0.856 & 0.527 & 105.0 \\
                  & Wide   & 400.8 & 0.884 & 0.582 &  94.2 \\
    \addlinespace
    Trans-CNQ     & Narrow & 359.3 & 0.778 & 0.479 & 224.2 \\
                  & Medium & 393.0 & 0.864 & 0.546 & 106.3 \\
                  & Wide   & 403.7 & 0.891 & 0.614 & 103.3 \\
    \addlinespace
    KAN-CNQ       & Narrow & 346.8 & 0.792 & 0.485 & 188.3 \\
                  & Medium & 395.1 & 0.858 & 0.534 & 107.5 \\
                  & Wide   & 419.9 & 0.894 & 0.578 & 138.0 \\
    \bottomrule
  \end{tabular}
\end{table}

\clearpage
\section{Supplementary Figures and Event-Projection Results for the Real-Data
Analyses}
\label{sec:supp-figs}

This section presents the material supporting the real-data analyses, organized
into four groups: individual-level predictions, event projection, feature
importance, and model checking, followed by two robustness tables.  The
event-projection group carries the full setup and error table for the analysis
summarized in the main text.
Figure~\ref{fig:patient-profiles} shows representative patient-level quantile
profiles for METABRIC (a) and FLCHAIN (b): each horizontal line is one patient's
predicted $80\%$ interval $[\hat q_{0.1},\hat q_{0.9}]$, the markers are the five
predicted quantiles, and a cross marks the observed event time. The figure makes
concrete how the model separates patients with similar medians but different
tail risk or different predictive uncertainty, and it visualizes the same five
example patients tabulated in Tables~\ref{tab:metabric-profiles}
and~\ref{tab:flchain-profiles}. Figure~\ref{fig:uncertainty} then aggregates
this uncertainty information across the cohort, plotting pinball loss and
empirical coverage against the tercile of predicted interval width for METABRIC
(a) and FLCHAIN (b); the upward trend in both panels confirms that wider
self-reported intervals correspond to genuinely harder, better-covered
predictions, and mirrors Tables~\ref{tab:metabric-unc} and~\ref{tab:flchain-unc}.

\begin{figure}[ht]
  \centering
  \begin{tabular}{cc}
    \includegraphics[width=0.47\textwidth]{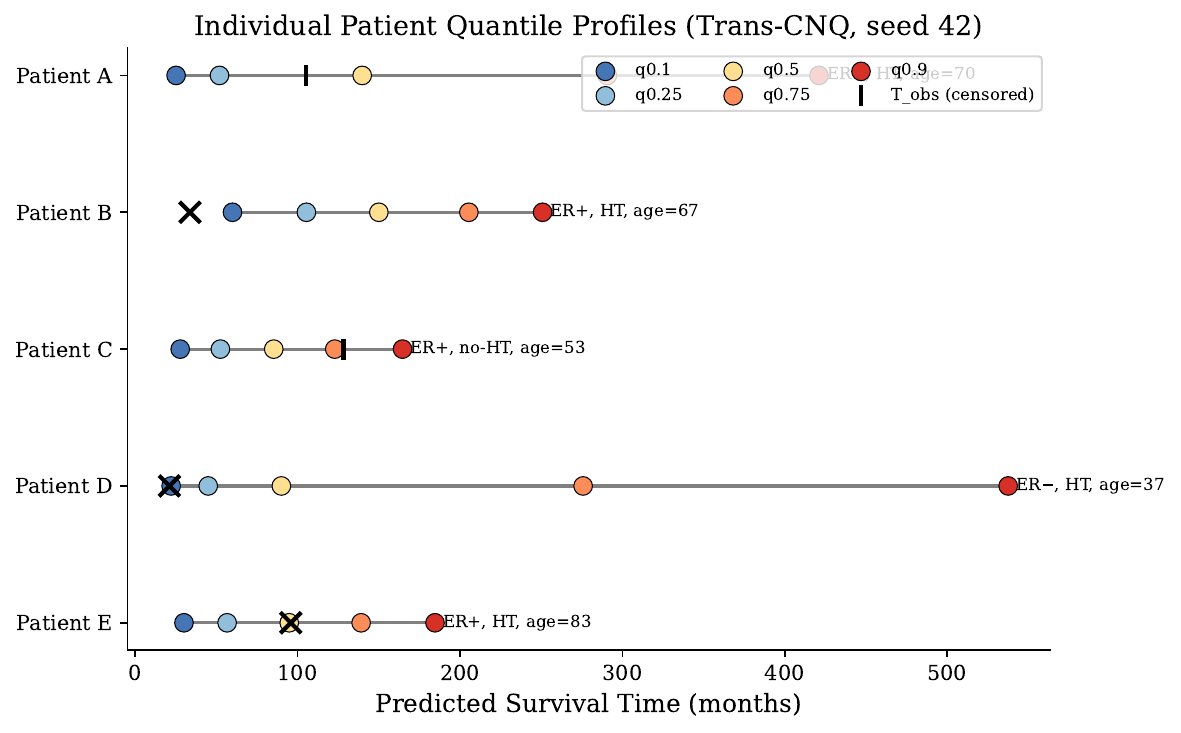} &
    \includegraphics[width=0.47\textwidth]{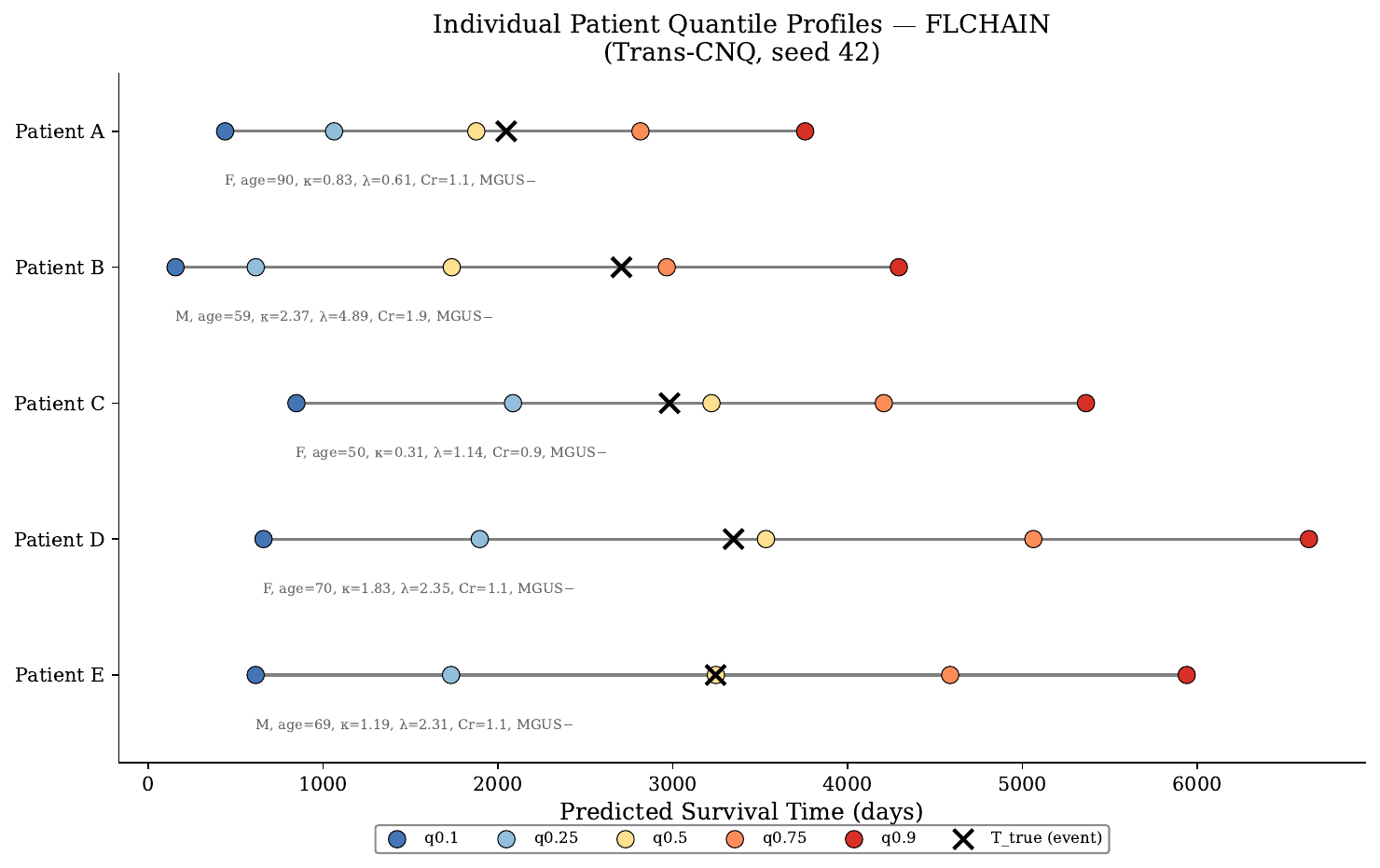} \\[-2pt]
    \small (a) METABRIC patient profiles &
    \small (b) FLCHAIN patient profiles
  \end{tabular}
  \caption{Representative patient-level quantile profiles for (a) METABRIC
  (survival in months) and (b) FLCHAIN (survival in days), produced by
  Trans-CNQ on a single representative split (seed 42). Each row corresponds
  to one patient; the horizontal axis is predicted survival time. For each
  patient the horizontal line spans the predicted $80\%$ prediction interval
  $[\hat q_{0.1},\hat q_{0.9}]$, the colored markers denote the five predicted
  quantiles $\hat q_{0.1},\hat q_{0.25},\hat q_{0.5},\hat q_{0.75},\hat q_{0.9}$
  (increasing left to right, by construction non-crossing), and a cross marks
  the observed event time for uncensored subjects. The panels show that the
  model produces individualized, coherent quantile milestones whose spread
  (interval width) varies substantially across patients, and that observed
  event times for uncensored subjects typically fall within the predicted
  interval. These are the same five example patients tabulated in
  Tables~\ref{tab:metabric-profiles} and~\ref{tab:flchain-profiles}.}
  \label{fig:patient-profiles}
\end{figure}

\begin{figure}[ht]
  \centering
  \begin{tabular}{cc}
    \includegraphics[width=0.47\textwidth]{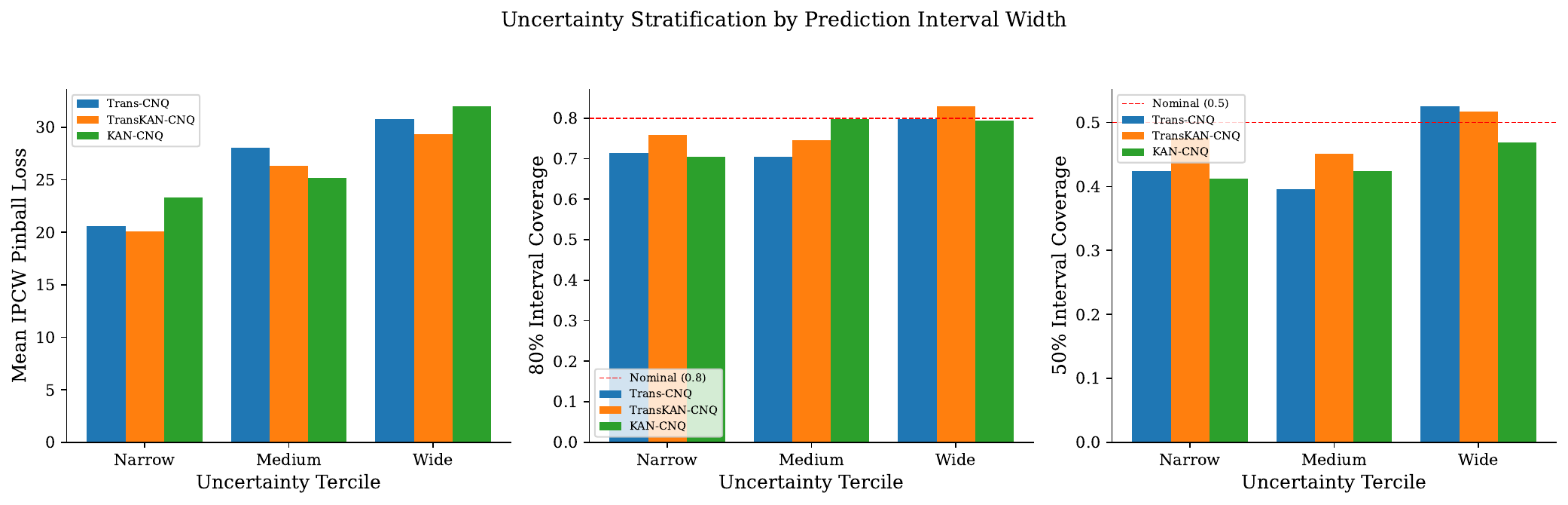} &
    \includegraphics[width=0.47\textwidth]{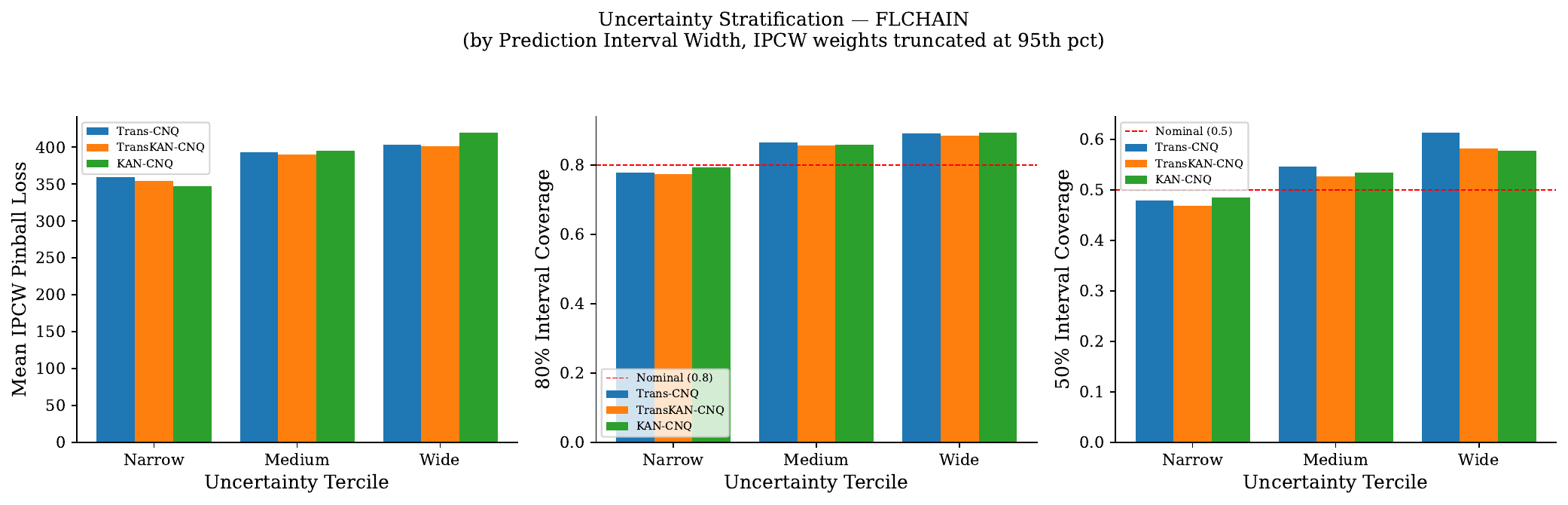} \\[-2pt]
    \small (a) METABRIC uncertainty stratification &
    \small (b) FLCHAIN uncertainty stratification
  \end{tabular}
  \caption{Uncertainty stratification for (a) METABRIC and (b) FLCHAIN
  (Trans-CNQ, averaged over 25 seeds). Test subjects are partitioned into
  terciles---narrow, medium, and wide---by the predicted $80\%$ interval width
  $w_i=\hat q_{0.9,i}-\hat q_{0.1,i}$, a purely internal, label-free measure of
  predictive difficulty. Within each tercile the panels report the IPCW pinball
  loss and the empirical coverage of the nominal $50\%$ and $80\%$ intervals;
  dashed horizontal lines mark the nominal coverage levels. Moving from narrow
  to wide terciles, both the pinball loss and the empirical coverage increase
  monotonically, showing that predictions the model flags as more uncertain
  (wider intervals) are indeed harder and better covered---so interval width
  can serve as a self-reported reliability signal at the individual level.
  The underlying numbers appear in Tables~\ref{tab:metabric-unc}
  and~\ref{tab:flchain-unc}.}
  \label{fig:uncertainty}
\end{figure}

\subsection{Event Projection on METABRIC}
\label{app:metabric-ep}

We evaluate three projection strategies on the METABRIC test sets (seeds 41--65; $n_{\text{sim}}=500$ Monte Carlo draws per subject).

\paragraph{Setup}
The \emph{latent} method projects events by integrating $\hat{F}(t|X_i)$ over all test subjects, treating the estimated CDF as if no administrative cutoff existed.
The \emph{censoring-aware} method (Trans-CNQ $+$ cens.) samples event times $T_{\text{sim}} \sim \hat{F}(\cdot | X_i, T > s)$ via inverse-CDF and censoring times $C_{\text{sim}} \sim \hat{G}(C | C > s)$ via a reverse Kaplan--Meier estimator, then counts individuals for whom $T_{\text{sim}} \leq C_{\text{sim}}$ and $T_{\text{sim}} \leq t$.
The \emph{Weibull} baseline fits a marginal Weibull distribution on the observed $(T, \delta)$ pairs at each cutoff and applies the same censoring model.
Ground truth is the raw cumulative observed event count in the test set up to the final time horizon.

\paragraph{Results}
Table~\ref{tab:metabric-ep} reports the percentage error relative to the raw observed count at the end of follow-up ($t = 200$ months; 25-seed average), and the full projection curves are provided in the Supplementary Material.

\begin{table}[t]
  \centering
  \caption{METABRIC event projection error (\%) at $t = 200$ months (seeds 41--65, mean).
           Positive values indicate overestimation of cumulative events.}
  \label{tab:metabric-ep}
  \small
  \begin{tabular}{lccc}
    \toprule
    Cutoff (months) & Latent & Trans-CNQ $+$ cens. & Weibull $+$ cens. \\
    \midrule
    24 & $+35.2$ & $+19.9$ & $+63.8$ \\
    48 & $+34.4$ & $+20.4$ & $+42.1$ \\
    72 & $+30.4$ & $+19.9$ & $+16.6$ \\
    96 & $+25.4$ & $+18.5$ & $ +6.9$ \\
    \bottomrule
  \end{tabular}
\end{table}

All three methods overestimate at every cutoff.
Notably, the KM-adjusted count (gray dashed) also lies above the raw observed events, indicating that standard KM-based correction itself can become optimistic when the projection horizon extends into sparsely observed follow-up. We therefore use the raw observed event count as the operational reference throughout. The censoring-aware Trans-CNQ procedure (green) consistently tracks this reference more closely than the KM-based curve, with errors of $+18$--$+21\%$ across all cutoffs, indicating that explicit modeling of $\hat{G}(C)$ improves stability. The Weibull benchmark (purple) shows pronounced horizon dependence: it overshoots severely when only short follow-up is available ($+63.8\%$ at the 24-month cutoff), because projection relies heavily on parametric tail extrapolation, but approaches the observed count once most of the event-time distribution has become visible ($+6.9\%$ at the 96-month cutoff).

\paragraph{Root cause}
The persistent overestimation across all approaches appears to share the same mechanism: under IPCW training, censored observations contribute no direct loss beyond their weighting role, so the subject-specific lower-bound information $T_i>C_i$ is not explicitly enforced in the optimization target. Late-censored subjects therefore tend to have their event times predicted too early, which inflates projected cumulative event counts. Likelihood-based approaches such as Weibull or Cox incorporate this lower-bound information through the survival contribution $S(C_i\mid X_i)$ and can therefore be better calibrated in the far tail, although they remain imperfect under the moderate censoring present in METABRIC.

Figure~\ref{fig:metabric-ep} shows the same comparison as curves rather than
endpoint errors. At each of the four cutoffs (24, 48, 72, and 96 months) it
overlays the raw observed cumulative events, a Kaplan--Meier-based latent
reference that adjusts for censoring, the Trans-CNQ projection with the
censoring model, and a parametric Weibull projection. Consistent with
Table~\ref{tab:metabric-ep}, all of the projections lie above the raw observed
count; the Trans-CNQ curve stays closest to the censoring-adjusted reference
across all four horizons, whereas the Weibull projection drifts furthest where
its parametric shape is misspecified and the follow-up is shortest.

\begin{figure}[ht]
  \centering
  \includegraphics[width=\textwidth]{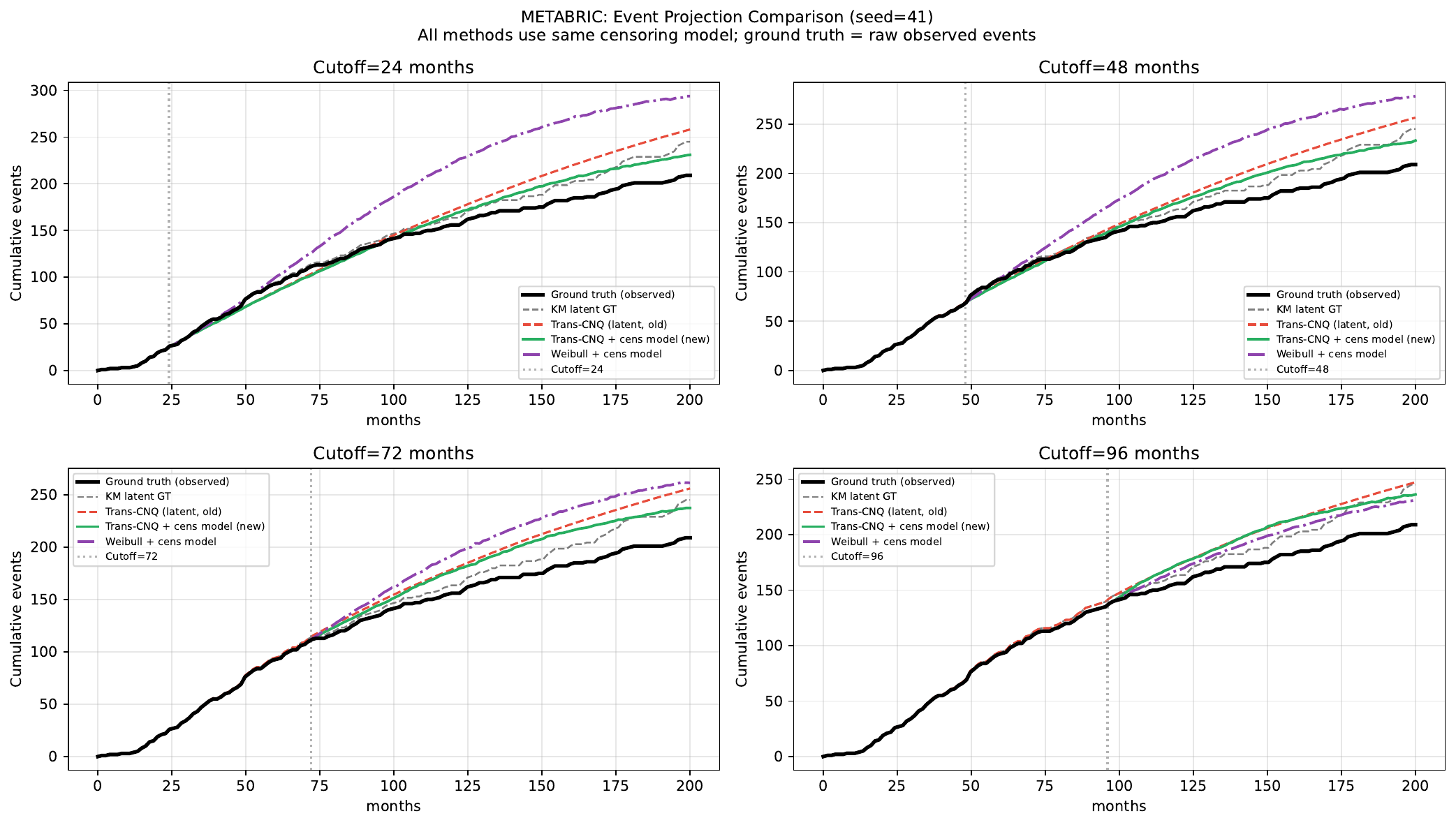}
  \caption{METABRIC event-projection curves (single representative split,
  seed 41) evaluated at four calendar cutoffs---24, 48, 72, and 96 months
  (vertical dotted lines). The horizontal axis is calendar/follow-up time in
  months and the vertical axis is the cumulative number (or proportion) of
  events projected to have occurred by each time. Four curves are overlaid:
  the solid black curve is the raw observed cumulative-event count; the gray
  dashed curve is the Kaplan--Meier-based latent reference that adjusts for
  censoring; the green curve is the Trans-CNQ projection obtained with the
  censoring (IPCW) model; and the purple dash-dot curve is the corresponding
  parametric Weibull projection with the same censoring model. The figure
  assesses how closely each method reproduces the true event accrual over
  time: the Trans-CNQ curve tracks the KM latent reference closely across all
  four cutoffs, whereas the Weibull projection deviates more where the
  parametric shape is misspecified, illustrating the benefit of the flexible
  distributional model for event-count forecasting.}
  \label{fig:metabric-ep}
\end{figure}

The next three figures examine which covariates drive the predictions and
whether the recovered structure is stable across architectures.
Figure~\ref{fig:feature-importance} is a permutation feature-importance heatmap
for METABRIC, averaged over the three proposed models and 25 splits, in which
each cell reports the increase in IPCW pinball loss $\Delta$ from permuting a
covariate at a given quantile level; reading a row across $\tau$ shows whether a
covariate acts on the lower, central, or upper part of the predicted
distribution, and age and chemotherapy emerge as dominant while hormone therapy
shows a pronounced upper-tail effect. Figures~\ref{fig:fi-metabric-compact}
and~\ref{fig:fi-flchain-compact} break this down by architecture for METABRIC and
FLCHAIN, respectively. For METABRIC the three architectures agree closely,
indicating that the importance pattern reflects genuine data structure; for
FLCHAIN the two Transformer-based models agree (age and the serum free light
chains dominate) whereas KAN-CNQ ranks creatinine first, illustrating the
architecture dependence of attribution in the pure-KAN backbone noted
in the main text.

\begin{figure}[ht]
  \centering
  \includegraphics[width=0.72\textwidth]{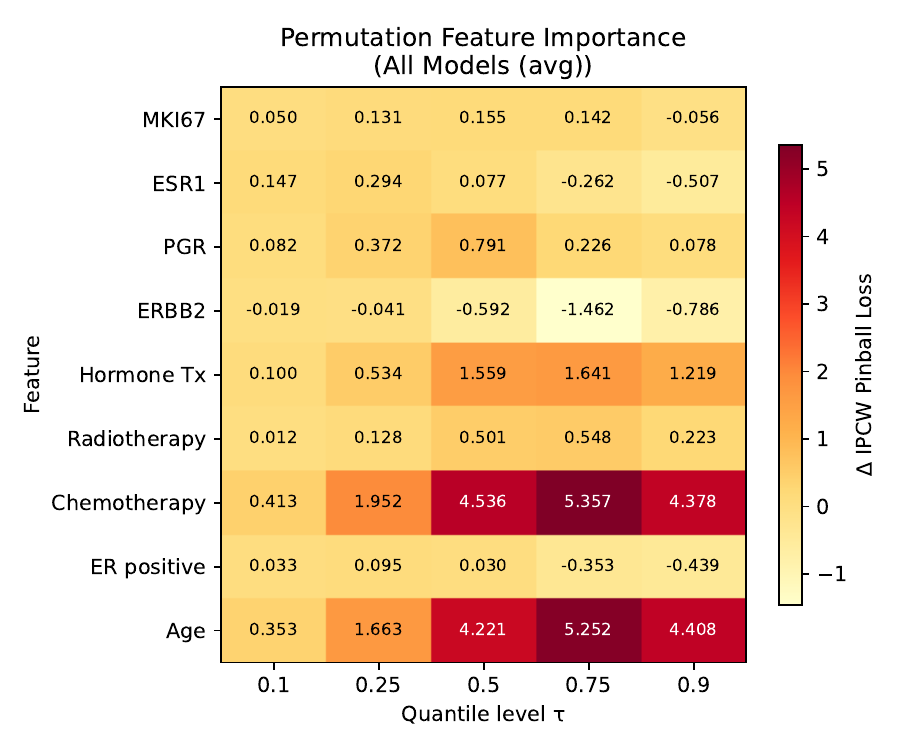}
  \caption{Permutation feature-importance heatmap for METABRIC, averaged over
  25 random splits and pooled across the three proposed architectures
  (Trans-CNQ, TransKAN-CNQ, KAN-CNQ). Rows index the covariates and columns
  index the five quantile levels $\tau\in\{0.1,0.25,0.5,0.75,0.9\}$; the color
  of each cell encodes the increase in IPCW pinball loss, $\Delta$, incurred
  when the corresponding covariate is randomly permuted (breaking its
  association with the outcome) while all others are held fixed, so that larger
  (warmer) values indicate greater prognostic importance at that quantile.
  Reading a row across $\tau$ reveals whether a covariate acts mainly on the
  lower (early-event), central, or upper (long-survivor) part of the predicted
  distribution rather than on a single summary. Age and chemotherapy dominate
  overall, while hormone therapy shows a pronounced effect concentrated in the
  upper quantiles---a tail-specific signal that a single hazard ratio would
  average away.}
  \label{fig:feature-importance}
\end{figure}

\begin{figure}[ht]
  \centering
  \begin{tabular}{ccc}
    \includegraphics[width=0.31\textwidth]{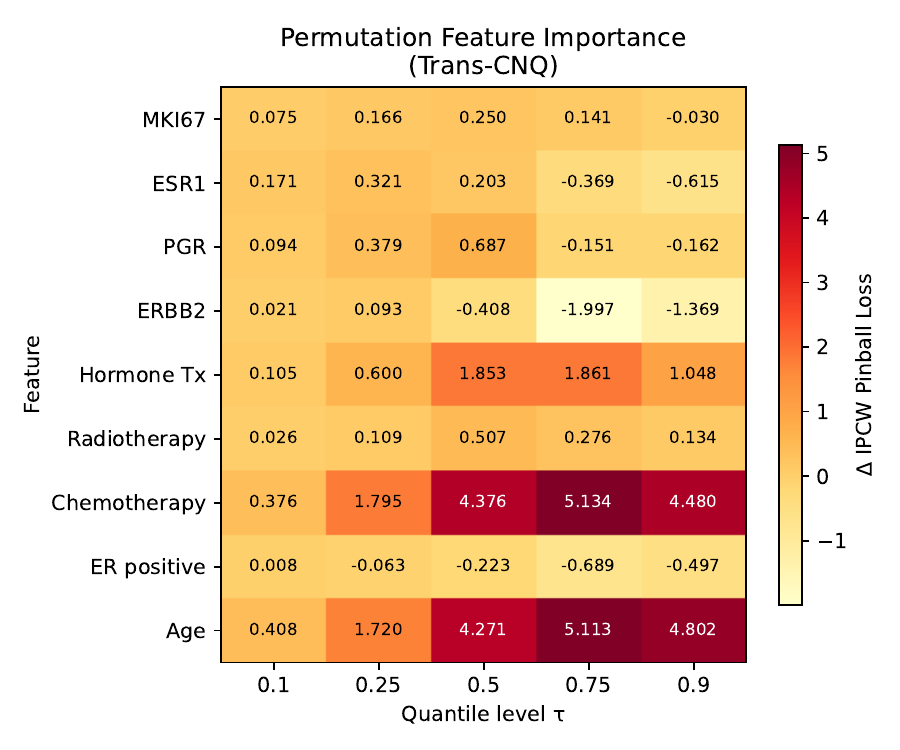} &
    \includegraphics[width=0.31\textwidth]{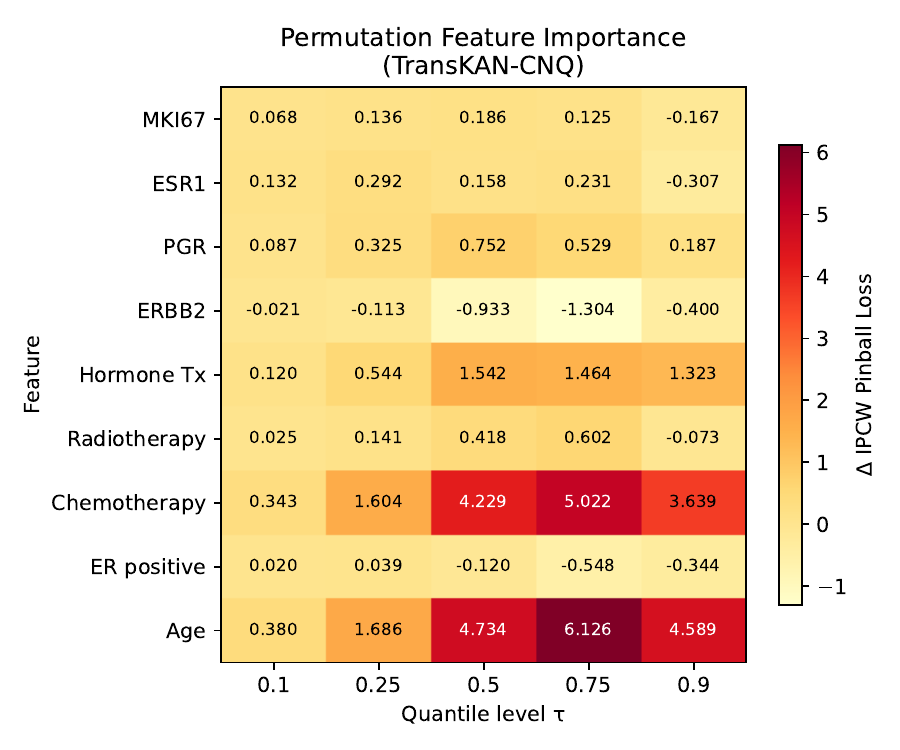} &
    \includegraphics[width=0.31\textwidth]{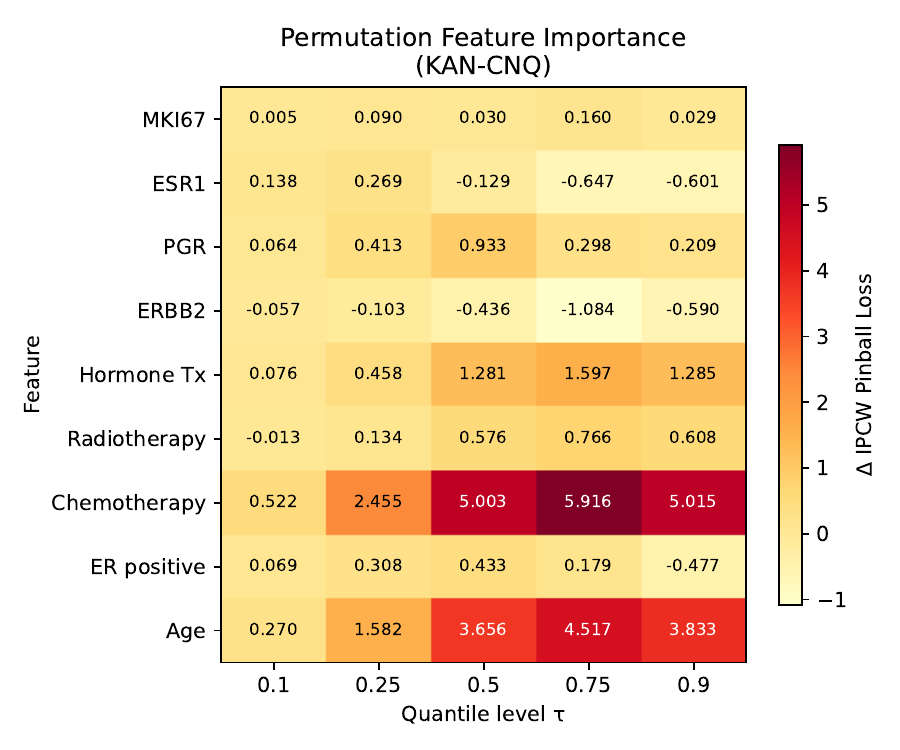} \\[-2pt]
    \small (a) Trans-CNQ &
    \small (b) TransKAN-CNQ &
    \small (c) KAN-CNQ
  \end{tabular}
  \caption{Architecture-specific permutation feature-importance heatmaps for
  METABRIC, shown separately for (a) Trans-CNQ, (b) TransKAN-CNQ, and
  (c) KAN-CNQ (each averaged over 25 random splits). As in
  Figure~\ref{fig:feature-importance}, rows are covariates, columns are the
  five quantile levels $\tau$, and cell color encodes the increase in IPCW
  pinball loss $\Delta$ from permuting that covariate; the three panels share a
  common color scale to make them directly comparable. Decomposing the pooled
  heatmap by architecture shows that the three models yield broadly similar
  qualitative rankings---age and chemotherapy dominate prognosis and hormone
  therapy again exhibits a pronounced upper-quantile ($\tau=0.75,0.9$) effect
  ---indicating that the recovered prognostic structure is a stable property of
  the data rather than an artifact of any single architecture.}
  \label{fig:fi-metabric-compact}
\end{figure}

\begin{figure}[ht]
  \centering
  \begin{tabular}{ccc}
    \includegraphics[width=0.31\textwidth]{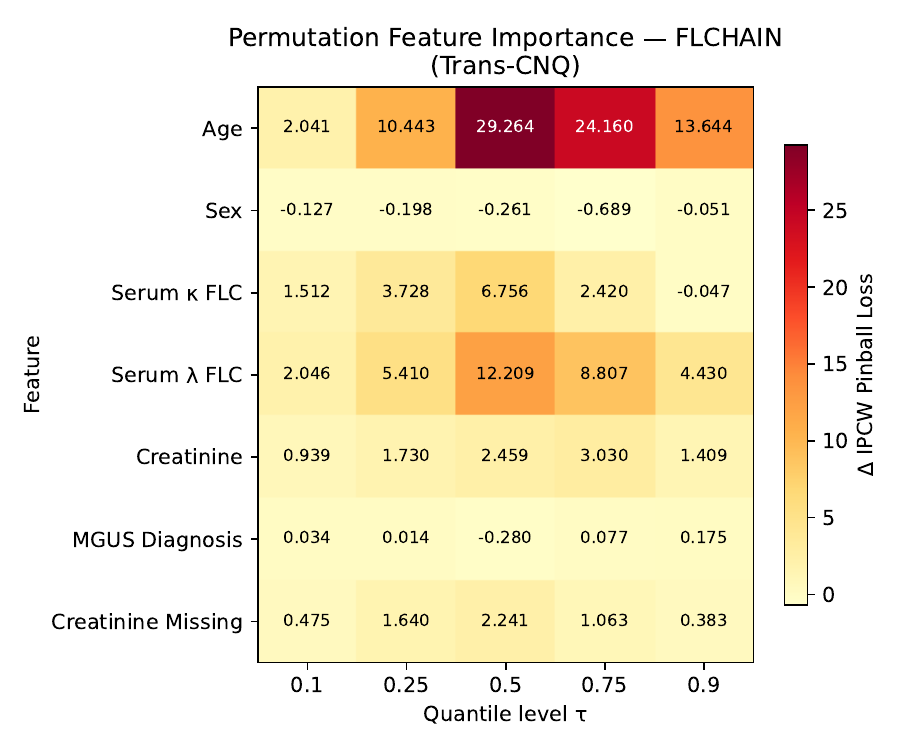} &
    \includegraphics[width=0.31\textwidth]{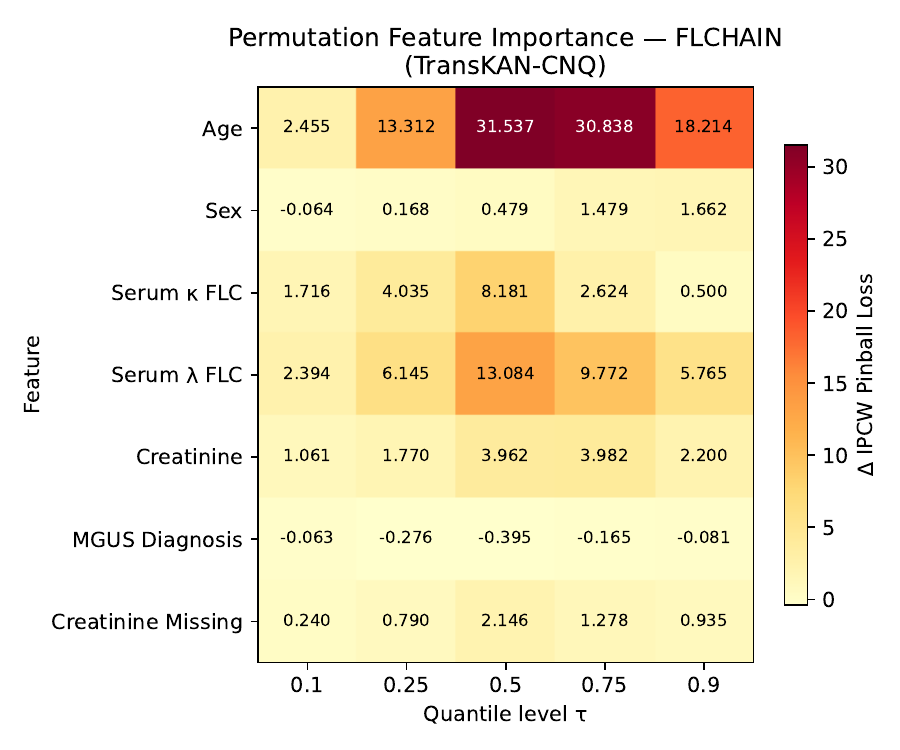} &
    \includegraphics[width=0.31\textwidth]{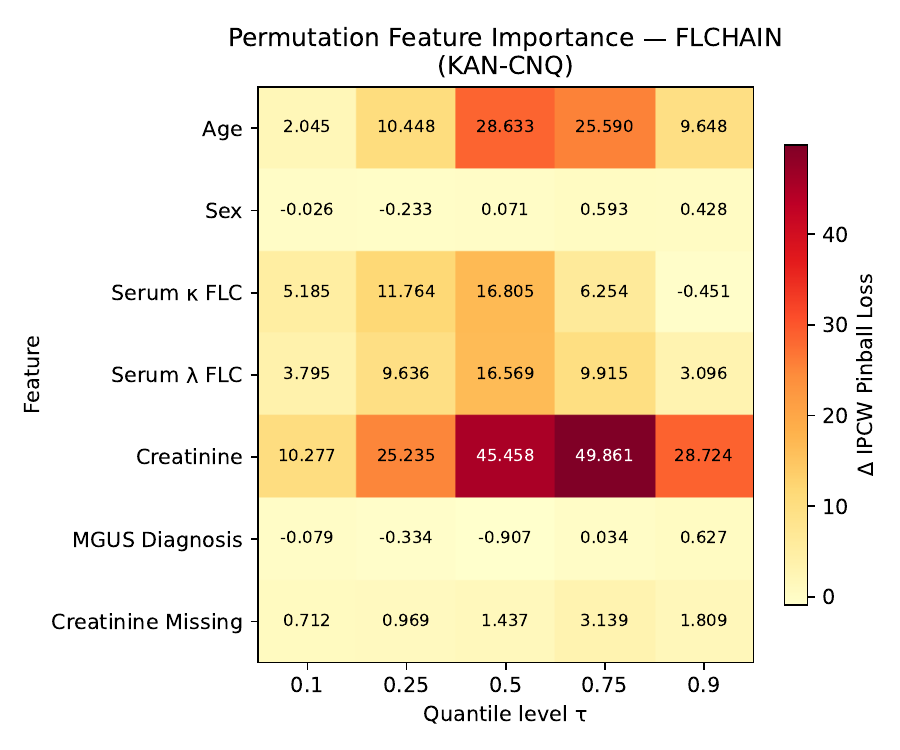} \\[-2pt]
    \small (a) Trans-CNQ &
    \small (b) TransKAN-CNQ &
    \small (c) KAN-CNQ
  \end{tabular}
  \caption{Architecture-specific permutation feature-importance heatmaps for
  FLCHAIN, shown separately for (a) Trans-CNQ, (b) TransKAN-CNQ, and
  (c) KAN-CNQ (each averaged over 25 random splits). Rows are covariates,
  columns are the five quantile levels $\tau$, and cell color encodes the
  increase in IPCW pinball loss $\Delta$ from permuting that covariate. The two
  Transformer-based models produce broadly similar patterns, with age and the
  serum free light chains (kappa and lambda) dominating prognosis, whereas
  KAN-CNQ ranks creatinine first.
  This discrepancy---which is not present for METABRIC in
  Figure~\ref{fig:fi-metabric-compact}---illustrates the architecture dependence
  of attribution in the pure-KAN backbone discussed in the main text, and
  motivates preferring the Transformer-based variants when feature attributions
  are to be reported.}
  \label{fig:fi-flchain-compact}
\end{figure}

The final pair of figures provides direct model-checking diagnostics.
Figure~\ref{fig:residuals} shows the distribution of log-time residuals at the
predicted median for uncensored METABRIC subjects; the histograms for the
Transformer-based models are centered near zero, indicating no systematic over-
or under-prediction of the median. Figure~\ref{fig:calibration-flchain} repeats
the calibration checks of the main text on FLCHAIN, plotting per-quantile
realized coverage against the nominal level (a) and interval coverage against
the nominal interval level (b); both panels lie close to the $45^\circ$ line,
so the calibration observed on METABRIC replicates in a larger cohort with much
heavier ($\approx72\%$) censoring.

\begin{figure}[ht]
  \centering
  \includegraphics[width=0.6\textwidth]{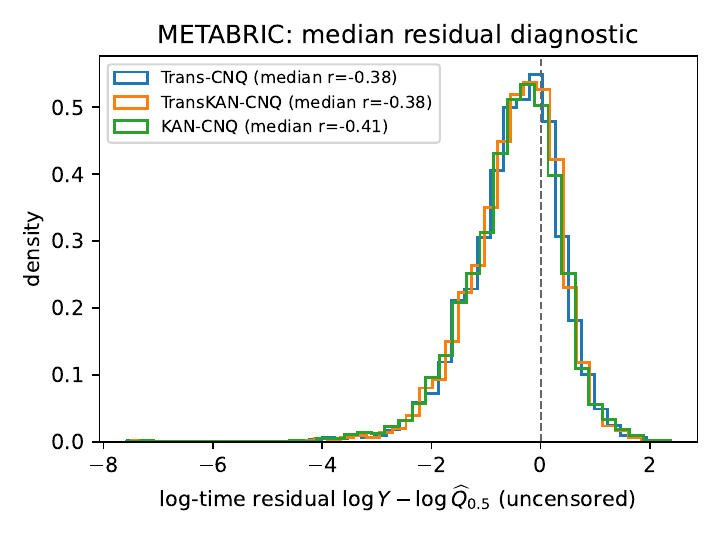}
  \caption{METABRIC log-time residual diagnostic at the predicted median.
  Restricting to uncensored subjects (for whom the event time is observed), the
  histogram shows the log-scale residual $\log Y-\log\widehat Q_{0.5}(X)$, i.e.,
  the signed distance between the observed event time and the predicted median
  survival time on the log scale, pooled over the 25 random splits; the vertical
  dashed line marks zero. If the median is well calibrated the residuals should
  be centered at zero with roughly half above and half below. The Transformer-based
  models (Trans-CNQ, TransKAN-CNQ) produce residual histograms centered near
  zero with no systematic over- or under-prediction, corroborating the
  calibration and coverage checks reported in the main text.}
  \label{fig:residuals}
\end{figure}

\begin{figure}[ht]
  \centering
  \begin{tabular}{cc}
    \includegraphics[width=0.46\textwidth]{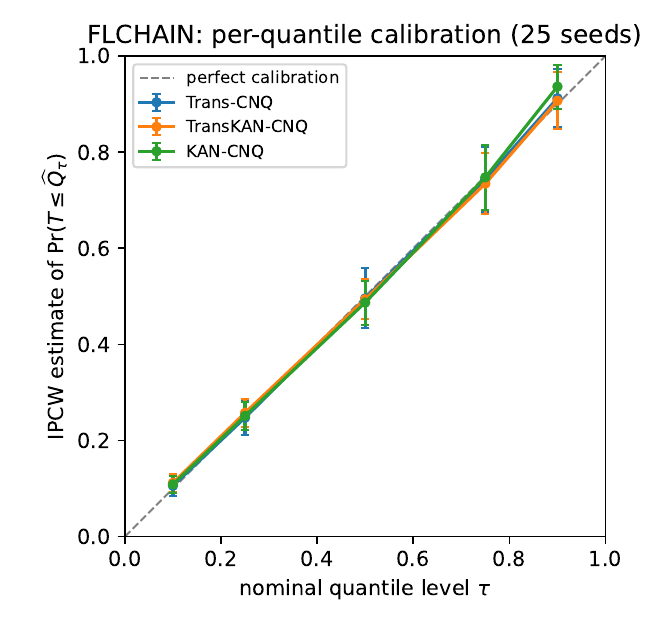} &
    \includegraphics[width=0.46\textwidth]{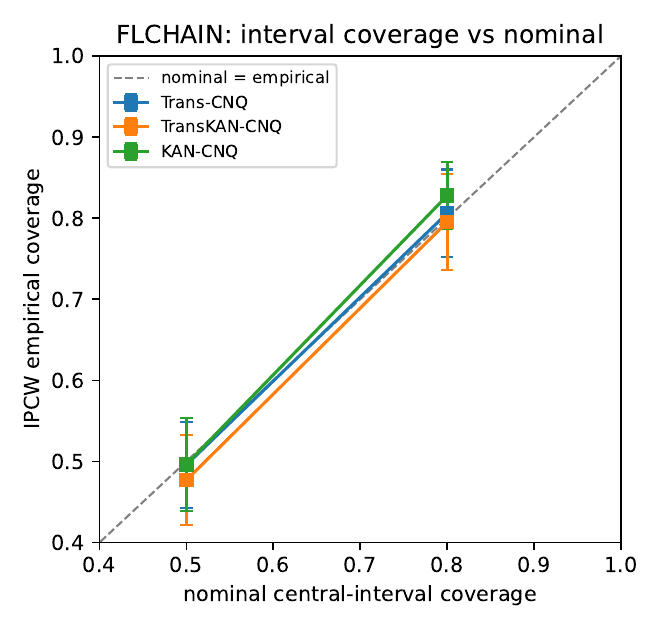} \\[-2pt]
    \small (a) per-quantile calibration &
    \small (b) interval coverage vs.\ nominal
  \end{tabular}
  \caption{FLCHAIN model-checking (Trans-CNQ, averaged over 25 seeds; IPCW
  weights truncated at the 95th percentile to limit the influence of large
  inverse-censoring weights). Panel (a), per-quantile calibration, plots the
  empirical (realized) coverage against each nominal quantile level
  $\tau\in\{0.1,0.25,0.5,0.75,0.9\}$; points on the $45^\circ$ diagonal indicate
  perfect calibration. Panel (b), interval coverage versus nominal, plots the
  empirical coverage of central prediction intervals against their nominal
  level, with the $50\%$ and $80\%$ intervals highlighted and the diagonal
  marking exact coverage. Calibration is near-nominal across all quantile levels
  and the $50\%/80\%$ interval coverage is close to target, replicating the
  METABRIC findings in a larger cohort with substantially heavier ($\approx72\%$)
  censoring and thereby demonstrating that the calibration is not specific to a
  single dataset or censoring regime.}
  \label{fig:calibration-flchain}
\end{figure}

Finally, two tables document that the real-data conclusions are not artifacts of
specific analysis choices. Table~\ref{tab:ipcw-trunc-flchain} varies the IPCW
weight-truncation threshold on the heavily censored FLCHAIN cohort (no truncation
and the 90th, 95th, and 99th percentiles): the pinball loss, the $80\%$ interval
coverage, and the Transformer-over-KAN ordering are all essentially unchanged,
so the reported results do not depend on the truncation level.
Table~\ref{tab:split-sensitivity} retrains the models on METABRIC under three
train/validation/test ratios ($65/15/20$, $50/25/25$, $80/10/10$): performance
and the model ordering are again stable, with only the smallest test set
($80/10/10$) being mildly noisier, as expected. Taken together, these
sensitivity analyses show that the advantages of the proposed models are robust
to the censoring-weight and data-splitting choices used throughout the paper.

\begin{table}[t]
  \centering
  \caption{IPCW weight-truncation sensitivity on FLCHAIN (heavy, $\sim$72\%
  censoring). Because inverse-probability-of-censoring weights can become large
  when the estimated censoring survival $\widehat G$ is small, we truncate them
  at a chosen percentile; this table probes how much that choice matters. The
  two blocks report the mean pinball loss $\overline{L}_{\mathrm{pin}}$ (log
  scale, averaged over the five common quantile levels
  $\tau\in\{0.1,0.25,0.5,0.75,0.9\}$) and the empirical $80\%$ interval coverage
  for each proposed model, as the weights are truncated at the 90th, 95th, and
  99th percentiles or left untruncated (``none''); all values are means over 25
  random splits. Both the loss and the coverage---and the Transformer-over-KAN
  model ordering---are essentially flat across truncation levels, and coverage
  stays close to the nominal $0.80$, showing that the reported results are not
  driven by a particular truncation threshold.}
  \label{tab:ipcw-trunc-flchain}
  \small
  \begin{tabular}{l cccc c cccc}
    \toprule
    & \multicolumn{4}{c}{$\overline{L}_{\mathrm{pin}}$} & & \multicolumn{4}{c}{$80\%$ coverage} \\
    \cmidrule(lr){2-5}\cmidrule(lr){7-10}
    Model & none & 90th & 95th & 99th & & none & 90th & 95th & 99th \\
    \midrule
    TransKAN-CNQ & 0.279 & 0.284 & 0.282 & 0.280 & & 0.795 & 0.817 & 0.814 & 0.806 \\
    Trans-CNQ & 0.282 & 0.286 & 0.285 & 0.283 & & 0.806 & 0.825 & 0.822 & 0.814 \\
    KAN-CNQ & 0.280 & 0.284 & 0.283 & 0.281 & & 0.828 & 0.841 & 0.840 & 0.835 \\
    \bottomrule
  \end{tabular}
\end{table}

\begin{table}[t]
  \centering
  \caption{IPCW weight-truncation sensitivity on METABRIC (moderate, $\sim$40\% censoring), reported in the same format as Table~\ref{tab:ipcw-trunc-flchain}. Weight concentration does not track the censoring rate: across the six cohorts the effective sample size $(\sum_i w_i)^2/\sum_i w_i^2$, expressed as a fraction of the observed events and computed from untruncated weights, is $98\%$ on GBSG, $98\%$ on GBSG-500, $84\%$ on SUPPORT, $84\%$ on FLCHAIN, $75\%$ on NKI70 and only $35\%$ on METABRIC, so METABRIC is where an IPCW-weighted summary rests on the fewest effective observations. Pinball loss is essentially unaffected by truncation, but $80\%$ coverage rises from $0.72$--$0.77$ to $0.80$--$0.82$; for reference, the unweighted coverage among observed events is $0.78$. This suggests the mild undercoverage reported for METABRIC in the main text is driven largely by a few extreme weights rather than by the interval widths themselves. All headline tables use untruncated weights.}
  \label{tab:ipcw-trunc-metabric}
  \small
  \begin{tabular}{l cccc c cccc}
    \toprule
    & \multicolumn{4}{c}{$\overline{L}_{\mathrm{pin}}$} & & \multicolumn{4}{c}{$80\%$ coverage} \\
    \cmidrule(lr){2-5}\cmidrule(lr){7-10}
    Model & none & 90th & 95th & 99th & & none & 90th & 95th & 99th \\
    \midrule
    TransKAN-CNQ & 0.215 & 0.222 & 0.221 & 0.218 & & 0.784 & 0.822 & 0.820 & 0.803 \\
    Trans-CNQ & 0.224 & 0.225 & 0.223 & 0.225 & & 0.720 & 0.797 & 0.795 & 0.763 \\
    KAN-CNQ & 0.227 & 0.231 & 0.230 & 0.230 & & 0.765 & 0.810 & 0.808 & 0.792 \\
    \bottomrule
  \end{tabular}
\end{table}

\begin{table}[t]
  \centering
  \caption{Split-ratio sensitivity on METABRIC, assessing whether the results
  depend on how the data are partitioned. Each proposed model is retrained from
  scratch under three train/validation/test ratios---$65/15/20$, $50/25/25$, and
  $80/10/10$ (with all other training choices held fixed)---on partitions that are
  regenerated for each ratio, and evaluated on its own held-out test set at the five common quantile levels
  $\tau\in\{0.1,0.25,0.5,0.75,0.9\}$; all entries are means over 25 random
  splits.  Because the partitions are drawn afresh, the $65/15/20$ column is not
  the main-text configuration re-scored: it uses different training and test
  sets, so it is expected to differ from the headline table in the
  main text rather than reproduce it. The two blocks report the IPCW pinball loss $\overline{L}_{\mathrm{pin}}$
  (log scale) and the empirical coverage of the nominal $80\%$ interval. Both the
  performance level and the Transformer-over-KAN ordering are stable across the
  three ratios; only the configuration with the smallest test set ($80/10/10$)
  is mildly noisier, as expected from the reduced evaluation sample.}
  \label{tab:split-sensitivity}
  \small
  \begin{tabular}{l ccc c ccc}
    \toprule
    & \multicolumn{3}{c}{$\overline{L}_{\mathrm{pin}}$} & & \multicolumn{3}{c}{$80\%$ coverage} \\
    \cmidrule(lr){2-4}\cmidrule(lr){6-8}
    Model & 65/15/20 & 50/25/25 & 80/10/10 & & 65/15/20 & 50/25/25 & 80/10/10 \\
    \midrule
    TransKAN-CNQ & 0.228 & 0.226 & 0.242 & & 0.736 & 0.738 & 0.670 \\
    Trans-CNQ & 0.229 & 0.224 & 0.237 & & 0.720 & 0.739 & 0.691 \\
    KAN-CNQ & 0.233 & 0.230 & 0.247 & & 0.729 & 0.731 & 0.690 \\\bottomrule
  \end{tabular}
\end{table}

\clearpage
\bibliographystyle{imsart-nameyear}
\bibliography{biomsample_bib}

\end{document}